\newtheorem{theorem}{Theorem}[section]
\newtheorem{example}{Example}
\newtheorem{condition}{Condition}
\newtheorem{corollary}{Corollary}
\newtheorem{lemma}[theorem]{Lemma}
\newtheorem{remark}{Remark}
\newtheorem{proposition}[theorem]{Proposition}
\newtheorem{assumption}{Assumption}
\newtheorem{definition}{Definition}
\newtheorem{question}{Open Question}
\newenvironment{proof}{\paragraph{\it Proof.}}{\hfill$\square$}
\newcommand{\mC}{\mathcal{C}}	% Confidence
\newcommand{\mS}{\mathcal{S}}	% States
\newcommand{\mA}{\mathcal{A}}	% Actions
\newcommand{\mO}{\mathcal{O}}	% Observations
\newcommand{\mH}{\mathcal{H}}	% Histories 
\newcommand{\mT}{\mathcal{T}}	% Tests
\newcommand{\mM}{\mathcal{M}}	% Model class
\newcommand{\tmM}{\widetilde{\mathcal{M}}}	% Model class
\newcommand{\mX}{\mathcal{X}}	% Domain of Random Variable 
\newcommand{\mF}{\mathcal{F}}	% Notation for function class 
\newcommand{\mG}{\mathcal{G}}	% Generalized linear function class
\newcommand{\mR}{\mathbb{R}}	% real space   
\newcommand{\mD}{\mathcal{D}}	% over Domain
\newcommand{\tPP}{\tilde{\mathds{P}}}    % probability 
\newcommand{\PP}{\mathds{P}}    % probability 
\newcommand{\Eps}{\mathcal{E}}
\newcommand{\Exs}{\mathbb{E}}
\newif\ifdraft
\title{\bf{\LARGE{RL for Latent MDPs: Regret Guarantees and a Lower Bound}}}
\author[1]{Jeongyeol Kwon}
\author[2]{Yonathan Efroni} 
\author[1]{Constantine Caramanis}
\author[3]{Shie Mannor}
\affil[1]{Department of Electrical and Computer Engineering, University of Texas at Austin}
\affil[2]{Microsoft Research, New York}
\affil[3]{Department of Electrical Engineering, Technion}
\begin{document}
\maketitle

\begin{abstract}
    In this work, we consider the regret minimization problem for reinforcement learning in latent Markov Decision Processes (LMDP). In an LMDP, an MDP is randomly drawn from a set of $M$ possible MDPs at the beginning of the interaction, but the identity of the chosen MDP is not revealed to the agent. We first show that a general instance of LMDPs requires at least $\Omega((SA)^M)$ episodes to even approximate the optimal policy. Then, we consider sufficient assumptions under which learning good policies requires polynomial number of episodes. We show that the key link is a notion of separation between the MDP system dynamics. With sufficient separation, we provide an efficient algorithm with local guarantee, {\it i.e.,} providing a sublinear regret guarantee when we are given a good initialization. Finally, if we are given standard statistical sufficiency assumptions common in the Predictive State Representation (PSR) literature (e.g., \cite{boots2011online}) and a reachability assumption, we show that the need for initialization can be removed. 
\end{abstract}

\section{Introduction}
Reinforcement Learning (RL) \cite{sutton2018reinforcement} is a central problem in artificial intelligence which tackles the problem of sequential decision making in an unknown dynamic environment. The agent interacts with the environment by receiving the feedback on its actions in the form of a state-dependent reward and new observation. The goal of the agent is to find a policy that maximizes the long-term reward from the interaction. 

Partially observable Markov decision processes (POMDPs) \cite{smallwood1973optimal} give a general framework to describe partially observable sequential decision problems. In POMDPs, the underlying dynamics satisfy the Markovian property, but the observations give only partial information on the identity of the underlying states. With the generality of this framework comes a high computational and statistical price to pay: POMDPs are hard, primarily because optimal policies depend on the entire history of the process. But for many important problems, this full generality can be overkill, and in particular, does not have a way to leverage special structure. We are interested in settings where the hidden or latent (unobserved) variables have slow dynamics or are even static in each episode. This model is important for diverse applications, from serving a user in a dynamic web application \cite{hallak2015contextual}, to medical decision making \cite{steimle2018multi}, to transfer learning in different RL tasks \cite{brunskill2013sample}. 
%One can also consider these problems as extensions to the dynamic setting of mixture problems
Yet, as we explain below, even this area remains little understood, and challenges abound.

Thus, in this work, we consider reinforcement learning (RL) for a special type of POMDP which we call a latent Markov decision process (LMDPs). LMDPs consist of some (perhaps large) number $M$ of MDPs with joint state space $\mS$ and actions $\mA$. In episodic LMDPs with finite time-horizon $H$, the static latent (hidden) variable that selects one of $M$ MDPs is randomly chosen at the beginning of each episode, yet is not revealed to the agent. The agent then interacts with the chosen MDP throughout the episode (see Definition \ref{definition:lmdp} for the formal description). 
% We consider the LMDP framework in the context of episodic RL when the length of time-horizon is not long enough to sufficiently explore the interacting MDP in a single episode ({\it i.e.,} $H \ll SA$). YE{Important comment but not related to the introduction}

The LMDP framework has previously been introduced under many different names, {\it e.g.,} hidden-model MDP \cite{chades2012momdps}, Multitask RL \cite{brunskill2013sample}, Contextual MDP \cite{hallak2015contextual}, Multi-modal Markov decision process \cite{steimle2018multi} and Concurrent MDP \cite{buchholz2019computation}. Learning in LMDPs is a challenging problem due to the unobservability of latent contexts. For instance, the exact planning problem is P-SPACE hard \cite{steimle2018multi}, inheriting the hardness of planning from the general POMDP framework. Nevertheless, the lack of dynamics of the latent variables, offers some hope. As an example, if the number of contexts $M$ is bounded, then the planning problem can be at least approximately solved ({\it e.g.,} by point-based value iteration (PBVI) \cite{pineau2006anytime}, or mixed integer programming (MIP) \cite{steimle2018multi}). %From an RL perspective, the key difference of LMDP from general POMDP is that the latent variable, the context of MDP, is static in a single episode. This is in contrast to general POMDPs where hidden states may dynamically change at all times. 
% Intuitively, a domain expert may observe an entire episode and reveal a true context, as long as all MDPs are sufficiently different and each episode is sufficiently long. 

The most closely related work studying LMDPs is in the context of multitask RL~\cite{taylor2009transfer, brunskill2013sample, liu2016pac, hallak2015contextual}. In this line of work, a common approach is to cluster trajectories according to different contexts, an approach that guided us in designing the algorithms in Section~\ref{subsection:scenario3}. However, previous work requires very long time-horizon $H \gg SA$ in order to guarantee that every state-action pair can be visited multiple times in a single episode. In contrast, we consider a significantly shorter time-horizon that scales poly-logarithmic with the number of states, {\it i.e.,} $H = poly \log (MSA)$. This short time-horizon results in a significant difference in learning strategy even when we get a feedback on the true context at the end of episode. We refer the readers to Section \ref{sec:related_work} for additional discussion on related work.

\paragraph{Main Results.} To the best of our knowledge, none of the previous literature has obtained sample complexity guarantees or studied regret bounds in the LMDP setting. This paper addresses precisely this problem. We ask the following: 
\begin{center}
    \emph{Is there a sample efficient RL algorithm for LMDPs, with sublinear regret?}    
\end{center}
The answer turns out to be not so simple. Our results comprise a first impossibility result, followed by positive algorithmic results under additional assumptions. Specifically:
\begin{itemize}
    \item First, we find that for a general LMDP, polynomial sample complexity cannot be attained without further assumptions. That is, to find an approximately optimal policy we need at least $\Omega\left( (SA)^M \right)$ samples, {\it i.e.,} at least exponential in the number of contexts $M$ (Section \ref{subsec:lower_bound}). This lower bound even applies to instances with deterministic MDPs. 

%\YE{maybe we should state it with more excitement, that's the title of the paper more or less, and the next sections are extensions of this idea in some sense. "although the exponential lower bound, one sufficient condition that results in polynomial sample compelxity is knowledge of the context in hindsight."} 
\item We find that there are several natural assumptions under which optimal policies can be learned with polynomial sample complexity. Similarly to mixture problems without dynamics, the key link is a notion of separation between the MDPs. With sufficient separation, we show that there is a planning-oracle efficient RL algorithm with polynomial sample complexity. A critical development is adapting the principle of optimism as in UCB, but to the partially observed setting where value-iteration cannot be directly applied, and thus neither can the UCRL algorithm for MDPs. %(Section \ref{subsec:true_context}). %we provide an efficient algorithm with local guarantee, {\it i.e.,} providing a sublinear regret guarantee when we are given a good initialization. 
\item Finally, under additional statistical sufficiency assumptions that are common in the Predictive State Representation (PSR) literature (e.g., \cite{boots2011online}) and a reachability assumption, we show that the need for initialization can be entirely removed. 

%The first such condition is having access to the true underlying context of the LMDP {\it after} each episode. In this scenario, we show that there is a planning-oracle efficient RL algorithm with polynomial sample complexity (Section \ref{subsec:true_context}). Motivated by this observation, we aim to further relax it, and study additional settings in which polynomial sample complexity is possible (Section \ref{subsection:scenario2},~\ref{subsection:scenario3}). Specifically, we observe that when the underlying hidden MDPs are {\it well-separated} and the model estimate is sufficiently good, we can estimate the hidden context at the end of each episode. %\YE{What about the fact we need exponentially many samples to get good initialization?}. 
\item Finally, we perform an empirical evaluation of the suggested algorithms on toy problem (Section \ref{section:experiments}), while focusing on the importance of the made assumptions. 
\end{itemize}

%\paragraph{Previous Study on LMDPs} 

\subsection{Related Work}
\label{sec:related_work}
Due to the vast volume of literature on the RL, we only review the most closely related research to our problem.

\paragraph{Previous Study on LMDPs} As mentioned earlier, LMDPs have been previously introduced with different names. In the work of~\cite{chades2012momdps, steimle2018multi, buchholz2019computation}, the authors study the planning problem in LMDPs, when the true parameters of the model is given. The authors in~\cite{steimle2018multi} have shown that, as for POMDPs~\cite{papadimitriou1987complexity}, it is P-SPACE hard to find an exact optimal policy, and NP-hard to find an optimal memoryless policy of an LMDP. On the positive side, several heuristics are proposed for practical uses of finding the optimal memoryless policy \cite{steimle2018multi, buchholz2019computation}. 

LMDP has been studied in the context of multitask RL~\cite{taylor2009transfer, brunskill2013sample, liu2016pac, hallak2015contextual}. In this line of work, a common approach is to cluster trajectories according to different contexts under some separation assumption, an approach that guided us in designing the algorithms in Section~\ref{subsection:scenario3}. However, in this line of work, the authors assume very long time-horizon such that they can visit every state-action pair multiple times in a single episode. In order to satisfy such assumption, the time-horizon must be at least $H \ge \Omega(SA)$. In contrast, we consider a significantly shorter time-horizon that scales poly-logarithmic with the number of states, {\it i.e.,} $H = poly \log (MSA)$. This short time-horizon results in a significant difference in learning strategy even when we get a feedback on the true context at the end of episode. %Moreover, they only consider the sample-complexity to obtain $\epsilon$-optimal policy, which can at best guarantee $O(N^{2/3})$ regret. In contrast, we eventually achieve a $O(\sqrt{N})$-regret by refining a model and policy efficiently.

\paragraph{Approximate Planning in POMDPs} The study of learning in partially observable domains has a long history. Unlike in MDPs, finding the optimal policy for a POMDP is P-SPACE hard even with known parameters \cite{smallwood1973optimal}. Even finding a memoryless policy is known to be NP-hard \cite{littman1994memoryless}. Due to the computational intractability of exact planning, various approximate algorithms and heuristics within a policy class of interest \cite{jaakkola1995reinforcement, ng2000pegasus, smith2004heuristic, spaan2005perseus, pineau2006anytime, li2011finding}. Since LMDP is a special case of POMDP, any of these methods can be applied to solve LMDP. We will assume that the planning-oracle achieves some approximation guarantees with respect to maximum long-term rewards obtained by the optimal policy. We show that when the context is identifiable in hindsight, then we can quickly perform as good as the policy obtained by the planning-oracle with true parameters.

\paragraph{Spectral Methods for POMDPs} Previous studies of partially-observed decision problems assumed the number of observations is larger than the number of hidden states, as well as, that a set of single observations forms {\it sufficient statistics} to learn the hidden structure \cite{azizzadenesheli2016reinforcement, guo2016pac}. With such assumptions, one can apply tensor-decomposition methods by constructing multi-view models \cite{anandkumar2012method, anandkumar2014tensor} and recovering POMDP parameters under uniformly-ergodic (or stationary) assumption on the environment \cite{azizzadenesheli2016reinforcement, guo2016pac}. Our work is differentiated from the mentioned works in two aspects. First, for LMDPs, the observation space is {\it smaller} than the hidden space. Therefore constructing a multi-view model with a set of single observations is not enough to learn hidden structures of the system. Second, we are not aware of any natural conditions for tensor-decomposition methods to be applicable for learning LMDPs. Therefore, we do not pursue the application of tensor-methods in this work. %\YE{didnt understand what the second aspect is. + I dont agree with the last sentence. It is correct only when somebody gives the identity of the hidden state space at the end of the episode. Not sure it is easily explained at this point of the paper.} \jycomment{Okay. I guess here we should mention (1) observation space < internal space: length 1 observation is not enough and (2) therefore no clear way of applying tensor methods. Maybe not mention regret thing.}
% First, the number of observation, which is $S$ times the number of possible rewards (which we assume $2$ in this work), can generally be smaller than the number of internal states which is $MS$. Therefore, we cannot rely on tensor-decomposition methods to recover parameters of LMDP without further restricting assumptions on model parameters. Second, we explicitly address how to implement the optimism principle to achieve $O(\sqrt{N})$ regret. While we focus on the LMDP in this work, our result on finding the optimistic model can be easily extended to general POMDPs if the confidence interval is given for true parameters of POMDPs.

\paragraph{Predictive State Representation} Since the introduction of PSR \cite{littman2002predictive, singh2004predictive}, it has become one major alternative to POMDPs for modeling partially-observed environments. The philosophy of PSR is to express the internal state only with a set of observable experiments, or {\it tests}. Various techniques have been developed for learning PSR with statistical consistency and global optimality guarantees \cite{hsu2012spectral, boots2011closing, hefny2015supervised}. We use the PSR framework to get an initial estimate of the system. However, the sample complexity of learning PSR is quite high \cite{jiang2016improving} (see also our finite-sample analysis of spectral learning technique in Section \ref{appendix:spectral_learning_psr}). Therefore, we only learn PSR up to some desired accuracy and convert it to an LMDP parameter by clustering of trajectories (Section \ref{Appendix:clustering_with_psrs}) to warm-start the optimal policy learning. %\YE{corrected the above: a LMDP -> an LMDP.} \jycomment{Is this because L here sounds ``el''?} \YE{Yup, exactly. I think that what matters is how you pronounce it.}

\paragraph{Other Related Work} In a relatively well-studied setting of contextual decision processes (CDPs), the context is always given as a side information {\it at the beginning} of the episode \cite{jiang2017contextual, modi2018markov}. This makes the decision problem a fully observed decision problem. LMDP is different since the context is hidden. The main challenge comes from the partial observability which results in significant differences in terms of analysis from CDPs. Another line of work on decision making with latent contexts considers the problem of latent bandits \cite{maillard2014latent, gentile2014online, gentile2017context}. It would be interesting to understand whether any previous results on latent bandits can be extended to latent MDPs. Another line of research on theoretical studies with partially observability considers the environment with rich observations \cite{krishnamurthy2016pac, dann2018oracle, du2019provably}. Rich observation setting assumes that any observation happens from only one internal state which removes the necessity to consider histories, and thus the nature of the problem is different from our setting. Recent work considers a sample-efficient algorithm for undercomplete POMDPs \cite{jin2020sample}, {\it i.e.,} when the observation space is larger than the hidden state space and a set of single observations is statistically sufficient to learn hidden structures. In contrast, our problem is a special case of POMDPs where the observation space is smaller than the hidden state space.

\section{Preliminaries}
In Section \ref{subsec:setup_latentmdp} we define LMDPs precisely, as well as the notion of regret relevant to this paper. Then in Section \ref{subsec:review_psr} we discuss predictive state representations, which we need for our results in Section \ref{subsection:scenario3}.

\subsection{Problem Setup: Latent MDPs}
\label{subsec:setup_latentmdp}
We consider the following Latent Markov decision processes (LMDP) in an episodic reinforcement learning with finite-time horizon $H$. 
\begin{definition}[Latent Markov Decision Process (LMDP)]
    \label{definition:lmdp}
    Suppose the set of MDPs $\mathcal{M}$ with joint state space $\mathcal{S}$ and joint action space $\mathcal{A}$ in a finite time horizon $H$. Let $M = |\mathcal{M}|$, $S = |\mS|$ and $A = |\mA|$. Each MDP $\mM_m \in \mathcal{M}$ is a tuple $(\mathcal{S}, \mathcal{A}, T_m, R_m, \nu_m)$ where $T_m: \mathcal{S}\times\mathcal{A}\times\mathcal{S} \rightarrow [0,1]$ a transition probability measures that maps a state-action pair and a next state to a probability, $R_m: \mathcal{S}\times\mathcal{A}\times \{0,1\} \rightarrow [0, 1]$ a probability measure for rewards that maps a state-action pair and a binary reward to a probability, and $\nu_m$ is an initial state distribution. Let $w_1, ..., w_M$ be the mixing weights of LMDPs such that at the start of every episode, one MDP $\mM_m \in \mathcal{M}$ is randomly chosen with probability $w_m$. 
\end{definition}
We assume for simplicity that one of MDPs is uniformly chosen at the start of each episode, {\it i.e.,} $w_1 = ... = w_M = 1/M$. The goal of the problem is to find a (possibly non-Markovian) policy $\pi$ in a policy class $\Pi$ that maximizes the expected return:
\begin{align*}
    V_{\mM}^* := \max_{\pi \in \Pi} \sum_{m=1}^M w_m \Exs_{m}^\pi \left[ \sum_{t=1}^H r_t \right],
\end{align*}
where $\Exs_{m}^\pi [\cdot]$ is expectation taken over the $m^{th}$ MDP with a policy $\pi$. The policy $\pi: (\mS, \mA, \{0,1\})^* \times \mS \rightarrow \mA$ maps a history to an action. When parameters of the model $\mM$ is given, we use sufficient statistics of histories, a.k.a. belief states, using the following recursive formulation:
\begin{align*}
    b_1(m) &= \frac{w_m \nu_m (s_1)}{\sum_m w_m \nu_m(s_1)}, \\
    b_{t+1}(m) &= \frac{b_{t}(m) T_m (s_{t+1} | s_t,a_t) R_m(r_t| s_t, a_t)}{\sum_m b_{t}(m) T_m (s_{t+1} | s_t,a_t) R_m(r_t|s_t, a_t)}.
\end{align*}

We define the notion of regret in this work. Suppose we have a planning-oracle that guarantees the following approximation guarantee:
\begin{align}
    \label{eq:approximate_cond}
    V_{\mM}^\pi \ge \rho_1 V_{\mM}^* - \rho_2,
\end{align}
where $\pi$ is a returned policy when $\mM$ is given to the planning-oracle, and $\rho_1, \rho_2$ are multiplicative and additive approximation constants respectively such that $0 < \rho_1 \le 1, 0 \le \rho_2$. 
\begin{example}
    Point-based value-iteration (PBVI) \cite{pineau2006anytime} with discretization level $\epsilon_d$ in the belief space over MDPs returns $\epsilon_d H^2$ additive approximate policy. That is, the equation \eqref{eq:approximate_cond} is satisfied with $\rho_1 = 1$ and $\rho_2 = \epsilon_d H^2$ with the policy class $\Pi$ a set of all possible history-dependent policies. Running-time of PBVI algorithm is $O\left( \epsilon_d^{-O(M)} HSA \right)$. 
\end{example}
\begin{example} 
    The optimal memoryless policy gives at least $1/M$-approximation to the optimal history-dependent policy, i.e., $\rho_1 = 1/M$ and $\rho_2 = 0$ with the policy class $\Pi$ a set of all possible history-dependent policies. We may restrict $\Pi$ to deterministic memoryless policies, {\it i.e.,} $\Pi$ is the class of policies which are mappings from state to actions. The mixed-integer programming (MIP) formulation in \cite{steimle2018multi} may be used to find the optimal memoryless policy. In this case, we have $\rho_1 = 1, \rho_2 = 0$.  
\end{example}
We define the regret as the comparison to the best approximation guarantee that the planning-oracle can achieve:
\begin{align}
    \label{eq:regret_definition}
    Regret(K) = \sum_{k=1}^K (\rho_1 V_{\mM}^* - \rho_2) - V_{\mM}^{\pi_k},
\end{align}
where $\pi_k$ is a policy executed in the $k^{th}$ episode.

\subsection{Predictive State Representation (PSR)}
In general, a partially observable dynamical system can be viewed as a model that generates a sequence of observations from observation space $\mathcal{O}$ with (controlled) actions from action space $\mathcal{A}$. A predictive state representation (PSR) is a compact description of a dynamical system with a set of observable experiments, or {\it tests} \cite{singh2004predictive}. Specifically, let a test of length $t$ is a sequence of action-observation pairs given as $\tau = a_1^{\tau} o_1^{\tau} o_2^{\tau} ... a_t^{\tau} o_t^{\tau}$.  A {\it history} $h = a_1^{h} o_1^{h} a_2^h o_2^{h} ... a_t^h o_t^{h}$ is a sequence of action-observation pairs that has been generated prior to a given time. A {\it prediction} $\PP(\tau | h) = \PP(o_{1:t}^{\tau} | h || \bm{do} \ a_{1:t}^\tau)$ denotes the probability of seeing the test sequence from a given history, given that we intervene to take actions $a_1^{\tau} a_2^{\tau} ... a_t^\tau$. In latent MDPs, the observation space can be considered as a pair of next-states and rewards, i.e., $\mathcal{O} = \mS \times \{0,1\}$ and $o_t = (s_{t+1}, r_t)$. 

As we work with a special class of POMDPs, we customize the PSR formulation for LMDPs. The set of histories consists of a subset of histories that ends with different states, i.e., $\mH = \bigcup_s \mH_s$, where each element $h \in \mH_s$ is a short sequence of state-action-rewards of length at most $l = O (1)$ and ends with state $s$: 
\begin{align*}
    h = s_1^h a_1^h r_1^h s_2^h ... s_{l-1}^h a_{l-1}^h r_{l-1}^h s = (s,a,r)_{1:l-1}^h s.
\end{align*}
We define $\PP_m^{\pi} (\mH_s)$ a vector of probability where each coordinate is a probability of sampling each history in $\mH_s$ in $m^{th}$ MDP with a policy $\pi$. Likewise, each element in tests $\tau \in \mT$ is a short sequence of action-reward-next states of length at most $l$:
\begin{align*}
    \tau = a_1^\tau r_1^\tau s_2^\tau ... a_l^\tau r_l^\tau s_{l+1}^\tau = (a,r,s')_{1:l}^{\tau}. 
\end{align*}
We denote $\PP_m(\mT|s)$ as a vector of probability where each coordinate is a success probability of each test in $m^{th}$ MDP starting from a state $s$. That is,
\begin{align*}
    \PP_m(\mT|s)_i = \PP_m(\tau_i |s) = \PP_m(r_1^{\tau_i} s_2^{\tau_i} ... r_l^{\tau_i} s_{l+1}^{\tau_i} | s || \bm{do} \ a_1^{\tau_i} ... a_l^{\tau_i}).
\end{align*}

\subsubsection{Spectral Learning of PSRs in LMDPs}
\label{subsec:review_psr}
In spectral learning, we build a set of observable matrices that contains the (joint) probabilities of histories and tests, and then we can extract parameters from these matrices by performing singular value decomposition (SVD) and regressions \cite{boots2011closing}. In order to apply spectral learning techniques, we need the following technical conditions on {\it statistical sufficiency} of histories and tests. Specifically, the first condition is a rank degeneracy condition on sufficient tests:
\begin{condition}[Full-Rank Condition for Tests]
    \label{condition:rank_test}
    For all $s \in \mS$, for the test set $\mT$ that starts from state $s$, let $L_s = [\PP_1(\mT | s) | \PP_2(\mT | s) | ... | \PP_M(\mT | s)]$. Then $\sigma_{M} (L_s) \ge \sigma_\tau$ for all $s \in \mS$ with some $\sigma_\tau > 0$. 
\end{condition}
Another technical condition for spectral learning method is a rank non-degeneracy condition for sufficient histories:
\begin{condition}[Full-Rank Condition for Histories]
    \label{condition:rank_history}
    For all $s \in \mS$, for the history set $\mH_s$ that ends with state $s$ with a sampling policy $\pi$, let $H_s = [\PP_1^{\pi}(\mH_s) | \PP_2^{\pi}(\mH_s) | ... | \PP_M^{\pi} (\mH_s)]^\top$. Then $\sigma_{M} (L_s H_s) \ge \PP^\pi(\text{end state} = s) \cdot \sigma_h$ for all $s \in \mS$ with some $\sigma_h > 0$. 
\end{condition}
Here $\PP^\pi(\text{end state} = s)$ is a probability of sampling a history ending with $s$. Condition \ref{condition:rank_history} implies that we can sample a set of short trajectories with some given sampling policy $\pi$. Along with the rank condition for tests, pairs of histories and tests can be thought as many short snap-shots of long trajectories obtained by external experts or some exploration policy (e.g., random policy in uniformly ergodic MDPs).

Conditions \ref{condition:rank_test} and \ref{condition:rank_history} ensure that a set of tests and histories are statistically sufficient \cite{boots2011closing}. Following the notations in \cite{boots2011closing}, let $P_{\mT, \mH_s} = L_s H_s$ and $P_{\mT, (s',r) a, \mH_s} = L_{s'} D_{(s',r),a,s} H_s$ be empirical counterparts respectively, where $D_{(s',r),a,s} = diag(\PP_1(s',r|a,s), ..., \PP_M(s',r|a,s))$. Since these matrices can be estimated from observable sequences, we can construct the empirical matrices and apply the spectral learning methods. The goal of spectral learning algorithm is to output PSR parameters $(b_{1,s}, B_{(s',r), a, s}, b_{\infty,s})$ which are used to compute $\hat{\PP}^\pi(\tau | h)$, the estimated probability of any future observations (or tests $\tau$) conditioned any histories $h$ with some sampling policy $\pi$. We describe a detailed spectral learning procedure in Appendix \ref{appendix:spectral_learning_psr}.

\subsection{Notation}
We denote the underlying LMDP with true parameters as $\mM^*$. With slight abuse in notations, we denote $l_1$ distance between two probability distributions $\mD_1$ and $\mD_2$ on a random variable $X$ conditioned on event $E$ as
\begin{align*}
    \| (\PP_{X \sim \mD_1} - \PP_{X \sim \mD_2}) (X | E)\|_1 = \sum_{X \in \mX} |\PP_{X \sim \mD_1} (X | E) - \PP_{X \sim \mD_2} (X | E)|,
\end{align*}
where $\mX$ is a support of $X$. When we do not condition on any event, we omit the conditioning on $E$. When we measure a transition or reward probability at a state-action pair $(s,a)$, we use $T$ or $R$ instead of $\PP$. We refer $\PP_m$ the probability of any event measured in the $m^{th}$ context (or in $m^{th}$ MDP). In particular, $\PP_m(s',r|s,a) = T_m(s'|s,a) R_m(r|s,a)$. For instance, $l_1$ distance between transition probabilities to $s'$ at $(s,a)$ in $m_1^{th}$ and $m_2^{th}$ MDP will be denoted as
\begin{align*}
    \|(T_{m_1} - T_{m_2}) (s' | s,a)\|_1 = \sum_{s' \in \mS} |T_{m_1} (s'|s,a) - T_{m_2} (s'|s,a)|.
\end{align*}
If we use $\PP$ without any subscript, it is a probability of an event measured outside of the context, {\it i.e.}, $\PP(\cdot) = \sum_{m=1}^M w_m \PP_m(\cdot)$. If the probability of an event depends on a policy $\pi$, we add superscript $\pi$ to $\PP$. Similarly, $\Exs_{m} [\cdot]$ is expectation taken over the $m^{th}$ context and $\pi$ is added as superscript if the expectation depends on $\pi$. We use $\hat{\cdot}$ to denote any estimated quantities. $a \lesssim b$ implies $a$ is less than $b$ up to some constant and logarithmic factors. We use $\preceq$ for a coordinate-wise inequality for vectors. When the norm $\| \cdot \|$ is used without subscript, we mean $l_2$-norm for vectors and operator norm for matrices. We interchangeably use $o$, an observation, to replace a pair of next-state and immediate reward $(s', r)$ to simplify the notation. We occasionally express a length $t > 0$ history $(s_1, a_1, r_1, ..., s_{t-1}, a_{t-1}, r_{t-1}, s_t)$ compactly as $((s,a,r)_{1:t-1}, s_t)$.

% \section{Overview on Results}
% \input{overview}

\section{Main Results}
In this section, we first investigate sample complexity of the LMDP framework, and obtain a hardness result for the general case. We then consider sample- and computationally efficient algorithms under additional assumptions. %the learnability of reinforcement learning \cccomment{what does ``learnability of RL'' mean?} in LMDP framework. We then emphasize the importance of knowing the true contexts of MDPs, as we see below that there is an exponential gap between knowing and not knowing the true contexts. 

\subsection{Fundamental Limits of Learning General LMDPs}
\label{subsec:lower_bound}
We first study the fundamental limits of the problem. In particular, we are interested in whether we can learn the optimal policy after interacting with the LMDP for a number of episodes polynomial in the problem parameters. We prove a worst-case lower bound, exhibiting an instance of LMDP that requires at least $\Omega \left((SA)^M \right)$ episodes:
\begin{theorem}[Lower Bound]
\label{theorem:lower_bound}
    % There exists a set of deterministic MDPs with uniformly randomly chosen latent contexts such that in order to find $\epsilon$-optimal policy, $\pi_\epsilon$ such that $V_{\mathcal{M}}^{\pi_\epsilon} \geq V_{\mathcal{M}}^*- \epsilon$ , we need at least $\Omega\left( (SA/M)^M / \epsilon^2 \right)$ episodes. 
    There exists an LMDP such that for finding an $\epsilon$-optimal policy $\pi_\epsilon$ for which ${V_{\mathcal{M}}^{\pi_\epsilon} \geq V_{\mathcal{M}}^*- \epsilon}$, we need at least $\Omega\left( (SA/M)^M / \epsilon^2 \right)$ episodes. 
\end{theorem}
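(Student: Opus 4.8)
The plan is to construct an explicit family of LMDPs that encodes a hard "needle-in-a-haystack" problem, where the agent must essentially identify a hidden combinatorial object of size $(SA)^M$ before it can act well. The core idea: make the $M$ constituent MDPs deterministic, so that the only source of uncertainty is *which* MDP was drawn, and arrange the reward structure so that the optimal policy must "reveal" the latent context through a carefully chosen sequence of probing actions, and only a policy tuned to the correct context-to-trajectory assignment collects the reward. Because the context is resampled each episode and never revealed, the agent cannot disentangle the $M$ deterministic dynamics from a small number of episodes.

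**First I would** set up the instance. Using deterministic transitions gives us clean leverage: each of the $M$ MDPs, under a fixed policy, traces out a single deterministic trajectory, so a policy's behavior across the mixture is entirely determined by the $M$ trajectories it induces. I would design the state-action space so that there are roughly $(SA/M)^M$ distinguishable "configurations" — intuitively, each MDP independently hides a secret state-action pair (a target), and the reward is granted only when the policy simultaneously matches all $M$ targets, one per context. This product structure is what produces the $(SA/M)^M$ blow-up: learning the target in one context gives essentially no information about the others, so the agent faces $M$ roughly-independent search problems whose answers must all be found, yielding a multiplicative rather than additive cost. Concretely, I expect to partition $S$ and $A$ among the $M$ contexts (hence the $SA/M$ per-context branching) and let the latent draw select which "branch" is live.

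**Next I would** reduce the learning problem to hypothesis testing and invoke an information-theoretic lower bound. The standard route is to define a null instance (no reward anywhere, or uniformly placed reward) and a large family of alternative instances indexed by the hidden configuration, then show that any two alternatives are statistically close in the observations generated by any policy over $K$ episodes, while being far apart in value — i.e., an $\epsilon$-optimal policy for one is $\epsilon$-suboptimal for most others. I would bound the KL divergence (or total variation) between the observation distributions under different configurations per episode; because the MDPs are deterministic and the context is uniform over $M$, a single episode gives information about at most one context's target, and only if the policy happens to probe the right location. A Fano-type or Le Cam two-point argument, applied across the $(SA/M)^M$ configurations, then forces $K \gtrsim (SA/M)^M / \epsilon^2$. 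The $\epsilon^{-2}$ factor enters by placing the reward signal with a small bias $\epsilon$ (e.g., a Bernoulli reward of mean $1/2 \pm \epsilon$) so that distinguishing configurations requires $\Omega(1/\epsilon^2)$ samples *per* configuration, standard in bandit lower bounds.

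**The hard part will be** verifying that the per-episode information is genuinely bottlenecked to a single context's worth of a single probe, so that the divergences multiply correctly and the combinatorial $(SA/M)^M$ factor survives; a naive construction can leak information about multiple targets at once (e.g., if a policy can branch adaptively within one episode using the short horizon), collapsing the exponent. Controlling this requires exploiting the short horizon $H = \mathrm{poly}\log(MSA)$ against the agent: within a single episode the policy simply does not have enough steps to test more than a negligible fraction of the $(SA/M)^M$ configurations, and the deterministic dynamics mean a wrong probe yields a completely uninformative (reward-free) observation. I would make this rigorous by a chain-rule decomposition of the KL divergence over episodes together with a counting argument showing that any fixed sequence of $K$ episodes ``touches'' only $O(K \cdot H)$ of the exponentially many configurations, so unless $K$ is exponentially large, most alternatives remain indistinguishable from the null and the $\epsilon$-optimal policy cannot be identified.
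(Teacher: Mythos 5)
Your proposal assembles the right ingredients---deterministic constituent MDPs, a needle-in-a-haystack reduction to identifying one object among $(SA/M)^M$ candidates, Bernoulli $1/2\pm\epsilon$ rewards to inject the $1/\epsilon^2$ factor, and a tree-of-states gadget to amplify the effective branching to $SA/M$---and these all match the paper's strategy. But the construction at the heart of your argument has a genuine gap. You propose that ``each MDP independently hides a secret state-action pair'' and that the agent faces ``$M$ roughly-independent search problems whose answers must all be found, yielding a multiplicative rather than additive cost.'' That inference is backwards: $M$ independent search problems that must each be solved cost the \emph{sum} of their individual complexities, not the product. Worse, since each episode interacts with only one context, a per-context target construction leaks per-context feedback episode by episode, and the lower bound degrades to something polynomial. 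The exponential count cannot come from a product of independent subproblems; it must come from a single hidden object---in the paper, one correct action \emph{sequence} $a^*=(a_1,\dots,a_M)$ of length $M$---for which there is no partial feedback whatsoever.

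The missing mechanism is the indistinguishability of \emph{observations}, not just rewards. You claim ``a wrong probe yields a completely uninformative (reward-free) observation,'' but the agent observes states, and in deterministic MDPs the time at which a trajectory falls into a sink state is itself informative about the context unless the dynamics are engineered to prevent it. The paper's construction interleaves the $M$ MDPs so that at step $t$ exactly one MDP survives only under $a_t$, exactly one is killed only by $a_t$, and the rest survive any action; this makes the distribution over full state-reward trajectories (marginalized over the uniform latent draw) \emph{exactly identical} for every one of the $A^M-1$ wrong action sequences---equal to a fixed surrogate Markov chain. That exact equality is what makes the per-episode KL divergence between any two wrong configurations zero and lets the Fano/best-arm-identification argument over $A^M$ arms go through with the full exponent. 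Your chain-rule-plus-counting sketch presupposes this property but never constructs it, and without it the argument does not yield $\Omega\left((SA/M)^M/\epsilon^2\right)$.
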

The hard instance consists of MDPs with deterministic transitions and possibly stochastic rewards, indicating an exponential lower bound in the number of contexts even for the easiest types of LMDPs. The example is constructed such that, in the absence of knowing true contexts, all wrong action sequences of length $M$ cannot provide any information with zero reward, whereas the only correct action sequence gets a total reward 1 under one specific context. Nevertheless, we note here that Theorem \ref{theorem:lower_bound} does {\it not} suggest an exponential lower bound in $H$ when the number of contexts $M$ is fixed. A construction of the lower bound example is given in Appendix \ref{appendix:lower_bound}.

Theorem \ref{theorem:lower_bound} prevents a design of efficient algorithms with growing number of contexts. To the best of our knowledge this is the first lower bound of its kind for LMDPs. In the following subsections, we investigate natural assumptions which help us to develop an efficient algorithm when only polynomial number of episodes are available.

\subsection{The Critical First Step: Contexts in Hindsight}
\label{subsec:true_context}

\begin{algorithm}[t]
    \caption{Latent Upper Confidence Reinforcement Learning (L-UCRL)}
    \label{algorithm:lucrl}
    Initialize visit counts $N_m(s,a), N(m)$ and empirical estimates of an LMDP $(\hat{T}_m, \hat{R}_m, \hat{\nu}_m)$ properly.
    \begin{algorithmic}[1]
        \FOR{each $k^{th}$ episode}
            \STATE Construct optimistic model $\tmM$ with empirical estimates using Lemma \ref{lemma:optimistic_model}
            \STATE Get (approximately) optimal policy $\pi_k$ for $\tmM$
            \STATE Play policy $\pi_k$ and get the trajectory $\tau = (s_1, a_1, r_1, ..., s_H, a_H, r_H)$
            \STATE Get an estimated belief over contexts $\hat{b}$ at the end of episode with either Algorithm \ref{algorithm:access_true_context} (when contexts are given), or Algorithm \ref{algorithm:estimate_belief} (when we infer contexts)
            \STATE Update empirical parameters:
            \FOR{$m=1,...,M$}
                \FOR{$t=1,...,H$}
                    \STATE $N_m(s_{t+1} | a_t, s_t) \leftarrow N_m(s_{t+1} | a_t, s_t) + \hat{b}(m)$
                    \STATE $N_m(r_t | s_t, a_t) \leftarrow N_m(r_t | s_t, a_t) + \hat{b}(m)$
                \ENDFOR
                \STATE $N_m(s_1) \leftarrow N_m(s_1) + \hat{b}(m)$
                \FOR{all $(s', r, a, s) \in \mS \times \{0,1\} \times \mA \times \mS$}
                    \STATE Let $N_m(s, a) = \max \left(1, \sum_{x \in \mS} N_m (x | s, a) \right)$.
                    \STATE $\widehat{T}_m(s' | s, a) = \frac{N_m (s' |a, s)} {N_m(s, a)} $, $\ \widehat{R}_m(r | s, a) = \frac{N_m (r |s,a)} {N_m(s, a)} $
                    \STATE $\widehat{\nu}_m(s) = \frac{N_m (s)} {\max \left(1, \sum_{x \in \mS} N_m (x) \right)}$
                \ENDFOR
            \ENDFOR
        \ENDFOR
    \end{algorithmic}
\end{algorithm}

Suppose the true context of the underlying MDP is revealed to the agent at the end of each episode.  %We will design a planning-oracle efficient algorithm in this scenario based on the optimism in the face of uncertainty (OFU) principle.  We see that the regret of the resulting algorithm suffers only $\tilde{O}(HS\sqrt{MAN})$-regret, which is close to the best possible regret $\Omega(\sqrt{HSMAN})$ up to $\tilde{O}(\sqrt{HS})$ factors. 
We do not require any assumptions on the environments in this scenario. Note that this scenario is different from fully observable settings ({\it i.e.,} knowing the true context at the beginning of an episode). In the latter scenario, we would simply have $M$-decoupled RL problems in standard MDPs. While this can be considered as a ``warm-up'' for the sequel, it is motivated by real-world examples. Moreover, the key technical insight here will prove important for the sequel as well. 

%Assume that at the end of each episode the identity of the underlying MDP is revealed. 
Knowing contexts in hindsight allows us to construct a confidence set for parameters:
\begin{align}
    \label{eq:construct_confidence_set}
    \mC &= \{\mM \ | \ \|(T_m - \hat{T}_m) (s' | s,a)\|_1 \le \sqrt{c_T / N_m(s,a)}, \nonumber \\
    &\ \|(R_m - \hat{R}_m) (r|s,a)\|_1 \le \sqrt{c_R / N_m(s,a)}, \nonumber \\
    &\ \|(\nu_m - \hat{\nu}_m) (s)\|_1 \le \sqrt{c_\nu / N(m)}, \ \ \forall m,s,a\},
\end{align}
where $N_m(s,a)$ is the number of times each state-action pair $(s,a)$ in $m^{th}$ MDP is visited, and $N(m)$ is the number of episodes we interact with the $m^{th}$ MDP. With properly set constants (depending on problem parameters) $c_T, c_R, c_\nu > 0$ for the confidence intervals, $\mM^* \in \mC$ with high probability for all $K$ episodes. 

With the construction of confidence sets, it is then natural to try to design an optimistic RL algorithm, as in UCRL~\cite{jaksch2010near}. An obvious optimistic value in light of~\eqref{eq:construct_confidence_set} is  $\max_{\pi, \mathcal{M}\in \mathcal{C}} V_\mM^\pi.$ However, solving this optimization problem is more general than solving an LMDP. In fully observable settings, we could replace the complex optimization problem by adding a proper exploration bonus to obtain an optimistic value function~\cite{azar2017minimax}. 

In partially observable environments, the notion of value iteration is only defined in terms of belief-states and not the observed states. For this reason, existing techniques solely based on the value-iteration cannot be directly applied for LMDPs. Yet, we find that proper analysis of the Bellman update rule over the belief state reveals that an empirical LMDP with properly adjusted {\it hidden} rewards is optimistic:

% In fully observable settings, one common technique to avoid this nested optimization is to find an {\it optimistic} value function by adding exploration bonus \YE{to the reward function, instead of optimizing over the optimistic model while using the empirical transition model as the transition function} \YE{delete:(in all value iterations) } \cite{azar2017minimax}. 

\begin{proposition}
    \label{lemma:optimistic_model}
    We construct an optimistic LMDP $\tmM$ whose parameters are given such that:
    \begin{align*}
        &\widetilde{T}_m(s'|s,a) = \hat{T}_m (s'| s,a), \ \widetilde{R}_m^{obs} (r| s,a) = \hat{R}_m(r| s,a), \\
        &\widetilde{R}_m^{hid} (s,a) = H \min \left(1, \sqrt{5 (c_R + c_T)/N_m(s,a)} \right), \\
        &\widetilde{\nu}_m(s) = \hat{\nu}_m(s), \ \widetilde{R}_{init}^{hid} (m) = \min \left(1, \sqrt{c_\nu / N(m)} \right),
    \end{align*}
    where $\widetilde{R}_{init}^{hid} (m)$ is an initial hidden reward given when starting an episode with a context $m$, and $\widetilde{R}_m^{obs} (s,a)$ is a probability measure of an observable immediate reward $r$ whereas $\widetilde{R}_m^{hid} (s,a)$ is a hidden immediate reward (that is not visible to the agent) for a state-action pair $(s,a)$ in a context $m$. Then for any policy $\pi$, the expected long-term reward is optimistic, {\it i.e.,} $V_{\tmM}^\pi \ge V_{\mM^*}^\pi$.
\end{proposition}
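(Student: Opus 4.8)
The plan is to reduce the statement to a per-context comparison and then establish optimism by a backward induction over the horizon. The key enabling observation is that $\pi$ is a fixed (history-dependent) map from histories to actions, so by the very definition of the objective we may write $V_{\mM}^\pi = \sum_{m=1}^M w_m \Exs_m^\pi[\sum_{t=1}^H r_t]$, where each summand depends only on the parameters $(\nu_m,T_m,R_m)$ of context $m$ and on $\pi$. In particular this decomposition never refers to a belief state computed from an estimated model, so we avoid ever having to compare the mismatched beliefs induced by $\tmM$ and $\mM^*$: the two models are compared context by context against the same fixed policy. Writing the hidden rewards of $\tmM$ into the same decomposition, it suffices to show, on the event $\mM^*\in\mC$ and for every context $m$,
\begin{align*}
    \widetilde{R}_{init}^{hid}(m) + \Exs_m^{\tmM,\pi}\brackets{\sum_{t=1}^H \parenth{r_t + \widetilde{R}_m^{hid}(s_t,a_t)}} \ge \Exs_m^{\mM^*,\pi}\brackets{\sum_{t=1}^H r_t},
\end{align*}
after which the claim follows by the $w_m$-weighted sum.

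Next I would fix $m$ and set up history-indexed value-to-go functions within that context. Let $\widetilde{V}_t(h_t)$ denote the expected remaining reward (observable plus per-step hidden) from step $t$ under $\tmM$'s context-$m$ dynamics $(\hat T_m,\hat R_m)$, and $V_t^*(h_t)$ the corresponding observable-only value under $(T_m^*,R_m^*)$, both conditioned on a history $h_t$ ending in state $s_t$. These obey the one-step recursions
\begin{align*}
    \widetilde{V}_t(h_t) &= \sum_a \pi(a|h_t)\brackets{\widetilde{R}_m^{hid}(s_t,a) + \sum_{s',r}\hat T_m(s'|s_t,a)\hat R_m(r|s_t,a)\parenth{r + \widetilde{V}_{t+1}(h_{t+1})}}, \\
    V_t^*(h_t) &= \sum_a \pi(a|h_t)\sum_{s',r}T_m^*(s'|s_t,a)R_m^*(r|s_t,a)\parenth{r + V_{t+1}^*(h_{t+1})},
\end{align*}
with $h_{t+1}=(h_t,a,r,s')$ and terminal values $0$. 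I would then prove $\widetilde{V}_t(h_t)\ge V_t^*(h_t)$ for all $t,h_t$ by backward induction, so that the displayed inequality is the $t=1$ case (up to the initial-distribution step below).

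For the inductive step it suffices to show that, for each $(s_t,a)$, the hidden bonus dominates the model-mismatch gap. Using the inductive hypothesis $\widetilde{V}_{t+1}\ge V_{t+1}^*$ (valid termwise because the Bellman backup weights $\hat T_m\hat R_m$ are non-negative), the remaining gap is $\sum_{s',r}(\hat T_m\hat R_m - T_m^*R_m^*)(r + V_{t+1}^*(h_{t+1}))$. Since $r+V_{t+1}^*\in[0,H]$, I would bound this below by $-H\,\norm{(\hat T_m\hat R_m - T_m^*R_m^*)(\cdot|s_t,a)}_1$ and invoke the product-distribution inequality $\norm{\hat T_m\hat R_m - T_m^*R_m^*}_1 \le \norm{(\hat T_m - T_m^*)(s'|s_t,a)}_1 + \norm{(\hat R_m - R_m^*)(r|s_t,a)}_1$, which on $\mM^*\in\mC$ is at most $\sqrt{c_T/N_m(s_t,a)}+\sqrt{c_R/N_m(s_t,a)}$. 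The bonus is tuned exactly to absorb this: in the untruncated regime it equals $H\sqrt{5(c_R+c_T)/N_m(s_t,a)}$, and $\sqrt{5(c_R+c_T)}\ge\sqrt{2(c_R+c_T)}\ge\sqrt{c_R}+\sqrt{c_T}$ (the last step is $(\sqrt{c_R}-\sqrt{c_T})^2\ge0$). I expect the \emph{truncated} regime to be the main obstacle: when $N_m(s_t,a)$ is small the $\ell_1$ bound is vacuous and the bonus caps at $H$, so the clean comparison breaks. There I would argue directly that the mismatch term is at least $-H$, since the optimistic expectation $\sum_{s',r}\hat T_m\hat R_m(r+V_{t+1}^*)$ is non-negative while the subtracted true expectation is at most $H$; hence the capped bonus $H$ still covers it. Combining the two regimes closes the induction.

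Finally I would restore the initial-state distribution. Taking expectations of the $t=1$ bound over $s_1$ and adding and subtracting $\nu_m^*$, the only new term is $\sum_{s_1}(\hat\nu_m-\nu_m^*)(s_1)V_1^*(s_1)$, which on $\mM^*\in\mC$ is at least $-\norm{(\hat\nu_m-\nu_m^*)(s)}_1\,\norm{V_1^*}_\infty\ge-\sqrt{c_\nu/N(m)}\,\norm{V_1^*}_\infty$; absorbing this is precisely the role of the initial hidden reward $\widetilde{R}_{init}^{hid}(m)$. Summing the resulting per-context inequalities against $w_m$ then yields $V_{\tmM}^\pi\ge V_{\mM^*}^\pi$ and completes the proof.
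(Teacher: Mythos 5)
Your proof is correct and follows essentially the same route as the paper: your per-context, history-indexed value-to-go functions are exactly the coordinates of the paper's alpha vectors $\alpha_{t,s}^{h,\pi}(m)$, and your backward induction is the paper's Lemma on optimism of alpha vectors, with the same confidence-set bounds absorbing the model mismatch and the same treatment of the initial distribution. Your handling of the truncated-bonus regime and your combined bound on $\|\hat T_m\hat R_m - T_m R_m\|_1$ are slightly more explicit than the paper's, but these are presentational differences, not a different argument.
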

To establish this result we make use of the `alpha vector' representation~\cite{smallwood1973optimal} of the value function of general POMDPs. Utilizing this representation we establish that each policy of the constructed LMDP has optimistic value, namely, it is not smaller than its value on the true LMDP. Detailed analysis is deferred to Appendix~\ref{appendix:description_optimistic_alpha}.

With the optimistic model constructed in Lemma~\ref{lemma:optimistic_model}, the planning-oracle efficient algorithm based on the optimism principle is straightforward. The resulting latent upper confidence reinforcement learning (L-UCRL) algorithm is summarized in Algorithm \ref{algorithm:lucrl}. We note that most existing planning algorithms can incorporate the hidden-reward structure without changes. For instance, the PBVI algorithm \cite{pineau2006anytime} can be executed as it is in the planning step. Hence in each episode, we can build one optimistic model from the Lemma \ref{lemma:optimistic_model}, and call the planning-oracle to get a policy to execute for the episode. Then we simply run the policy and update model parameters in a straight-forward manner. The algorithm can be efficiently implemented as long as some efficient (approximate) planning algorithms are available.

Based on the established optimism in Lemma~\ref{lemma:optimistic_model} and by carefully bounding the on-policy errors we arrive to the following regret guarantee of L-UCRL.
\begin{theorem}
    \label{theorem:scenario1_regret_bound}
    The regret of the Algorithm \ref{algorithm:lucrl} is bounded by:
    \vspace{-0.15cm}
    \begin{align*}
        Regret(K) \le \sum_{k=1}^K (V_{\tmM_k}^{\pi_k} - V_{\mM^*}^{\pi_k}) \lesssim HS\sqrt{MAN},
    \end{align*}
    \vspace{-0.2cm}
    where $N = HK$, {\it i.e.,} total number of taken actions.
\end{theorem}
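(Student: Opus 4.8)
The plan is to run the standard optimism-based (UCRL-style) regret decomposition, but with the value functions defined over belief states rather than observed states. The first inequality in the theorem is the easy half and is pure optimism. On the event that $\mM^*$ lies in every confidence set, Proposition~\ref{lemma:optimistic_model} gives $V_{\tmM_k}^{\pi} \ge V_{\mM^*}^{\pi}$ for all $\pi$; taking $\pi$ to be the optimal policy of $\mM^*$ yields $V_{\tmM_k}^{*} \ge V_{\mM^*}^{*}$. Combining this with the planning-oracle guarantee $V_{\tmM_k}^{\pi_k} \ge \rho_1 V_{\tmM_k}^{*} - \rho_2$ gives $V_{\tmM_k}^{\pi_k} \ge \rho_1 V_{\mM^*}^{*} - \rho_2$, so each summand $\rho_1 V_{\mM^*}^{*} - \rho_2 - V_{\mM^*}^{\pi_k}$ of the regret is at most $V_{\tmM_k}^{\pi_k} - V_{\mM^*}^{\pi_k}$. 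Summing over $k$ proves the first inequality, and the remaining work is to bound the total on-policy error $\sum_k (V_{\tmM_k}^{\pi_k} - V_{\mM^*}^{\pi_k})$.

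First I would fix the good event $\mathcal{G} = \{\mM^* \in \mC_k \text{ for all } k \le K\}$, which holds with high probability by the concentration bounds behind~\eqref{eq:construct_confidence_set}: an $l_1$ (or Bernstein) bound for the categorical transitions over the $S$ next states and Hoeffding bounds for the rewards and initial distribution, with $c_T, c_R, c_\nu$ set accordingly; crucially $c_T \asymp S$ up to logarithmic factors since the transition confidence width is an $l_1$ quantity over $S$ states. On $\mathcal{G}$ I would bound each gap by a simulation-lemma argument tailored to LMDPs. Conditioning on the latent context decomposes $V_{\mM}^{\pi} = \tfrac1M \sum_m \Exs_m^{\pi}\!\big[\sum_{t=1}^H r_t\big]$, and within each context the optimistic and true models share the single (belief-based) policy $\pi_k$, so the usual telescoping of the Bellman residual over the $H$ steps applies. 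This yields two contributions: the observable-parameter mismatch between $(\hat T_m, \hat R_m)$ and $(T_m, R_m)$, controlled on $\mathcal{G}$ by the confidence widths (the transition term carrying the customary extra factor $H$ from the value range), and the injected hidden rewards $\widetilde{R}_m^{hid}, \widetilde{R}_{init}^{hid}$, which by construction are of the same order. Hence $V_{\tmM_k}^{\pi_k} - V_{\mM^*}^{\pi_k} \lesssim H\,\Exs^{\pi_k}\!\big[\sum_{t=1}^H \sqrt{(c_R + c_T)/N_{m}(s_t,a_t)}\,\big]$, up to the lower-order initial-distribution term.

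It then remains to sum these widths across episodes. I would first replace the expected per-episode widths by the realized ones along the sampled trajectories, paying an $\tilde{O}(\sqrt{N})$ martingale (Azuma) fluctuation that is lower order. In the contexts-in-hindsight scenario the update uses $\hat b(m) = \indic[m = m_{\mathrm{true}}]$, so each of the $N = HK$ taken actions deposits unit mass into exactly one of the $MSA$ bins $N_m(s,a)$, giving $\sum_{m,s,a} N_m(s,a) = N$. The standard count-telescoping lemma (accounting for $N_m(s,a)$ being frozen within an episode) gives $\sum_{\text{visits}} 1/\sqrt{N_m(s,a)} \lesssim \sum_{m,s,a}\sqrt{N_m(s,a)}$, and Cauchy--Schwarz over the $MSA$ bins yields $\sum_{m,s,a}\sqrt{N_m(s,a)} \le \sqrt{MSA\cdot N}$. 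Collecting factors, $\sum_k (V_{\tmM_k}^{\pi_k} - V_{\mM^*}^{\pi_k}) \lesssim H\sqrt{c_R + c_T}\,\sqrt{MSA\cdot N}$, and substituting $\sqrt{c_R + c_T} \asymp \sqrt{S}$ produces the claimed $HS\sqrt{MAN}$.

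The step I expect to demand the most care is the per-episode gap bound. Although conditioning on the latent context formally reduces each summand to a difference of two MDP value functions under the common policy $\pi_k$, one must verify that the hidden rewards injected in Proposition~\ref{lemma:optimistic_model} account simultaneously for the reward error (order $\sqrt{c_R/N_m}$) and the transition error (order $H\sqrt{c_T/N_m}$ after multiplication by the value range), so that both pieces of the gap are dominated by a single confidence-width sum; this is precisely where the constant $5(c_R+c_T)$ and the leading factor $H$ in $\widetilde{R}_m^{hid}$ are needed, and where the belief-state (alpha-vector) viewpoint of Proposition~\ref{lemma:optimistic_model} rather than the observed-state value is essential. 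A secondary subtlety --- trivial here but central to the inference scenario of Algorithm~\ref{algorithm:estimate_belief} --- is that $N_m(s,a)$ is incremented by soft weights $\hat b(m)$ rather than indicators, so the effective-sample-size reading of $N_m(s,a)$ must be re-justified before the Cauchy--Schwarz counting can be applied.
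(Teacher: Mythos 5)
Your proposal is correct and follows essentially the same route as the paper: optimism via Proposition~\ref{lemma:optimistic_model} reduces the regret to $\sum_k (V_{\tmM_k}^{\pi_k}-V_{\mM^*}^{\pi_k})$, a context-conditioned simulation/value-difference argument (the paper's Lemma~\ref{lemma:value_difference_lemma}) bounds each gap by confidence widths dominated by the hidden reward $H\sqrt{5(c_R+c_T)/N_m(s,a)}$, and the standard martingale-plus-pigeonhole counting with $c_T \asymp S$ yields $HS\sqrt{MAN}$. The only cosmetic difference is that you pass from expected to realized counts via Azuma, whereas the paper keeps expected counts and lower-bounds the realized $N_m^k(s,a)$ by Bernstein for martingales; these are equivalent treatments of the same step.
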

Proof of Theorem~\ref{theorem:scenario1_regret_bound} is given in Appendix \ref{appendix:lucrl_regret_bound}. The central result of this section, Theorem~\ref{theorem:scenario1_regret_bound} leads to the following observation: a polynomial sample complexity is possible for the LMDP model assuming the context of the underlying MDP is supplied at the end of each episode. In the next sections we explore ways to relax this assumption, while still supplying with a polynomial sample complexity guarantee.

\begin{algorithm}[t]
    \caption{Access to True Contexts}
    \label{algorithm:access_true_context}
    \textbf{Input:} Get a true context $m^*$ in hindsight. \\
    \textbf{Output :} Return an encoded belief $\hat{b}$ over contexts:
    \vspace{-0.3cm}
    \begin{align*}
        \hat{b}(m) = \Big\{\begin{array}{lr}
        1, & \text{for } m = m^* \\
        0, & \text{for } m \neq m^*
        \end{array}
    \end{align*}
    \vspace{-0.3cm}
\end{algorithm}

\subsection{When we can Infer Contexts?}
\label{subsection:scenario2}
Without explicit access to the true context at the end of an episode, it is natural to estimate the context from the sampled trajectory. One {\it sufficient} condition that ensures such well-separatedness between MDPs is the following:
\begin{assumption}[$\delta$-Strongly Separated MDPs]
    \label{assump:delta_separation}
    For all $m$, $m_1, m_2 \in [M]$ such that $m_1 \neq m_2$, for all $(s,a) \in \mS \times \mA$, $l_1$ distance between probability of observations $o = (s', r)$ of two different MDPs in LMDP is at least $\delta > 0$, {\it i.e.,} $\| (\PP_{m_1} - \PP_{m_2}) (o | s,a) \|_1 \ge \delta$ for some constant $\delta > 0$. 
\end{assumption}

In order to reliably infer the true contexts the seperatedness between MDPs alone is not sufficient, since we need to estimate the contexts from the current empirical estimates of LMDPs. In order to reliably estimate the context from empirical estimate of LMDPs, we need a well-initialized empirical transition model of the LMDP:
\begin{align}
    &\| (\hat{T}_{m} - T_m) (s' | s,a) \|_1, \ \| (\hat{\nu}_{m} - \nu_m) (s) \|_1, \nonumber \\
    & \| (\hat{R}_{m} - R_m) (r| s,a) \|_1 \le \epsilon_{init}, \qquad \forall (s,a), \label{eq:initialization_condition}
\end{align}
for some initialization error $\epsilon_{init} > 0$. Note that while the initialization error is relatively small, it can be still not good enough to obtain a near-optimal policy (i.e., it will result in a linear regret). We can consider as if the state-action pairs are visited at least $N_0$ times such that $N_0 = c_T / \epsilon_{init}^2$ in each context in a fully observable setting.

Once the initialization is given along with separation between MDPs, we can modify Algorithm \ref{algorithm:lucrl} to update the empirical estimate of LMDP using the estimated belief over contexts computed in Algorithm \ref{algorithm:estimate_belief}. Note that when we update the model parameters, we increase the visit count of state-action pair $(s,a)$ at $m^{th}$ MDP by $\hat{b}(m)$. With Assumption \ref{assump:delta_separation}, it approximately adds a count for the correctly estimated context, but even without Assumption \ref{assump:delta_separation}, the update steps can still be applied. In fact, this is equivalent to an implementation of the so-called (online) expectation-maximization (EM) algorithm \cite{cappe2009line} for latent MDPs. Thus Algorithm \ref{algorithm:lucrl} with Algorithm \ref{algorithm:estimate_belief} essentially results in combining L-UCRL and the EM algorithm. 

In terms of performance guarantees, using Algorithm \ref{algorithm:estimate_belief} as a sub-routine for L-UCRL gives the same order of regret as in Theorem \ref{theorem:scenario1_regret_bound} as long as the true context can be almost correctly inferred with high probability for all $K$ episodes:
\begin{theorem}
    \label{theorem:scenario2_regret_bounds}
    Suppose Assumption \ref{assump:delta_separation} holds with $H > C \cdot \delta^{-4} \log^2(1/\alpha) \log(N/\eta)$ for some absolute constants $C, \delta > 0$, and a parameter $\alpha > 0$ such that $\alpha \ln(1/\alpha) \le \delta^2 / (200 S)$. If the initialization parameters satisfy equation \eqref{eq:initialization_condition} with some initialization error $\epsilon_{init} \le \delta^2 / (200 \ln(1/\alpha))$, then with probability at least $1 - \eta$, the regret of Algorithm \ref{algorithm:lucrl} is bounded by:
    \begin{align*}
        Regret(K) \lesssim HS \sqrt{MAN}.
    \end{align*}
\end{theorem}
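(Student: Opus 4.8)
The plan is to reduce the analysis to that of Theorem \ref{theorem:scenario1_regret_bound} by showing that the estimated belief $\hat b$ produced by Algorithm \ref{algorithm:estimate_belief} concentrates almost entirely on the true context $m^*$ in every one of the $K$ episodes, on a single high-probability event. Once this is established, the belief-weighted count updates in Algorithm \ref{algorithm:lucrl} coincide, up to a negligible additive error, with the updates one would perform with exact hindsight contexts (Algorithm \ref{algorithm:access_true_context}); hence the confidence set \eqref{eq:construct_confidence_set} remains valid, the optimism of Lemma \ref{lemma:optimistic_model} continues to hold, and the regret decomposition and summation carried out for Theorem \ref{theorem:scenario1_regret_bound} go through verbatim, yielding the same $HS\sqrt{MAN}$ bound.

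The heart of the argument is the context-identification step. Fix an episode with true context $m^*$ and an observed length-$H$ trajectory; since the latent variable is static within the episode, every observation $o_t=(s_{t+1},r_t)$ is drawn from $\PP_{m^*}(\cdot|s_t,a_t)$. For a competing context $m\ne m^*$ I would study the log-likelihood ratio $\Lambda_t = \sum_{\tau\le t}\log\frac{\hat\PP_{m^*}(o_\tau|s_\tau,a_\tau)}{\hat\PP_{m}(o_\tau|s_\tau,a_\tau)}$ formed from the \emph{empirical} models, with probabilities clipped below at level $\alpha$ so that each increment is bounded by $\log(1/\alpha)$. Writing $\Lambda_t$ as its conditional mean plus a martingale, the drift per step is, up to the clipping and initialization corrections, a KL-type quantity lower bounded via Pinsker by a constant multiple of $\|(\PP_{m^*}-\PP_m)(\cdot|s_t,a_t)\|_1^2\ge\delta^2$ (Assumption \ref{assump:delta_separation}). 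Here the three parameter conditions enter exactly to guarantee that this signal survives: the condition $\alpha\ln(1/\alpha)\le\delta^2/(200S)$ ensures the mass lost to clipping across the $\le S$ next-states is a lower-order fraction of $\delta^2$, while $\epsilon_{init}\le\delta^2/(200\ln(1/\alpha))$ ensures that replacing the true models by the $\epsilon_{init}$-accurate empirical models perturbs each per-step log-ratio by at most $O(\epsilon_{init}\ln(1/\alpha))\ll\delta^2$. Consequently the drift of $\Lambda_H$ is $\gtrsim H\delta^2$, while the bounded martingale fluctuations are controlled by Azuma--Hoeffding at scale $\sqrt{H}\log(1/\alpha)\sqrt{\log(N/\eta)}$; the hypothesis $H> C\delta^{-4}\log^2(1/\alpha)\log(N/\eta)$ is precisely what makes the drift dominate the fluctuation, so $\Lambda_H$ is large and positive and $\hat b$ places mass $1-o(1)$ on $m^*$. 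A union bound over the $\le M$ competing contexts and $\le K$ episodes (absorbed into the $\log(N/\eta)$ factor) produces the global event of probability at least $1-\eta$.

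I would then close the loop with a short induction over episodes to confirm that the initialization quality \eqref{eq:initialization_condition} is \emph{maintained} throughout the run: conditioned on correct inference in episodes $1,\dots,k-1$, the belief-weighted counts $N_m(s,a)$ grow essentially as true-context counts, so the empirical parameters entering episode $k$ are no worse than at initialization, and the identification guarantee of the previous paragraph applies again at episode $k$. On the resulting good event the difference between the belief-weighted updates of Algorithm \ref{algorithm:lucrl} and the exact-context updates is at most the residual belief mass $o(1)$ per step, which contributes a lower-order term to every count and hence does not affect the order of the confidence radii; the remainder of the proof is identical to Theorem \ref{theorem:scenario1_regret_bound}.

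I expect the main obstacle to be the concentration/identification step, and specifically the bookkeeping that simultaneously (i) controls the unbounded log-likelihood increments through clipping without erasing the $\delta^2$ separation signal, and (ii) absorbs the empirical-model error from using $\hat\PP_m$ in place of $\PP_m$. Getting the three parameter thresholds to line up so that the signal $H\delta^2$ strictly dominates both the clipping and initialization biases and the $\sqrt{H}\log(1/\alpha)\sqrt{\log(N/\eta)}$ martingale deviation is where all the delicacy lies; the subsequent reduction to Theorem \ref{theorem:scenario1_regret_bound} should be routine by comparison.
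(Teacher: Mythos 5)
Your proposal follows essentially the same route as the paper: the core is the clipped log-likelihood-ratio argument (the paper's Lemma \ref{lemma:scenario2_belief_lemma}), with the drift lower-bounded via Pinsker by $H\delta^2$ up to clipping and initialization corrections controlled exactly by the two stated thresholds, Azuma--Hoeffding for the $\sqrt{H}\log(1/\alpha)$ fluctuation, a union bound over contexts and episodes, and then a reduction to the regret analysis of Theorem \ref{theorem:scenario1_regret_bound} with the residual belief mass contributing only lower-order terms to the counts and confidence radii. Your explicit induction confirming that the model accuracy is maintained across episodes is a point the paper leaves implicit, but otherwise the arguments coincide.
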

The proof of Theorem \ref{theorem:scenario2_regret_bounds} is given in Appendix \ref{appendix:ucrl_em_regret}. While currently the provable guarantees are given only for well-separated LMDPs, we empirically evaluate Algorithm \ref{algorithm:lucrl} as a function of separations and initialization~(see Figure \ref{fig:em_lucrl}). %\jycomment{I think Figure 1 would be enough. It is either under some initialization threshold everything works well or above the threshold it breaks down}. 
%YE{deleted this sentence} Lastly, we also note here that Assumption \ref{assump:delta_separation} can be relaxed to allow similarities in some state-action pairs as long as Condition~\ref{condition:well_separated_cond} holds.

\begin{algorithm}[t]
    \caption{Inference of Contexts}
    \label{algorithm:estimate_belief}
    \textbf{Input:} Trajectory $\tau = (s_1, a_1, r_1, ..., s_H, a_H, r_H)$ \\
    \textbf{Output:} Return an estimate of belief over contexts $\hat{b}$:
    \vspace{-0.2cm}
    \begin{align*}
        \hat{p}_m (\tau) &= \Pi_{t=1}^H ( \alpha + (1 - 2\alpha S) \hat{\PP}_m(s_{t+1},r_t|s_t,a_t)), \nonumber \\
        \hat{b}(m) &= \frac{\hat{p}_m (\tau)}{\sum_{m=1}^M \hat{p}_m (\tau)}.
    \end{align*}
    \vspace{-0.2cm}
\end{algorithm}

An interesting consequence of the Assumption \ref{assump:delta_separation} is that the length of episode can be logarithmic in the number of problem parameters. With much longer time-horizons $H \ge \Omega(S^2 A / \delta^2)$, \cite{brunskill2013sample, hallak2015contextual} assumed similar $\delta$-separation only for some $(s,a)$ pairs. While Assumption~\ref{assump:delta_separation} requires a stronger assumption of $\delta$-separation for all state-actions, the requirement on the time-horizon can be significantly weaker. For a slightly more general separation condition, see Appendix \ref{appendix:well_separated_mdps}. %We mention here that the required time-horizon can be much shorter if the KL-divergence between distributions is larger, even though the $l_1$ distance remains the same. 

% \begin{align*}
%     &\hat{P}(s'|s,a) = \frac{N(s',s,a)}{\sum_{s'} N(s',s,a)}\\
%     & \hat{P}(s'|s,a) = \frac{N(s',s,a)\vee 1}{\sum_{s'} (N(s',s,a)\vee 1)}
% \end{align*}

\subsection{Learning LMDPs without Initialization}
\label{subsection:scenario3}

Finally, we discuss efficient initialization with some additional assumptions. Clustering trajectories is the cornerstone of all our technical results, as this allows us to estimate the parameters of each hidden MDP and then apply the techniques of Section \ref{subsec:true_context}. The challenge is how to cluster when we have short trajectories, and no good initialization. 
%A natural idea for obtaining a well initialized model is to cluster trajectories such that \emph{each cluster contains samples from a single hidden MDP}, at which point the model of each hidden MDP can be readily estimated up to  `good enough' precision. With a long horizon, clustering is straightforward as we can compute the MDP label directly. With short trajectories, this is no longer possible. %Moreover, even for short trajectories the 

%; thus the main question that arises is how  can we cluster trajectories when we have a much shorter time horizon? 
%Unlike previous works where the authors assume a very long time-horizon $H$ such that every state-action pair can be visited enough times. However, as we have seen in previous section,  inference is possible with much shorter time horizon. Hence we would like to ...

The key is again in Assumption \ref{assump:delta_separation}. 
In Section \ref{subsection:scenario2}, we use a good initialization to obtain accurate estimates of the belief states. These can then be clustered, thanks to Assumption \ref{assump:delta_separation}, allowing us to obtain the true label in hindsight. Without initialization, we cannot accurately compute the belief state, so this avenue is blocked. Instead, our key idea is to leverage a predictive state representation (PSR) of the POMDP dynamics, and then show that Assumption \ref{assump:delta_separation} also allows us to cluster in this space. 

Algorithm~\ref{algo:clustering_trajectories_pseudo} gives our approach. We first explain the high-level idea, and subsequently detail some of the more subtle points. Suppose we have PSR parameters allowing us to estimate $\PP(o | h \| \textbf{do } a)$, (the probabilities of any future observations $o = (s',r)$ given a history $h$ and intervening action $a$) to within accuracy $o(\delta)$. We then show that we can again apply Assumption \ref{assump:delta_separation}, to (almost) perfectly cluster the MDPs by true context at the end of the episode. 
%For instance, $\PP(s' | h_{H-1} \| \textbf{do } a) \approx T_{m} (s' | s_{H-1},a)$, where $h_{H-1} = ((s, a, r)_{1:H-2}, s_{H-1})$ is a history up to time step $H-1$. That is, we can estimate the transition (and reward) probabilities of an MDP at states that are close to the end of the trajectory up to precision $o(\delta)$. 
After we collect transition probabilities at all states near the end of episode, we can construct a full transition model for each MDP.% by correctly assigning contexts to all transition probabilities. 

%We first learn the PSR of the LMDP such that we can compactly represent histories as low-dimensional vectors, from which we can compute probabilities of future observations conditioned on histories. We then show that we can use these prediction probabilities to cluster the trajectories. 

%Then we are able to separate trajectories from different MDPs with probabilities of short future observations conditioned on longer histories, true beliefs of which approximately indicate true contexts in hindsight (see Figure \ref{fig:psr_clustering} for empirical observations).
% We see below that under these technical assumptions, we can first learn predictive state representation (PSR) of the system, and then we cluster trajectories to obtain a reliable initial estimate of LMDP parameters.

Learning the PSR to sufficient accuracy requires an additional assumption. We show that the following standard assumption on statistical sufficiency of histories and tests, is sufficient for our purposes (see also Section \ref{subsec:review_psr} and Appendix \ref{appendix:spectral_learning_psr}):
\begin{assumption}[Sufficient Tests/Histories]
    \label{assumption:sufficient_set}
    Let $\mT$ and $\mH$ be the set of all possible tests and histories of length $l=O(1)$ respectively, with a given sampling policy $\pi$ ({\it e.g.,} uniformly random policy) for histories $\mH$. $\mT$ and $\mH$ satisfy Condition \ref{condition:rank_test} and \ref{condition:rank_history} respectively.
\end{assumption}
While the worst-case instance may require $l \ge M$ to satisfy the full-rank conditions, we assume that the length of sufficient tests/histories is $l = O(1)$. In fact, $l = 1$ has been (implicitly) the common assumption in the literature on learning POMDPs \cite{hsu2012spectral, azizzadenesheli2016reinforcement, guo2016pac, jin2020sample}. Empirically, we observe that the more MDPs differ, the more easily they satisfy Assumption \ref{assumption:sufficient_set}. See Figure \ref{fig:psr_clustering}. %However, we note that Assumption \ref{assump:delta_separation} and \ref{assumption:sufficient_set} are complementary to each other. 
At this point, we are not aware whether sample-efficient learning is possible with only Assumption \ref{assump:delta_separation}. %We leave it as a future work to efficiently design sufficient histories and tests. 

Though the main idea and key assumption are above, a few important details and technical assumptions remain to complete this story. The primary guarantee still required is that we have access to an exploration policy with sufficient mixing, to guarantee we can collect all required information to perform the PSR-based clustering. The following assumption ensures that additional $\tilde{O}(M/\alpha_2)$ sample trajectories obtained with the exploration policy $\pi$ can provide $M$ clusters of estimated one-step predictions $\PP_m(o | s, a)$ for every state $s$ and intervening action $a$. 
\begin{assumption}[Reachability of States]
    \label{assumption:reachability}
    There exists a priori known exploration policy $\pi$ such that, for all $m \in [M]$ and $s \in \mS$,  we have ${\PP_m^{\pi} (s_{H-1} = s) \geq \alpha_2}$ for some $\alpha_2 > 0$. 
\end{assumption}
A subtle point here is that we still have an ambiguity issue in the ordering of contexts (or labels) assigned in different states, which prevents us from recovering the full model for each context. In Appendix \ref{Appendix:clustering_with_psrs}, we describe an approach that resolves this ambiguity assuming the MDP is connected.
%(see Assumption~\ref{assumption:connectivity} for the formal definition in Appendix \ref{Appendix:clustering_with_psrs}). 
%We defer the full description to the supplementary material due to space limitations.

%In fact, Assumption \ref{assumption:connectivity} excludes pathological instances in which some states are disconnected so that different parameter configurations yield exactly the same LMDP. \YE{is assumption 4 really necessary? If the starting state is stochastic I think this assumption is not essential} \jycomment{It is essential for the unique recovery of models}\YE{If the initial state is fixed in the PSR phase and in the learning phase I think there's no need for assumption 4.} %

\begin{algorithm}[t]
    \caption{(Informal) Recovery of LMDP parameters}
    \label{algo:clustering_trajectories_pseudo}
    % {\bf Input:} Estimated PSR parameters up to precision $o(\delta)$

    \begin{algorithmic}
        \STATE Learn PSR parameters up to precision $o(\delta)$
        \STATE Get clusters $\{\hat{T}_m(\cdot|s,a), \hat{R}_m(\cdot|s,a)\}_{(s,a) \in \mS \times \mA, m\in [M]}$ with learned PSR parameters
        % (and with sample trajectories from explorative episodes)
        \STATE Build each MDP model by correctly assigning contexts to estimated transition and reward probabilities
        \STATE {\bf Return} Well-initialized model $\{\hat{T}_m, \hat{R}_m\}_{m\in [M]}$
    \end{algorithmic}
\end{algorithm}

We conclude this section with an end-to-end guarantee.% for the entire pipeline:
\begin{theorem}
    \label{theorem:final_result}
    Let Assumption \ref{assumption:sufficient_set} hold for an LMDP instance with a sampling policy $\pi$. Furthermore, assume the LMDP satisfies Assumptions \ref{assump:delta_separation} and \ref{assumption:reachability}. We learn the PSR parameters with $n_0$ short trajectories of length $2l+1$ where
    \begin{align*}
        n_0 = poly(A^l, S, \epsilon_c^{-1}, \sigma_h^{-1}, \sigma_\tau^{-1}, \alpha_2^{-1}, \alpha_3^{-1}, H, M),
    \end{align*}
    where $\epsilon_c < \min(\delta, \epsilon_{init})$ is a desired accuracy for estimated predictions, and $\alpha_3 > 0$ is a parameter related to the connectivity of MDPs (see Assumption \ref{assumption:connectivity} in Appendix \ref{Appendix:clustering_with_psrs}). Let the number of additional episodes with time-horizon $H \ge C \cdot \delta^{-4} \log^2 (1/\alpha) \log(N/\eta)$ (as in Theorem \ref{theorem:scenario2_regret_bounds}) to be used for the clustering be
    \begin{align*}
        n_1 = C_1 \cdot MA \log(MS) / (\alpha_2\alpha_3),
    \end{align*}
    with some absolute constant $C_1 > 0$. Then with probability at least $2/3$, Algorithm \ref{algo:clustering_trajectories_pseudo} (see the full algorithm described in Algorithm \ref{algo:clustering_trajectories}) returns a good initialization of LMDP parameters that satisfies the initialization condition \eqref{eq:initialization_condition}.
\end{theorem}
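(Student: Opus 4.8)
The plan is to chain together three estimation stages---spectral PSR learning, per-state context clustering, and cross-state label alignment---and then verify that the resulting per-context models meet the initialization accuracy \eqref{eq:initialization_condition}. The sample budgets $n_0$ and $n_1$ are split so that $n_0$ drives the PSR accuracy and $n_1$ supplies enough trajectories through each late-episode state for clustering.

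First I would establish the PSR estimation accuracy. Under Assumption \ref{assumption:sufficient_set}, Conditions \ref{condition:rank_test} and \ref{condition:rank_history} guarantee that the observable matrices $P_{\mT,\mH_s} = L_s H_s$ and $P_{\mT,(s',r)a,\mH_s} = L_{s'} D_{(s',r),a,s} H_s$ have their $M$-th singular value bounded below in terms of $\sigma_\tau$ and $\sigma_h$. With $n_0$ length-$(2l+1)$ trajectories sampled under $\pi$, standard matrix concentration controls the empirical versions of these matrices, and Weyl/Davis--Kahan-type perturbation bounds translate this into a bound on the recovered PSR parameters $(b_{1,s}, B_{(s',r),a,s}, b_{\infty,s})$. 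Propagating these through the (at most length-$l$) prediction formula yields $|\hat{\PP}^\pi(\tau|h) - \PP^\pi(\tau|h)| \le \epsilon_c$ for all relevant tests and histories; the polynomial dependence on $A^l, S, \sigma_h^{-1}, \sigma_\tau^{-1}, H$ in $n_0$ is exactly what makes $\epsilon_c < \min(\delta, \epsilon_{init})$ achievable. Here I would invoke the spectral-learning guarantees referenced in Section \ref{subsec:review_psr} and Appendix \ref{appendix:spectral_learning_psr} essentially as a black box.

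Next, per-state clustering. For each state $s$ reached at step $H-1$, Assumption \ref{assumption:reachability} ensures every context contributes probability at least $\alpha_2$, so with $n_1 = \tilde{O}(MA/(\alpha_2\alpha_3))$ additional episodes we obtain enough trajectories passing through $s$ from each context. Conditioning the PSR on histories ending at $s$ gives estimated one-step predictions $\hat{\PP}_m(o|s,a)$ accurate to $O(\epsilon_c)$. Since Assumption \ref{assump:delta_separation} forces the true per-context predictions to be at least $\delta$ apart in $l_1$ and $\epsilon_c = o(\delta)$, a simple distance-based clustering (threshold at $\delta/2$) recovers the $M$ contexts at $s$ almost perfectly, yielding cluster centers $\{\hat{T}_m(\cdot|s,a), \hat{R}_m(\cdot|s,a)\}$ within $\epsilon_c$ of the truth for each $(s,a)$.

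The main obstacle is the third stage: the clusters recovered at different states carry independent, arbitrary labelings, so they must be aligned into globally consistent per-context models. I would resolve this using the connectivity assumption (Assumption \ref{assumption:connectivity}, parameter $\alpha_3$): matching a cluster at $(s,a)$ to the cluster at a next state $s'$ it transitions into---tracing context identity along transitions that are sufficiently distinguishable---propagates labels consistently across the connected state space. This is the delicate step, since a single mislabeling corrupts the entire reconstructed model, so the argument must keep the alignment-error probability controlled uniformly over all $O(S^2A)$ cross-state matchings; this is where $\alpha_3$ and the $\log(MS)$ factor in $n_1$ enter. With labels aligned I would collect the cluster centers into $\{\hat{T}_m, \hat{R}_m\}_{m\in[M]}$ (with $\hat{\nu}_m$ handled analogously at the initial step), observe each is within $\epsilon_c < \epsilon_{init}$ of the truth in $l_1$, and take a union bound over the spectral-accuracy event, the per-state clustering events, and the cross-state alignment events; choosing $n_0, n_1$ as stated makes the total failure probability at most $1/3$, certifying \eqref{eq:initialization_condition} with probability $2/3$.
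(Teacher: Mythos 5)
Your three-stage architecture (spectral PSR learning, per-state clustering of late-episode predictions, cross-state label alignment via connectivity) is the same as the paper's, and stages one and three are handled essentially as in Appendix \ref{appendix:proof_clustering_with_psr}. However, there is a genuine gap in your second stage. You assert that ``conditioning the PSR on histories ending at $s$ gives estimated one-step predictions $\hat{\PP}_m(o\mid s,a)$ accurate to $O(\epsilon_c)$,'' but what the learned PSR actually estimates is $\PP^{\pi}(o \mid h \,\|\, \mathbf{do}\ a) = \sum_m b_h(m)\,\PP_m(o\mid s,a)$, a belief-weighted \emph{mixture} over contexts, not the prediction of the particular context $m^*$ that generated the trajectory. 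If the posterior $b_h$ has not collapsed onto $m^*$, two trajectories from the same context can yield very different conditional predictions (and trajectories from different contexts can yield similar ones), so your distance-based clustering with threshold $\delta/2$ has no guarantee of recovering the contexts.

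The missing ingredient is precisely the belief-concentration argument: under Assumption \ref{assump:delta_separation}, Lemma \ref{lemma:separation_lemma} (via Condition \ref{condition:well_separated_cond} and the likelihood-ratio concentration in Appendix \ref{appendix:separation_condition_lemma}) shows that after $H-1$ steps with $H \ge C\,\delta^{-4}\log^{2}(1/\alpha)\log(N/\eta)$, the posterior satisfies $b_h(m^*)\ge 1-(\eta/N)^{4}$ with high probability, which is what lets the paper write $\|\PP^{\pi}(\mT'\mid h_i)-\PP^{\pi}_{k_i}(\mT'\mid s_i)\|_1 \le A^{l'}(\eta/N)^{4}$ and thereby identify the history-conditional prediction with the context-conditional one. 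This is the only place the lower bound on $H$ in the theorem statement is consumed; your proposal never uses that hypothesis, which is a reliable sign the step is missing. Once you insert this lemma between the PSR accuracy guarantee (Theorem \ref{corollary:psr_conditional_error}) and the clustering step, the rest of your argument goes through as in the paper.
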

Theorem \ref{theorem:final_result} completes the entire pipeline for learning in latent MDPs: we initialize the parameters by the estimated PSR and clustering (see Appendix \ref{appendix:unsupervised_algorithm}) up to {\it some} accuracy, and then we run L-UCRL to refine the model and policy up to {\it arbitrary} accuracy (Algorithm \ref{algorithm:lucrl}). Note that the $2/3$ probability guarantee can be boosted to arbitrarily high precision $1 - \eta$ by repeating Algorithm \ref{algo:clustering_trajectories}  $O(\log(1/\eta))$ times, and selecting a model via majority vote. We mention that we have not optimized polynomial factors as our focus is to avoid the exponential lower bound with additional assumptions. The proof of Theorem \ref{theorem:final_result} is given in Appendix \ref{appendix:proof_clustering_with_psr}.

%As a final remark, the sample complexity to learn initial LMDP parameters, while still polynomial, is very large. To get the intuition on how many samples we need, consider a randomly generated LMDPs with a good exploration policy. In such a setup, the constants can be as bad as $\alpha_2 = O(1/S)$, $p_{s} = O(1/S)$, and $\alpha_3 = O(1/S)$. Hence the sample complexity is at least $\Omega \left( \delta^{-4} A^{3l+1} H^2 K^3 S^5 \sigma_\tau^{-2} \sigma_K^{-4} \right)$. The minimum singular value for the joint probability matrix $\sigma_K$ can also be as small as $O(1/S^{l})$, though it could be improved with an effective design of histories and tests by domain experts. Whether this high sample-complexity can be improved, or when we can design a more efficient algorithm in a larger domain is a challenging question for future work. 

%\section{Infer Contexts: Learning with Separation and Good Initialization}
%\input{Scenario2}

\section{Experiments} \label{section: Experiments}
\label{section:experiments}
In this section, we evaluate the proposed algorithm on synthetic data. Our first two experiments illustrate the performance of L-UCRL (Algorithm \ref{algorithm:lucrl}) %\YE{it can be read as two seperate algorithms, where as, it is a single algorithm. Also, isn't the relation to EM is a bit dubious?} 
for various levels of separation and quality of initialization. Then, we empirically study the performance of the PSR-Clustering algorithm for randomly generated LMDPs for different levels of separation and time-horizon. 

\begin{figure}[t]
    \centering
    \begin{subfigure}{0.5\textwidth}
        \centering
        \includegraphics[width=\textwidth]{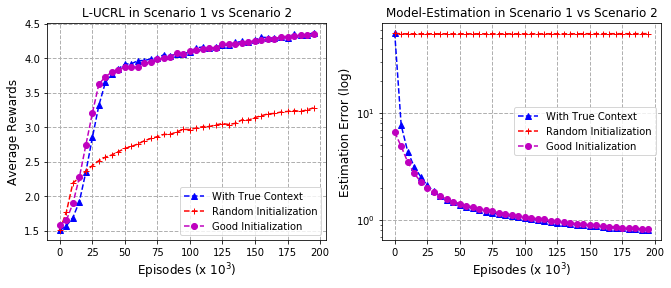}
        \caption{True context vs EM with poor/good initialization}
        \label{fig:context_vs_em}
    \end{subfigure}
    \hfill
    \begin{subfigure}{0.45\textwidth}
        \centering
        \includegraphics[width=\textwidth]{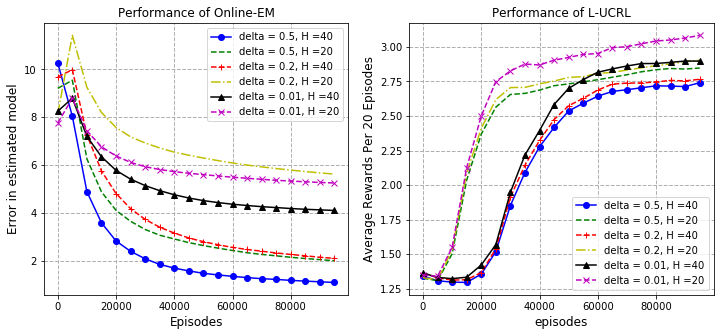}
        \caption{EM and L-UCRL with good initialization}
        \label{fig:em_lucrl}
    \end{subfigure}
    \caption{(a) We compare performance of L-UCRL (Algorithm \ref{algorithm:lucrl}) when true contexts are revealed in hindsight (Algorithm \ref{algorithm:access_true_context}) and when we infer true contexts with the EM algorithm (Algorithm \ref{algorithm:estimate_belief}). Without proper initialization, EM may converge to a local optimum which in turn results in sub-optimal policy. (b) Performance of EM + L-UCRL under different level of separation and horizon. The larger $\delta$ and the longer $H$, MDPs are better separated which results in a fast convergence of EM. For smaller $\delta$ or $H$, the convergence speed of EM slows down. L-UCRL finds the optimal policy for all different levels of $\delta$ and $H$.}
\end{figure}

\subsection{The Value of True Contexts in Hindsight}
We first study the importance of getting true contexts in hindsight for the approach analyzed in this work, by comparing Algorithm \ref{algorithm:lucrl} when using Algorithm \ref{algorithm:access_true_context} or \ref{algorithm:estimate_belief} as a sub-routine. We generate random instances of LMDPs of size $M = 7, S = 15, A = 3$ and set the time-horizon $H = 30$. The reward distribution is set to be 0 for most state-action pairs. We compare when we give a true context to the algorithm (Algorithm \ref{algorithm:access_true_context}) and when we infer a context with random initialization or good initialization (Algorithm \ref{algorithm:estimate_belief}). Note that in the latter case, it is equivalent to running the EM algorithm for the model estimation. 

We measure the model estimation error by simply summing over the  $l_1$ differences in probabilities of reward and transitions:
\begin{align*}
    error := \min_{\sigma \in \text{Perm}_M} \sum_{(m,s,a)} \|(\PP_m - \hat{\PP}_{\sigma(m)}) (s',r| s,a)\|_1,
\end{align*}
where $\text{Perm}_M$ denotes all length $M$ permutation sequences. The performance of the policy is measured by averaging the total rewards over the last thousand episodes. For the planning algorithm, we find that the Q-MDP heuristic \cite{littman1995learning} shows good performance. The measured errors are averaged over 10 independent experiments.

The experimental results are given in Figure \ref{fig:em_lucrl}. When the true context is given at the end of episode (with Algorithm \ref{algorithm:access_true_context}), L-UCRL converges to the optimal policy as our theory suggests. On the other hand, if the true context is not given (with Algorithm \ref{algorithm:estimate_belief}), the quality of initialization becomes crucial; when the model is poorly initialized, the estimated model converges to a local optimum which leads to a sub-optimal policy. When the model is well-initialized, L-UCRL performs as well as when true contexts are given in hindsight.

\subsection{Performance of L-UCRL with Good Initialization}
In our second experiment, we focus on the performance of L-UCRL (Algorithm \ref{algorithm:lucrl}) along with Algorithm \ref{algorithm:estimate_belief} under different levels of separation ($\delta$ in Assumption \ref{assump:delta_separation}) when approximately good model parameters are given. For various levels of $\delta$, we generate the parameters for transition probabilities randomly while keeping the distance between different MDPs to satisfy $\delta \le \|(T_{m_1} - T_{m_2}) (s'|s,a) \|_1 \le 2\delta$ for $m_1 \neq m_2$. As in the previous section, we test the algorithms on random instances of LMDPs of size $M = 7, S = 15, A = 3$. 

We show the error in the estimated model and average long-term rewards in Figure \ref{fig:em_lucrl}. When the separation is sufficient (larger $\delta$ or $H$), %\YE{we need that $H$ to be sufficiently large given $\delta$. The ration you derived is $H\geq \frac{1}{\delta^4}$. Possibly it can be a nice experiment? show a 2D map of $H$ vs. $\delta$ and see if we get an interesting phenomena below and above $H=poly(1/\delta)$.} 
the estimated model converges fast to the true parameters. When the separation gets small (smaller $\delta$ or $H$), %\YE{again, it seems that the absoulute number is not the important part, but the relation between the $H$ and $\delta$.} \jycomment{but also I think it need not be that much emphasized} 
the convergence speed gets slower. This type of transition in the convergence speed of EM (the update of model parameters with Algorithm \ref{algorithm:estimate_belief}) is observed both in theory and practice when the overlap between mixture components gets larger ({\it e.g.,} \cite{kwon2020minimax}). On the other hand, the policy steadily improves regardless of the level of separation. We conjecture that this is because the optimal policy would only need the model to be accurate in total-variation distance, not in the actual estimated parameters.

\subsection{Initialization with PSR and Clustering}
\begin{figure}[t]
    \centering
    \includegraphics[width=0.9\textwidth]{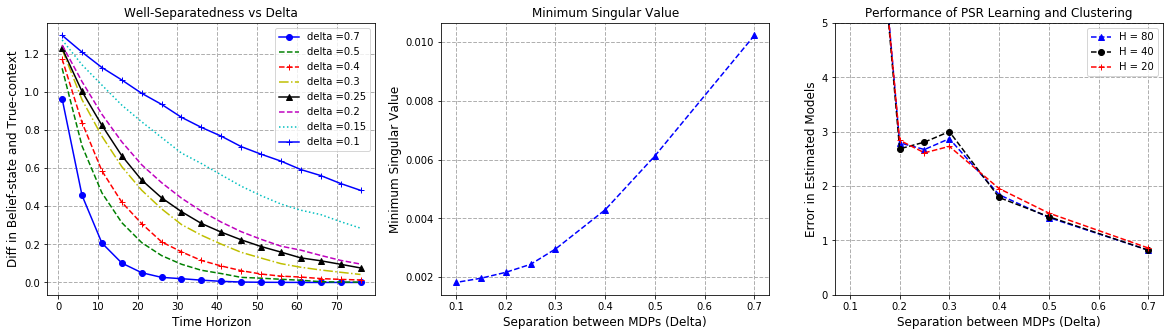}
    \caption{Performance of PSR learning and Clustering based algorithm for different levels of separation. {\bf Left:} Convergence of belief state for different levels of $\delta$ and $H$. {\bf Middle:} $M^{th}$ singular value of sufficient histories/tests matrix for various $\delta$. {\bf Right:} Accuracy of estimated model produced by PSR-Clustering algorithm under various $\delta$ and $H$. Failed when $\delta < 0.2$ due to small $M^{th}$ singular value of $P_{\mT, \mH}$.}
    \label{fig:psr_clustering}
\end{figure}

In the third experiment, we evaluate the initialization algorithm (Algorithm \ref{algo:clustering_trajectories_pseudo}) for randomly generated LMDP instances. Since PSR learning requires a (relatively) large number of short sample trajectories, we evaluate this step on smaller instances with $S = 7, A = 2, M = 3$. The LMDP instances are generated similarly as in the second experiment with different levels of $\delta$ and $H$. The reward and initial distributions are set the same across all MDPs. 

To learn the parameters of PSR, we run $10^6$ episodes with $H = 4$.
%(or take the first four state-action-rewards and ignore the rest for larger $H$). 
We assume histories and tests of length 1 are statistically sufficient 
%(i.e., satisfy Conditions \ref{condition:rank_test} and \ref{condition:rank_history}) 
with the uniformly random policy. In the clustering step, we run an additional $5 \cdot 10^3$ episodes to obtain longer trajectories of length $H = 20, 40$ and $80$.  %For the first $5 \cdot 10^3$ episodes, we computed the probability of all length 1 tests at the last 2 time steps using the estimated PSR parameters. Then for each $s \in \mS$, we get $M$ centers of clusters using the $k$-means++ algorithm \cite{arthur2007k} (see also Remark \ref{remark:clustering_implementation} in Appendix \ref{Appendix:clustering_with_psrs}). Then, we build a graph between centers of clusters across different states by running additional $5 \cdot 10^3$ episodes. We associate centers of clusters using the estimated test probabilities at the last two time steps, and recover the connection maps for all MDPs, as long as there is no inconsistency.
We report the experimental results in Figure \ref{fig:psr_clustering}. 

We first observe how the level of separation $\delta$ between MDPs impacts trajectory separation, i.e., belief state vs true label (left). Recall that this separation property is the key for clustering trajectories. We then examine the performance of Algorithm \ref{algo:clustering_trajectories_pseudo} (see full Algorithm \ref{algo:clustering_trajectories}) for various levels of separation. 
%Empirically, it succeeds to get a good initialization of an LMDP model when we have sufficient separation. As the separation level decreases, the algorithm fails to recover good enough LMDP parameters (Right). There are two possible sources of the failure: (1) the belief state is far from the true context, and hence we require a longer time-horizon to infer true contexts (Left), and (2) the similarity between MDPs drops the $M^{th}$ singular value of $P_{\mT, \mH}$ (Middle), which requires more samples for the estimation of PSR parameters. Empirically, we found that the failure in the spectral learning step is more critical. 
Empirically, it succeeds to get a good initialization of an LMDP model when we have sufficient separation. As the separation level decreases, the algorithm fails to recover good enough LMDP parameters (Right). There are two possible sources of the failure: (1) the belief state is far from the true context, and (2) the similarity between MDPs drops the $M^{th}$ singular value of $P_{\mT, \mH}$ (Middle). We can compensate for (1) if we have a longer time-horizon to infer true contexts, as in the leftmost graph. For (2), if the $M^{th}$ singular value of $P_{\mT, \mH}$ drops, we require more samples for the estimation of PSR parameters. In our experiments, as we decreased $\delta$ we found that failure in the spectral learning step was the more significant of the two.

\section{Conclusion and Future Work}
We establish the first theoretical results in RL with latent contexts. We first have established a lower bound for general LMDPs, showing that necessary number of episodes can be exponential in the number of contexts. Then, we find that a sample-efficient RL is possible when true contexts of interacting MDPs are revealed in hindsight. Building off on this observation, we proposed a sample-efficient algorithm for a class well-separated LMDP instances with additional technical assumptions. We also evaluated the proposed algorithm on synthetic data. The proposed EM and L-UCRL algorithm performed very well once initialized in random instances, whereas the spectral learning and clustering method was sensitive to the amount of separation between different contexts. 

There are several interesting research venues in continuation of this work. An interesting direction is to study RL algorithms for LMDPs with no underlying assumptions. Although our lower bound suggests such an algorithm necessarily suffers an exponential dependence in the number of contexts, if this number is small, such dependence might be acceptable on an algorithm designer. Specifically, we conjecture the following:
\begin{question}[Upper Bound]
    \label{question:upper_bound}
    Can we learn the $\epsilon$-optimal policy for LMDPs with sample complexity at most $poly \left((HSA)^M,\epsilon^{-1} \right)$ without any assumptions?
\end{question}
In Appendix \ref{appendix:upper_bound}, we show that the exponential dependence in $M$ is sufficient when MDPs are fully deterministic. The case for general LMDPs is an interesting open question.
Furthermore, a needed empirical advancement is to design efficient ways to learn the set of sufficient histories/tests for learning predictive state representation of LMDPs. This can dramatically improve the performance of our algorithms when a sufficiently good initial model needs to be learned.

\bibliographystyle{abbrv}
\bibliography{main}

\begin{appendices}

\section{Guarantees for Latent Deterministic MDPs}
In this section, we provide lower and upper bound for LMDP instances with deterministic MDPs. The lower bound for latent deterministic MDPs implies the lower bound for general instances of LMDPs, proving Theorem \ref{theorem:lower_bound}. The upper bound for latent deterministic MDPs supports our conjecture on the sample complexity of learning general LMDPs (Open Question \ref{question:upper_bound}), and can be of independent interest. 

\subsection{Lower Bound (Theorem \ref{theorem:lower_bound})}
\label{appendix:lower_bound}
\begin{figure}[t]
    \centering
    \includegraphics[width=0.7\textwidth]{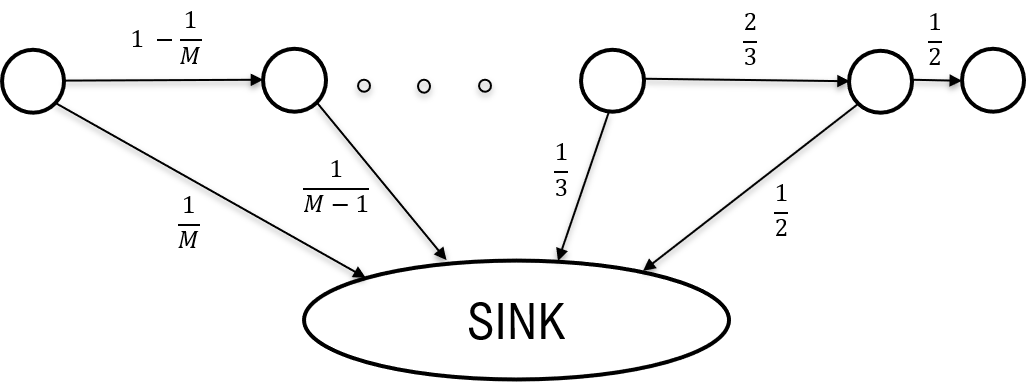}
    \caption{External view of the system dynamics with wrong action sequences without context information. Arrows indicate transition probabilities of a surrogate Markov chain that represents the external view.}
    \label{fig:lower_bound}
\end{figure}

We consider the following constructions with $M$ deterministic MDPs and $H = M$, $S = M + 1$ with $A$ actions: 
\begin{enumerate}
    \item At the start of episode, one of $M$-MDPs in $\{\mM_1, \mM_2, ..., \mM_M\}$ are chosen with probability $1/M$. 
    \item At each time step, each MDP either goes to the next state or go to the SINK-state depending on the action chosen at the time step. Once we fall into the SINK state, we keep staying in the SINK state throughout the episode without any rewards. 
    
    \item Rewards of all state-action pairs are all 0 except at time step $t = M$ with the right action choice $a_M$ only in the first MDP $\mM_1$.
    
    \item At time step $t = 1$, there are three state-transition possibilities:
    \begin{itemize}
        \item $\mM_1$: For all actions $a \in \mA$ except $a_1$, we go to the SINK state. For the action $a_1$, we go to the next state.
        \item $\mM_M$: For all actions $a \in \mA$ except $a_1$, we go to the {\it next} state. For the action $a_1$, we go to the SINK state.
        \item $\mM_2$, ..., $\mM_{M-1}$: For all actions $a \in \mA$, we go to the next state.
    \end{itemize}
    
    \item At time step $t = 2$, we again have three cases but now $\mM_1$ and $\mM_K$ would look the same:
    \begin{itemize}
        \item $\mM_1, \mM_M$: For all actions $a \in \mA$ except $a_2$, we go to the SINK state. For the action $a_2$, we go to the next state.
        \item $\mM_{M-1}$: For all actions $a \in \mA$ except $a_2$, we go to the next state. For the action $a_2$, we go to the SINK state.
        \item $\mM_2$, ..., $\mM_{M-2}$: For all actions $a \in \mA$, we go to the next state.
    \end{itemize}
    
    ...
    
    \item At time step $t = M-1$, 
    \begin{itemize}
        \item $\mM_1$, $\mM_3, ..., \mM_M$: For all actions $a \in \mA$ except $a_{M-1}$, we go to the SINK state. For the action $a_{M-1}$, we go to the next state.
        \item $\mM_2$: For all actions $a \in \mA$ except $a_{M-1}$, we go to the next state. For the action $a_{M-1}$, we go to the SINK state.
    \end{itemize}
    
    \item At time step $t = M$, there are two possibilities of getting rewards:
    \begin{itemize}
        \item $\mM_1$: For the action $a_{M} \in \mA$, we get reward 1. For all other actions, we get no reward.
        \item $\mM_2, ..., \mM_M$: For all actions $a \in \mA$, we get no reward.
    \end{itemize}
\end{enumerate}

Note that the right action sequence is $a^* = (a_1, a_2, ..., a_M)$. However, without the information on true contexts, the system dynamics with any wrong action sequence among the $A^M-1$ wrong sequences, is exactly viewed as Figure \ref{fig:lower_bound} with zero rewards, i.e.,
\begin{align*}
    \PP(s_{1:H}, r_{1:H} \| \textbf{do } a^{(1)}_{1:H}) = \PP(s_{1:H}, r_{1:H} \| \textbf{do } a^{(2)}_{1:H}),
\end{align*}
for any two wrong action sequences $a^{(1)} = a^{(1)}_{1:H}$, $ a^{(2)} = a^{(2)}_{1:H}$ such that $a^{(1)}, a^{(2)} \neq a^*$. The probability distribution of observation sequences with any wrong action sequence is the same as the distribution of sequences generated by the surrogate Markov chain in Figure \ref{fig:lower_bound}. Therefore, we cannot gain any information from executing wrong action sequences besides of eliminating this wrong action sequence. Note that there are $A^M$ possible choice of action sequences. Hence the problem is reduced to find one specific sequence among $A^M$ possibilities without any other information on the correct action sequence. It leads to the conclusion that before we play most of $A^M$ action sequences, we cannot find the correct one.

\begin{figure}[h]
    \centering
    \includegraphics[width=0.7\textwidth]{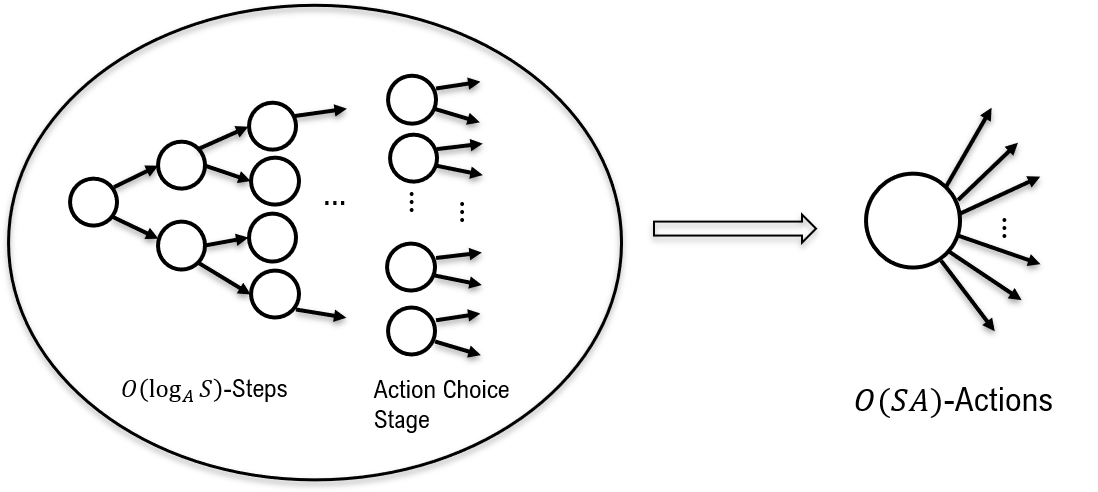}
    \caption{Effectively amplifying the number of actions}
    \label{fig:SA_lower_bound}
\end{figure}

Now the above argument can be easily extended to get a lower bound of $\Omega\left( \left(\frac{SA}{M} \right)^M \right)$, by effectively amplifying the number of actions up to $O \left(\frac{SA}{M} \right)$. That is, we can amplify the effective number of actions by considering a big state consisting of a tree of states with $O(\log_A S)$-depth (see Figure \ref{fig:SA_lower_bound}). Since we have such $M$ big states, total number of states is $O(MS)$ in our lower bound example with amplified number of actions, or conversely if the number of total states is $S$, then effective number of actions is $O (SA/M)$ per each big state. This gives the $\Omega \left((SA/M)^M \right)$ lower bounds.

Finally, we can easily include $\epsilon$-additive approximation factor to get the lower bound $\Omega(A^M / \epsilon^{2})$ by properly adjusting the reward distribution at the last time step $t=H$ as the following: 
\begin{itemize}
    \item $\mM_1$: For the action $a_{M} \in \mA$ at time step $H=M$, we get a reward from Bernoulli distribution $Ber(1/2 + \epsilon)$. For all other actions, we get a reward from Bernoulli distribution $Ber(1/2)$.
    \item $\mM_2, ..., \mM_M$: For all actions $a \in \mA$, we get a reward from Bernoulli distribution $Ber(1/2)$.
\end{itemize}
Then the distribution of the final reward with $a^*$ is $1/M \cdot Ber(1/2 + \epsilon) + (M-1)/M \cdot Ber(1/2)$, whereas the distribution of all other action sequences is $Ber(1/2)$. Similarly to the above, for any wrong action sequence, the probability of observations is identical. Hence, identifying the optimal action sequence $a^*$ among all $A^M$ action sequences requires $\Omega( A^M/\epsilon^{2})$ trials, i.e., to identify an $\epsilon$ optimal arm among $A^M$ actions.

\subsection{Upper Bound for Learning Deterministic LMDPs}
\label{appendix:upper_bound}
Although the lower bound is exponential in $M$, it would not be a disaster for instances with small number of contexts. In this appendix, we briefly discuss whether the exponential dependence in $M$ is sufficient for learning deterministic MDPs with latent contexts. If that is the case, then we can exclude the possibility of $\Omega(A^H)$ lower bound for deterministic LMDP instances. Intuitively, the exponential dependence in time-horizon is unlikely in LMDPs for the following reason: under certain regularity assumptions, if the time-horizon is extremely long $H \gg S^2 A$ such that every state-action pair can be visited sufficiently many times, then each trajectory can be easily clustered and the recovery of the model is straight-forward. The following theorem shows that we do {\it not} suffer from $\Omega(A^H)$ sample complexity for deterministic LMDPs:
\begin{theorem}[Upper Bound for Deterministic LMDPs]
\label{theorem:upper_bound}
    For any LMDP with a set of deterministic MDPs, there exists an algorithm such that it finds the optimal policy after at most $O \left(H (SA)^{M} \cdot poly(M) \log M \right)$ episodes.
\end{theorem}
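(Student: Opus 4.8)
The plan is to learn the entire (deterministic) model up to \emph{planning equivalence} and then plan offline, so that the only quantity we must control is the number of interaction episodes. First I would observe that for deterministic transitions the belief state collapses to a purely combinatorial object: after executing an open-loop action prefix $a_{1:t}$ and observing the resulting state trajectory, the set $C\subseteq[M]$ of contexts \emph{consistent} with what was seen is determined, and it can only shrink as $t$ grows. Consequently an optimal history-dependent policy is a deterministic decision tree that branches on observed next-states, and it can be computed offline by dynamic programming over this belief tree once the model is known; the planning step consumes no episodes, so it does not enter the sample count. A key simplification is that we never need to disambiguate two contexts whose transition functions agree under \emph{every} action sequence: no policy can ever separate them, and their contribution to any value is governed only by the sum of their rewards, which is exactly what the mixture reward observed at a merged node reveals. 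Hence it suffices to recover the model up to this equivalence, which sidesteps the apparent difficulty of attributing rewards to individual contexts while their beliefs remain merged.

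\textbf{Exploration via product configurations.} Next I would organize exploration around the \emph{joint configuration} $c=(s^{(1)},\dots,s^{(M)})\in\mS^M$ recording the current state of every context under a common action sequence. By determinism each action maps a reachable configuration to a unique successor, so the space of action sequences, although a tree of depth $H$ with branching $A$, visits at most $S^M$ \emph{distinct} configurations. The plan is to run a depth-first search over configurations that uses the already-learned deterministic dynamics to recognize when a probe would revisit a previously explored configuration, thereby pruning the naive $A^H$ blow-up down to one exploration per reachable configuration. Because the environment offers no mid-episode reset, probing action $a$ at a configuration reached by prefix $p$ costs one full episode (replay $p$, then play $a$ and a fixed completion), i.e.\ a rollout of length at most $H$. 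Trying all $A$ actions at each of the at most $S^M$ configurations, each probe costing an $O(H)$-length episode, already accounts for the $H\,(SA)^M$ scaling, since $S^M A\le (SA)^M$.

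\textbf{Handling stochastic rewards and confidence.} Rewards are stochastic, and at a merged configuration each consistent context is realized only with probability proportional to $1/M$. I would therefore repeat each probe $\mathrm{poly}(M)\log M$ times so that, with high probability, (i) every context present at the node is observed and its induced successor configuration recorded, and (ii) the mixture reward at the node is estimated accurately enough that the offline belief-DP selects the optimal action. A union bound over the at most $(SA)^M$ probes fixes the repetition count and supplies the $\mathrm{poly}(M)\log M$ factor; multiplying the per-node budget by the number of nodes and the $O(H)$ replay length yields the claimed $O\!\left(H(SA)^M\,\mathrm{poly}(M)\log M\right)$ episodes, after which the learned model (up to planning equivalence) lets us output the optimal policy.

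\textbf{Main obstacle.} The crux is the counting and bookkeeping step: showing that the systematic search really touches only $(SA)^M$ essentially-distinct situations rather than the $A^H$ action sequences, which requires arguing that configurations, not action prefixes, are the correct state of the search and that learned determinism lets us safely memoize and prune revisited configurations. The secondary subtlety is certifying that recovering the model only up to planning equivalence---mixture rewards at merged nodes and unresolved transition-identical contexts---is nonetheless enough for the offline belief-state dynamic program to return a genuinely optimal policy.
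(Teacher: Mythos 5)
Your proposal follows the same overall strategy as the paper's proof: exhaustively explore all reachable ``situations'' by replaying action prefixes, repeat each probe $O(M\log M)$ times so that a coupon-collector argument reveals every consistent context's transition at that node, and then plan offline on the learned model, with the whole sample count driven by (number of distinct situations) $\times$ (probes per situation). The difference is the bookkeeping object. You memoize on the joint configuration $(s^{(1)},\dots,s^{(M)})\in\mS^M$, giving at most $S^M A$ probes; the paper instead indexes the search by pairs (set of \emph{distinguishing observations} $C$, current state $s$), observing that each new branching observation eliminates at least one context so $|C|\le M-1$, which bounds the count by $\binom{MSA}{M-1}\cdot SA$ per time step and yields the same $(SA)^M\,poly(M)$ order after looping over $H$ steps. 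Both counts land in the claimed bound (yours is in fact tighter in $H$, since you charge one episode per probe rather than re-enumerating per time step).

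The one place your version needs the paper's device rather than merely benefiting from it: the joint configuration, as an $M$-tuple indexed by context identity, is not observable and not recoverable --- the agent sees one context's trajectory per episode and cannot attach a label $m$ to it, so it can only maintain a partition of contexts into groups that have produced identical observation histories, each group sitting at a known state. Your ``planning equivalence'' and ``merged node'' remarks show you are aware of this, but the memoization key must then be this observable partition-with-histories rather than a point of $\mS^M$; two distinct partitions can project to the same state-multiset yet branch differently in the future, so collapsing them would break the pruning argument. Once the key is taken to be the observation-history/state pair (exactly the paper's $(C,s)$), the counting goes through because the history component can contain at most $M-1$ distinguishing events, and the rest of your argument (coupon collection, mixture rewards at merged nodes sufficing for the belief-state DP, offline planning costing no episodes) matches the paper's.
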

The algorithm for the upper bound is implemented in Algorithm \ref{algo:deterministic_mdp}. While the upper bound for deterministic LMDPs does not imply the upper bound for general stochastic LMDPs, we have shown that the exponential lower bound higher than $O((SA)^M)$ cannot be obtained via deterministic examples. We leave it as future work to study the fundamental limits of general instances of LMDPs, and in particular, whether the problem is learnable with $\tilde{O}((SA)^M)$ sample complexity, which can be promising when the number of contexts is small enough ({\it e.g.,} $M = 2, 3$).

\begin{algorithm}[t]
    \caption{Exploring Deterministic MDPs with Latent Contexts}
    \label{algo:deterministic_mdp}
    
    {\bf Initialization:} For $O(M \log M)$ episodes, observe the possible initial states (discard all other information). If there is only one initial state $s_1$ for $O(M \log M)$ episodes, then set $C_1 = \{(\phi, s_1)\}$. Otherwise, let $C_1$ be a set of all observed initial states with $\{ (\{(init, s_1)\}, s_1), \forall \text{$s_1$ observed}\}$.

    \begin{algorithmic}[1]
        \FOR{$t = 1, ..., H$}
            \STATE Let $C_{t+1} = \{\}$.
            \FOR{each $(C,s) \in C_{t}$}
                \FOR {each $a \in \mA$}
                    \STATE Let $O = \{\}$.
                    \STATE 1. Find any action sequence $a_1, ..., a_{t-1}$ that can result in a state $s$ with distinguishing observations $C$.
                    \STATE 2. For $O(M \log M)$ episodes, run the action sequence $a_1, ..., a_{t-1}$ (execute any policy for the remaining time steps).
                    \STATE 2.1. If we reached the state $s$ with distinguishing observations $C$, then run action $a$, and get a new observation of next state and reward $(s',r)$.
                    \STATE 2.2. Update $O \leftarrow O \cup \{(s', r)\}$ and record the probability $p$ of observing $(s', r)$ conditioned on $C$ and $s$.
                    
                    \IF{$|O| = 1$} 
                        \STATE 3.1. With the only element $(s', r) \in O$, update $C_{t+1} \leftarrow C_{t+1} \cup \{(C, s')\}$.
                        \STATE Record that there is a path from $(C, s)$ to $(C, s')$ by taking action $a$ with a reward $r$.
                    \ELSE
                        \STATE 3.2. For all $(s', r) \in O$, let $C' = C \cup \{(s,a,s',r)\}$. Update $C_{t+1} \leftarrow C_{t+1} \cup \{(C', s')\}$.
                        \STATE Record that there is a path from $(C, s)$ to $(C', s')$ by taking action $a$ with a reward $r$ and a recorded probability $p$.
                    \ENDIF
                \ENDFOR
            \ENDFOR
        \ENDFOR
    \end{algorithmic}
\end{algorithm}

\subsubsection{Proof of Theorem \ref{theorem:upper_bound}}
Algorithm \ref{algo:deterministic_mdp} is essentially a pure exploration algorithm which searches over all possible states. After the pure exploration phase, we model the entire system as one large MDP with $poly(M) \cdot (SA)^M$ states. The optimal policy can be found by solving this large MDP. 

The core idea behind the algorithm is that since the system is deterministic, whenever there exist more than one possibility of observations (a pair of reward and next state) from the same state and action, it implies that at least one MDP shows a different behavior from other MDPs for the state-action pair. Therefore, each observation can be considered as a new distinguishing observation that can separate at least one MDPs from other MDPs. Afterwards, we can consider a sub-problem of exploration in the remaining time-steps given the distinguishing observation in history and the current state. The argument can be similarly applied in sub-problems, which leads to the concept of conditioning on a set of distinguishing observations and the current state. 

On the other hand, if an action results in the same observation for all MDPs given a set of distinguishing observations and a state, then we would only see one possibility. In this case, this state-action pair does not reveal any information on the context, and can be ignored for future decision making processes. 

Algorithm \ref{algo:deterministic_mdp} implements the above principles: for each time step $t$, we construct a set of all reachable states with a set of distinguishing observations in histories. In order to find out all possibilities, for each observation set, state, and action we first find the action sequence by which we can reach the desired state (with target distinguishing observation set). Since all MDPs are deterministic, the existence of path means at least one MDP always results in the desired state with the action sequence. The sequence can be found by the induction hypothesis that we are given all possible transitions and observations in previous time steps $1, ..., t-1$. By the coupon-collecting argument, if we try the same action sequence for $O(M \log M)$ episodes, we can see all different transitions that all different MDPs resulting in the target observation set and state can give. By doing this for all reachable states and observation sets, we can find out all possibilities that can happen at the time step $t$. The procedure repeats until $t = H$ and eventually we can find all possible outcomes from all action sequences. 

An important question is how many different possibilities we would encounter in the procedure. Note that as we find out a new distinguishing observation, we cut out the possibility of at least one MDP conditioning on that new observation. Since there are only $M$ possible MDPs, the size of distinguishable observation sets cannot be larger than $M - 1$. Based on this observation, we can see that the number of all possible combinations of the observation set and state is less than ${MSA \choose M-1} \cdot S$. Note that the $MSA$ is the total number of possible state-action-observation $(s,a,s',r)$ pairs. Hence in each time step, the iteration complexity does not exceed ${MSA \choose K-1} \cdot S A$ times the number of episodes for each possible state and observation set. Since we loop this procedure for $H$ steps, the total number of episodes is bounded by $O \left(HSA {MSA \choose M-1} \cdot M \log M \right)$, which results in the sample complexity of $O \left(H (SA)^M \cdot poly(M) \log M \right)$.

\section{Analysis of L-UCRL when True Contexts are Revealed}
In this section, we prove the optimism lemma (Lemma \ref{lemma:optimistic_model}) and regret guarantee (Theorem \ref{theorem:scenario1_regret_bound}) achieved by Algorithm \ref{algorithm:lucrl} when true contexts are given in hindsight.

\subsection{Analysis of Optimism in Alpha-Vectors}
\label{appendix:description_optimistic_alpha}
We start with an important observation that the upper confidence bound (UCB) type algorithm can be implemented in the belief-state space. Even though the exact planning in a belief-state space is not implementable, we can still discuss how the value iteration is performed in partially observable domains. Let $h$ be an entire history at time $t$, and denote $b(h)$ be a belief state over $M$ MDPs corresponding to a history $h$. The value iteration with a (history-dependent) policy $\pi$ is given as
\begin{align*}
    Q_t^\pi (h,s,a) = b(h)^\top \bar{R} (s,a) + \Exs_{s',r | h,s,a} [V_{t+1}^\pi (h', s')],
\end{align*}
for $t = 1,..., H$, where $h' = ha(rs')$ is a concatenated history. Here $Q_t^\pi (h,s,a)$ and $V_t^\pi (h,s)$ are state-action-value and state-value function at time step $t$ respectively given a history $h$ and a policy $\pi$. $\bar{R}(s,a) \in \mathbb{R}^M$ is a vector where value of $m^{th}$ coordinate $\bar{R}_m(s,a)$ is an expected immediate reward at $(s,a)$ in $m^{th}$ MDP, {\it i.e.,} $\bar{R}_m(s,a) = \Exs_{r \sim R_m (r | s,a)} [r]$. In case there exists a hidden reward $R_m^{hid} (s,a)$, we define $\bar{R}_m(s,a) = \Exs_{r \sim R_m (r | s,a)} [r] + R_m^{hid} (s,a)$. At the end of episode, we set $V_{H+1}^\pi = 0$. We first need the following lemma on the policy evaluation procedure of a POMDP.
\begin{lemma}
    For any history $h$ at time $t$, the value function for a policy $\pi$ can be written as
    \begin{align}
        V_t^\pi (h, s) = b(h)^\top \alpha_{t, s}^{h, \pi},
    \end{align}
    for some $\alpha_{t,s}^{h,\pi} \in \mR^M$ uniquely decided by $t,s, h$ and $\pi$. 
\end{lemma}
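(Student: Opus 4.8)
The plan is to prove the claim by backward induction on the time step $t$, from $t=H+1$ down to $t=1$, with inductive hypothesis that $V_{t+1}^\pi(h',s') = b(h')^\top \alpha_{t+1,s'}^{h',\pi}$ for every history $h'$ and state $s'$. The base case is immediate: since $V_{H+1}^\pi \equiv 0$, I take $\alpha_{H+1,s}^{h,\pi} = \mathbf{0} \in \mR^M$, so $V_{H+1}^\pi(h,s) = b(h)^\top \mathbf{0}$.

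For the inductive step, fix a history $h$ at time $t$, a state $s$, and an action $a$, and expand the value-iteration update. The immediate-reward term $b(h)^\top \bar{R}(s,a)$ is already linear in $b(h)$, so the work is in the look-ahead term $\Exs_{s',r|h,s,a}[V_{t+1}^\pi(h',s')]$, where $h' = ha(rs')$ and the expectation is over the observation $(s',r)$ drawn from the belief-averaged law $\PP(s',r|h,s,a) = \sum_{m} b(h)(m)\, \PP_m(s',r|s,a)$. Applying the inductive hypothesis together with the belief-update recursion $b(h')(m) = b(h)(m)\PP_m(s',r|s,a)/\PP(s',r|h,s,a)$, the normalizer $\PP(s',r|h,s,a)$ in the belief update cancels exactly against the same factor coming from the expectation, giving
\begin{align}
\Exs_{s',r|h,s,a}[V_{t+1}^\pi(h',s')] = \sum_{s',r}\sum_m b(h)(m)\,\PP_m(s',r|s,a)\,\alpha_{t+1,s'}^{h',\pi}(m),\nonumber
\end{align}
which is again linear in $b(h)$. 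Collecting terms yields $Q_t^\pi(h,s,a) = b(h)^\top \alpha_{t,s,a}^{h,\pi}$ with $\alpha_{t,s,a}^{h,\pi}(m) = \bar{R}_m(s,a) + \sum_{s',r}\PP_m(s',r|s,a)\,\alpha_{t+1,s'}^{h',\pi}(m)$.

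It remains to pass from $Q$ to $V$. The value is obtained by acting according to $\pi$, i.e.\ $V_t^\pi(h,s) = \sum_a \pi(a|h,s)\, Q_t^\pi(h,s,a)$. The crucial point is that $\pi$ maps the observable history $(h,s)$ to a distribution over actions and therefore does not depend on the latent context $m$; hence the weights $\pi(a|h,s)$ can be pulled outside the inner product, yielding $V_t^\pi(h,s) = b(h)^\top \alpha_{t,s}^{h,\pi}$ with $\alpha_{t,s}^{h,\pi} = \sum_a \pi(a|h,s)\,\alpha_{t,s,a}^{h,\pi}$. This closes the induction. Uniqueness holds because the recursion determines $\alpha_{t,s}^{h,\pi}$ explicitly from $(t,s,h,\pi)$; concretely, $\alpha_{t,s}^{h,\pi}(m)$ equals the expected reward-to-go from $(s,t)$ under $\pi$ computed within MDP $\mM_m$ conditioned on $h$, a well-defined quantity.

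I expect the only genuine conceptual step to be the cancellation of the belief-normalization factor against the observation likelihood in the look-ahead term; this is precisely what linearizes the Bellman backup in belief coordinates and is the mechanism behind the alpha-vector representation. A secondary point requiring care is that the argument relies on the policy's randomization and selected action being measurable with respect to the observable history alone, so that no context-dependent quantity multiplies $b(h)$ nonlinearly; this is guaranteed by the definition of $\pi$ as a map from histories to actions.
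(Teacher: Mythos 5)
Your proof is correct and follows essentially the same route as the paper: backward induction defining the alpha vectors recursively from $\alpha_{H+1,s}^{h,\pi}=0$, exploiting the cancellation of the belief-update normalizer against the observation likelihood in the Bellman backup, and averaging over actions using that $\pi$ depends only on the observable history. Your intermediate vectors $\alpha_{t,s,a}^{h,\pi}$ correspond exactly to the paper's $\alpha_{t,s}^{h,a,*,\pi}+\sum_{s',r}\alpha_{t,s}^{h,a,(s',r),\pi}$ decomposition.
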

\begin{proof}
    We will show that the value of $\alpha_{t,s}^{h,\pi}$ is decided only by a history and policy, and is not affected by the history to belief-state mapping. On the other hand, the Bayesian update for $h'$ is given by
    \begin{align*}
        b_m (ha(rs')) = \frac{b_m (h) T_m (s'|s,a) R_m(r|s,a)}{\sum_m b_m(h) T_m (s'|s,a) R_m(r|s,a)} = \frac{b_m (h) \PP_m (s',r|s,a) }{\PP(s',r | h,s,a)}.
    \end{align*}
    Thus, the value iteration for policy evaluation in LMDPs can be written as:
    \begin{align}
        Q_t^\pi(h,s,a) &= b(h)^\top \bar{R}(s, a) + \sum_{(s',r)} \sum_{m} b_m(h) \alpha_{t+1, s'}^{ha(rs'),\pi}(m) \PP_m(s',r|s,a), \nonumber \\
        V_t^\pi (h,s) &= \sum_{a} \pi(a|h) Q_t^\pi(h,s, a).
        \label{eq:belief_q_update}
    \end{align}
    
    Let us explain how the alpha vectors~\cite{smallwood1973optimal} can be constructed recursively from the time step $H+1$. Note that $V_{H+1} (h,s) = 0$ for any $h$ and $s$, therefore $\alpha_{H+1,s}^{h,\pi} = 0$. Then we can define the set of alpha vectors recursively such that 
    \begin{align}
        \alpha_{t, s}^{h, a, *, \pi} (m) &= \bar{R}_m(s,a), \nonumber \\
        \alpha_{t, s}^{h, a, (s',r), \pi} (m) &= \PP_m(s',r | s,a) \alpha_{t+1, s'}^{ha(s'r), \pi} (m) \qquad \forall (s,a,r,s'),
        \label{eq:alpha_update_normal}
    \end{align}
    Finally, the alpha vector for the value with respect to $h$ is constructed as
    \begin{align*}
        \alpha_{t, s}^{h, \pi} (m) = \sum_{a} \pi(a|h) \left( \alpha_{t, s}^{h, a, *, \pi} (m) + \sum_{s',r} \alpha_{t,s}^{h,a,(s',r), \pi} (m) \right).
    \end{align*}
    Note that in the construction of alpha vectors, the mapping from history to belief-state is not involved, and the value function can be represented as $V_t^\pi(h,s) = b(h)^\top \alpha_{t,s}^{h,\pi}$.
\end{proof}

Now consider the optimistic model defined in Lemma \ref{lemma:optimistic_model}. For the optimistic model, the intermediate alpha vectors are constructed with the following recursive equation:
\begin{align}
    \tilde{\alpha}_{t, s}^{h, a, *, \pi} (m) &= \Exs_{r \sim \tilde{R}^{obs}_m (r|s,a) } [r] + \tilde{R}_m^{hid} (s,a), \nonumber \\
    \tilde{\alpha}_{t, s}^{h, a, (s',r), \pi} (m) &= \tilde{T}_m(s' | s,a) \tilde{R}^{obs}_m (r | s,a) \tilde{\alpha}_{t+1, s'}^{ha(s'r), \pi} (m) \qquad \forall (s,a,r,s'),
    \label{eq:alpha_update_optimistic}
\end{align}
From the constructions of alpha vectors above, we can show the optimism in alpha vectors: 
\begin{lemma}
    \label{lemma:optimism_alpha}
    Let $\alpha_{t,s}^{h,\pi}$ and $\tilde{\alpha}_{t,s}^{h,\pi}$ be alpha vectors constructed with $\mM^*$ and $\tmM$ respectively. Then for all $t,s, h, \pi$, we have
    \begin{align*}
        \tilde{\alpha}_{t,s}^{h,\pi} \succeq \alpha_{t,s}^{h,\pi}.
    \end{align*}
\end{lemma}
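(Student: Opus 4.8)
The plan is to prove the coordinatewise domination $\tilde\alpha_{t,s}^{h,\pi} \succeq \alpha_{t,s}^{h,\pi}$ by backward induction on $t$, running from $t = H+1$ down to $t=1$ and establishing the inequality for every history $h$, state $s$, and coordinate $m$ simultaneously. The base case $t = H+1$ is immediate since both alpha vectors vanish. For the inductive step I would fix a coordinate $m$, a history $h$, and a state $s$, and first prove the per-action domination, namely that for each fixed action $a$ the optimistic quantity $\tilde\alpha_{t,s}^{h,a,*,\pi}(m) + \sum_{s',r}\tilde\alpha_{t,s}^{h,a,(s',r),\pi}(m)$ is at least its true counterpart; since $\alpha_{t,s}^{h,\pi}$ is obtained by averaging over $a$ with the nonnegative weights $\pi(a\mid h)$, the per-action inequality yields the claim after summation.

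For the per-action difference I would compare term by term using the explicit recursions \eqref{eq:alpha_update_normal} and \eqref{eq:alpha_update_optimistic}. Writing $V'(s',r) = \alpha_{t+1,s'}^{ha(s'r),\pi}(m)$ and $\tilde V'(s',r) = \tilde\alpha_{t+1,s'}^{ha(s'r),\pi}(m)$, the difference splits into an immediate-reward discrepancy $\Exs_{r\sim\hat R_m}[r] - \Exs_{r\sim R_m}[r]$, the hidden bonus $\tilde R_m^{hid}(s,a)$, and a continuation discrepancy $\sum_{s',r}\hat T_m\hat R_m\,\tilde V' - \sum_{s',r} T_m R_m\,V'$. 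The immediate-reward discrepancy is bounded below by $-\|(\hat R_m - R_m)(r\mid s,a)\|_1 \ge -\sqrt{c_R/N_m(s,a)}$ because $r\in\{0,1\}$. For the continuation term I would add and subtract $\hat T_m \hat R_m V'$: the resulting $\sum \hat T_m\hat R_m(\tilde V' - V')$ is nonnegative by the induction hypothesis, while the remaining $\sum(\hat T_m\hat R_m - T_m R_m)V'$ is at least $-H\,\|(\hat\PP_m - \PP_m)(s',r\mid s,a)\|_1$ since the true continuation value $V'$ lies in $[0,H]$.

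The crux is then a calibration check that the hidden bonus outweighs the accumulated estimation error. I would bound the joint $\ell_1$ error of the product measure by the marginal errors, $\|(\hat\PP_m-\PP_m)(s',r\mid s,a)\|_1 \le \|(\hat T_m - T_m)(s'\mid s,a)\|_1 + \|(\hat R_m - R_m)(r\mid s,a)\|_1 \le \sqrt{c_T/N_m(s,a)} + \sqrt{c_R/N_m(s,a)}$, using that $\hat T_m$ and $R_m$ each sum to one. Collecting the three pieces, the per-action difference is at least $\tilde R_m^{hid}(s,a) - \sqrt{c_R/N_m} - H(\sqrt{c_R/N_m} + \sqrt{c_T/N_m})$, so it suffices to verify $H\sqrt{5(c_R+c_T)/N_m} \ge \sqrt{c_R/N_m} + H\sqrt{c_R/N_m} + H\sqrt{c_T/N_m}$ whenever the bonus is unsaturated; after dividing by $H/\sqrt{N_m}$ and using $1\le H$ this reduces to $\sqrt{5(c_R+c_T)} \ge 2\sqrt{c_R}+\sqrt{c_T}$, which follows by squaring and AM--GM ($c_R + 4c_T \ge 4\sqrt{c_R c_T}$). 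When the bonus saturates at $H$, optimism is immediate because the entire true per-action value is at most $H$ while the optimistic value already contains the $H$ bonus plus nonnegative terms. I expect this final calibration --- in particular pinning down that the constant $5$ under the square root is exactly what closes the product-measure decomposition --- to be the only delicate point; everything else is bookkeeping driven by $\mM^* \in \mC$ and the induction hypothesis.
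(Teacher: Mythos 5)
Your proposal is correct and follows essentially the same route as the paper's proof: backward induction on $t$, reduction to a per-action comparison, splitting the difference into the immediate-reward discrepancy, the hidden bonus, and the continuation discrepancy, and then controlling the latter via the induction hypothesis together with $\|\alpha_{t,s}^{h,\pi}\|_\infty \le H$ and the product-measure bound $\|\hat T_m\hat R_m - T_mR_m\|_1 \le \|\hat T_m - T_m\|_1 + \|\hat R_m - R_m\|_1$. The only difference is that you spell out the final calibration (the AM--GM check that $\sqrt{5(c_R+c_T)} \ge 2\sqrt{c_R}+\sqrt{c_T}$ and the saturated-bonus case), which the paper leaves implicit.
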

The lemma implies that if the history is mapped to the same belief states in both models, then we also have the optimism in value functions. Note that in general, different models will lead each history to different belief states. At the initial time-step, however, we start from similar belief states, and we can claim Lemma \ref{lemma:optimistic_model}. The remaining proof of Lemma \ref{lemma:optimistic_model} is given in Section \ref{appendix:lemma:optimistic_model}.

\subsection{Proof of Lemma \ref{lemma:optimism_alpha}}
We show this by mathematical induction moving reverse in time from $t = H$. The inequality is trivial when $t = H+1$ since all $\alpha_{H+1, s}^{h,\pi} = \tilde{\alpha}_{H+1, s}^{h,\pi} = 0$ for any $h, \pi, s$. 
Now we investigate $\alpha_{t,s}^{h,\pi} (m)$. It is sufficient to show that for all $a \in \mA$,
\begin{align*}
    \alpha_{t,s}^{h,a,*,\pi} (m) + \sum_{s',r} \alpha_{t,s}^{h,a,(s',r),\pi} (m) \le \tilde{\alpha}_{t,s}^{h,a,*,\pi} (m) + \sum_{s',r} \tilde{\alpha}_{t,s}^{h,a,(s',r),\pi} (m).
\end{align*}

Recall equations for alpha vectors \eqref{eq:alpha_update_normal}, \eqref{eq:alpha_update_optimistic}. 
\begin{align*}
    \tilde{\alpha}_{t,s}^{h,a,*,\pi} (m) &+ \sum_{s',r} \tilde{\alpha}_{t,s}^{h,a,(s',r),\pi} (m) - \alpha_{t,s}^{h,a,*,\pi} (m) - \sum_{s',r} \alpha_{t,s}^{h,a,(s',r),\pi} (m) \\
    &\ge \left( \Exs_{r \sim \tilde{R}_m^{obs} (r|s,a)}[r] - \Exs_{r \sim R_m^{obs} (r|s,a)}[r] \right) + \tilde{R}_m^{hid}(s,a) \\
    &\quad + \sum_{s',r} \left(\tilde{T}_m(s'|s,a) \tilde{R}_m^{obs} (r|s,a) \tilde{\alpha}_{t+1, s'}^{ha(s',r),\pi} (m) - T_m(s'|s,a) R_m(r|s,a) \alpha_{t+1,s'}^{ha(s',r),\pi} (m)\right) \\
    &\ge \left( \Exs_{r \sim \tilde{R}_m^{obs} (r|s,a)}[r] - \Exs_{r \sim R_m (r|s,a)}[r] \right) + H \min\left(1, \sqrt{5 (c_R + c_T)/N_m(s,a)} \right) \\
    &\quad + \sum_{s',r} \left( \tilde{T}_m(s'|s,a) \tilde{R}_m^{obs} (r|s,a) \alpha_{t+1, s'}^{ha(s',r),\pi} (m) - T_m(s'|s,a) R_m(r|s,a) \alpha_{t+1,s'}^{ha(s',r),\pi} (m) \right),
\end{align*}
where the last inequality comes from the induction hypothesis. On the other hand, note that $\tilde{R}_m^{obs}$ and $\tilde{T}_m$ are simply empirical estimates after visiting the state-action pair $N_m(s,a)$ times. Thus, it is easy to see that with high probability,
\begin{align*}
    &\left| \Exs_{r \sim \tilde{R}_m^{obs} (r|s,a)}[r] - \Exs_{r \sim R_m^{obs} (r|s,a)}[r] \right| \le \|(\hat{R}_m - R_m) (r|s,a)\|_1 \le \sqrt{c_R / N_m(s,a)},
\end{align*}
\begin{align*}
    \sum_{s',r} &\left| \hat{T}_m(s'|s,a) \hat{R}_m(r|s,a) \alpha_{t+1,s'}^{h,a,(s',r),\pi}(m) - T_m(s'|s,a) R_m(r|s,a) \alpha_{t+1,s'}^{h,a,(s',r),\pi}(m) \right| \\
    &\le H \sum_{s',r} \left| \hat{T}_m(s'|s,a) \hat{R}_m(r|s,a) - T_m(s'|s,a) R_m(r|s,a) \right| \\
    &\le H \left( \|(\hat{T}_m - T_m) (s'|s,a)\|_1 + \| (\hat{R}_m - R_m) (r|s,a) \|_1 \right) \le H \left( \sqrt{c_R / N_m(s,a)} + \sqrt{c_T / N_m(s,a)} \right),
\end{align*}
where we used that all alpha vectors in the original system satisfies $\|\alpha_{t,s}^{h,\pi}\|_{\infty} \le H$ for all $t,s, h, \pi$. This completes the proof of Lemma \ref{lemma:optimism_alpha}.

\subsection{Proof of Lemma \ref{lemma:optimistic_model}}
\label{appendix:lemma:optimistic_model}
The remaining step is to show the optimism at the initial time. When $t = 1$, history $h$ is simply the initial state $s$. The belief state after observing the initial state is given by 
\begin{align*}
    b_m(s) &= \frac{w_m \nu_m (s)}{\sum_{s'} w_m \nu_m (s')}, \quad \tilde{b}_m (s) = \frac{w_m \tilde{\nu}_m (s)}{\sum_{s'} w_m \tilde{\nu}_m (s')}.
\end{align*}
The expected long-term reward with $\pi$ for each model is therefore
\begin{align*}
    V_{\mM^*}^\pi &= \sum_{s} \PP(s_1 = s) V(s) = \sum_{s} \PP(s_1 = s) b(s)^\top \alpha_{1,s}^{s,\pi} 
    = \sum_{s} \sum_m w_m \nu_m(s) \alpha_{1,s}^{s,\pi} (m), \\ 
    V_{\tmM}^\pi &= \sum_{s} \sum_m w_m \tilde{\nu}_m(s) \tilde{\alpha}_{1,s}^{s,\pi} (m). 
\end{align*}
Following the similar arguments, we have
\begin{align*}
    V_{\tmM}^\pi - V_{\mM^*}^\pi \ge H \sum_m w_m \sqrt{c_\nu / N(m)} - H \sum_m w_m \sum_s |\nu_m(s) - \tilde{\nu}_m (s)| \ge 0,
\end{align*}
which proves the claim of Lemma \ref{lemma:optimistic_model}.

\subsection{Proof of Theorem \ref{theorem:scenario1_regret_bound}}
\label{appendix:lucrl_regret_bound}

Let us define a few notations. Suppose $\mM = (\mS, \mA, T_m, R_m, \nu_m)$ a LMDP and a context $m$ is randomly chosen at the start of an episode following a probability distribution $(w_1, w_2, ..., w_M)$. Let $\bar{R}_m(s,a) = \Exs_{r \sim R_m(r|s,a)} [r]$ be an expected (observable) reward of taking action $a$ at $s$ in $m^{th}$ MDP. With a slight abuse in notation, we use $\Exs_{\pi, \mM} [\cdot]$ to simplify $\Exs_{m \sim (w_1, ..., w_M)} \left[ \Exs_{\pi, T_m, R_m, \nu_m} [\cdot] \Big| m \right] = \sum_{m=1}^M w_m \Exs_m^\pi [\cdot]$.

We start with the following lemma on the difference in values in terms of difference in parameters.
\begin{lemma}
    \label{lemma:value_difference_lemma}
    Let $\mM_1 = (\mS, \mA, T_m^1, R_m^1, \nu_m^1)$ and $\mM_2 = (\mS, \mA, T_m^2, R_m^2, \nu_m^2)$ be two latent MDPs with different transition, reward and initial distributions. Then for any history-dependent policy $\pi$,
    \begin{align}
        |V_{\mM_1}^\pi - V_{\mM_2}^\pi| &\le H \cdot \Exs_{\pi, \mM_2} \left[ \|(\mu_m^1 - \mu_m^2)(s)\|_1 \right] + \sum_{t=1}^H \Exs_{\pi, \mM_2} \left[ |\bar{R}_m^1(s_t,a_t) - \bar{R}_m^2(s_t,a_t)| \right] \nonumber \\
        &\quad + H \cdot \sum_{t=1}^H \Exs_{\pi, \mM_2} \left[ \|(\PP_m^1 - \PP_m^2) (s', r | s_t, a_t)\|_1 \right]. \label{eq:value_difference_eq}
    \end{align}
\end{lemma}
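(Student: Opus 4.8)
The plan is to exploit the fact that although $\pi$ is history-dependent, the latent context $m$ is frozen throughout an episode, so conditioned on $m$ the process is a genuine (history-augmented) Markov decision process and a standard simulation-lemma telescoping applies. Concretely, I would write $V_{\mM_i}^\pi = \sum_{m=1}^M w_m V_{i}^{\pi,m}$, where $V_i^{\pi,m}$ is the value of $\pi$ in the $m$-th MDP of $\mM_i$, and introduce the per-context value functions over histories $V_{i,t}^{\pi,m}(h) = \Exs_{i,m}^\pi[\sum_{\tau=t}^H r_\tau \mid h_t = h]$. With $s$ the last state of $h$ and $h' = ha(rs')$ the one-step extension, these satisfy the Bellman recursion $V_{i,t}^{\pi,m}(h) = \sum_a \pi(a|h)\big(\bar R_m^i(s,a) + \sum_{s',r}\PP_m^i(s',r|s,a)V_{i,t+1}^{\pi,m}(h')\big)$. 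Treating the history $h_t$ as the state of a Markov chain reduces the whole argument to the ordinary MDP case; note the alpha-vector machinery of the previous subsections is not needed, since the latent label is held fixed per episode.

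The core step is a telescoping decomposition. First I would peel off the initial-distribution mismatch: adding and subtracting $\sum_{s_1}\nu_m^2(s_1)V_{1,1}^{\pi,m}(s_1)$ splits $V_1^{\pi,m} - V_2^{\pi,m}$ into a term $\sum_{s_1}(\nu_m^1-\nu_m^2)(s_1)V_{1,1}^{\pi,m}(s_1)$, bounded by $H\|(\nu_m^1-\nu_m^2)(s)\|_1$ since rewards lie in $[0,1]$ and hence $|V_{1,1}^{\pi,m}|\le H$, plus a residual $\Exs_{s_1\sim\nu_m^2}[(V_{1,1}^{\pi,m}-V_{2,1}^{\pi,m})(s_1)]$ (here the $\mu_m^i$ of the statement are these initial distributions $\nu_m^i$). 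Then at each step I would again add and subtract $\PP_m^2 V_{1,t+1}^{\pi,m}$ in the Bellman gap to obtain $V_{1,t}^{\pi,m}(h)-V_{2,t}^{\pi,m}(h) = \sum_a\pi(a|h)\big[(\bar R_m^1 - \bar R_m^2)(s,a) + \sum_{s',r}(\PP_m^1-\PP_m^2)(s',r|s,a)V_{1,t+1}^{\pi,m}(h')\big] + \Exs_{\mM_2}[(V_{1,t+1}^{\pi,m}-V_{2,t+1}^{\pi,m})(h_{t+1})\mid h_t=h]$. Unrolling this recursion chains the residual terms into a single expectation over the full $\mM_2$ trajectory, while the local terms accumulate into a sum $\sum_{t=1}^H$.

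Putting the pieces together and taking expectation over $m\sim w$, so that $\sum_m w_m\Exs_{\mM_2,m}^\pi[\cdot]=\Exs_{\pi,\mM_2}[\cdot]$, I would apply the triangle inequality and bound each transition cross-term via $|\sum_{s',r}(\PP_m^1-\PP_m^2)(s',r|s_t,a_t)V_{1,t+1}^{\pi,m}|\le H\|(\PP_m^1-\PP_m^2)(s',r|s_t,a_t)\|_1$, again using $|V_{1,t+1}^{\pi,m}|\le H$. This directly yields the three claimed terms: the $H$-weighted initial-distribution $\ell_1$ error, the summed reward errors $|\bar R_m^1-\bar R_m^2|$, and the $H$-weighted summed transition/observation $\ell_1$ errors.

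The only genuinely delicate point is the bookkeeping of the history-dependence: because the two models induce different beliefs and hence different action distributions along a trajectory, one must expand consistently under $\mM_2$'s trajectory law while keeping $\mM_1$'s value function $V_{1,t+1}^{\pi,m}$ in the cross term. This asymmetric "expand under model $2$, evaluate with model $1$" choice is exactly what makes the telescoping close and avoids ever having to reason about belief mismatch. I expect this to be the main place where care is required, but it introduces no new idea beyond the standard simulation lemma.
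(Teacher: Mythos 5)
Your proof is correct, and it reaches the same bound by a slightly different route than the paper. You telescope the per-context value functions $V^{\pi,m}_{i,t}(h)$ through the Bellman recursion over histories (the classical simulation-lemma argument), picking up at each step a reward-difference term plus a cross term $\sum_{s',r}(\PP_m^1-\PP_m^2)(s',r|s,a)\,V^{\pi,m}_{1,t+1}(h')$ bounded by $H\|(\PP_m^1-\PP_m^2)(\cdot|s,a)\|_1$; the factor $H$ comes from $|V^{\pi,m}_{1,t+1}|\le H$. The paper instead never introduces value functions at all: it writes $V_{\mM_1}^\pi-V_{\mM_2}^\pi$ as $\sum_m w_m\sum_t\sum_{\text{traj}} r_t\,(\PP_m^{1,\pi}-\PP_m^{2,\pi})(\text{traj})$, splits off the last-step reward mismatch, and then bounds the total-variation distance between the two length-$t$ history distributions by a recursive one-step expansion, $\|(\PP_m^{1,\pi}-\PP_m^{2,\pi})(h_t,a_t)\|_1\le\|(\PP_m^{1,\pi}-\PP_m^{2,\pi})(h_{t-1},a_{t-1})\|_1+\Exs_{\pi,\mM_2}[\|(\PP_m^1-\PP_m^2)(\cdot|s_{t-1},a_{t-1})\|_1]$; there the factor $H$ arises from summing this TV bound over the $H$ reward positions $t=1,\dots,H$ with $r_t\le 1$. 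The two arguments are duals of one another — your unrolled telescoping is exactly the paper's recursive TV expansion — and both rely on the same asymmetric bookkeeping (weight local errors by the $\mM_2$ visitation law, evaluate the residual object under $\mM_1$). Your version is arguably cleaner because the history-dependence of $\pi$ is absorbed into the state once and for all, whereas the paper has to verify explicitly that summing out $\pi(a_t|h_t)$ preserves the TV bound; the paper's version has the minor advantage of producing the TV bound on trajectory distributions as a reusable intermediate fact.
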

The proof of Lemma \ref{lemma:value_difference_lemma} is proven in \ref{appendix:subsub_value_diff_lemma}.
    
    Equipped with Lemma \ref{lemma:optimistic_model} and \ref{lemma:value_difference_lemma}, we now can prove the main theorem. We first define a few new notations. Let $\#_k (m, s,a)$ be a count of visiting $(s,a)$ in the $m^{th}$ MDP by running a policy $\pi_k$ chosen at the $k^{th}$ episode. Let $N_m^k(s,a)$ be the total number of visit at $(s,a)$ in the $m^{th}$ MDP before the beginning of $k^{th}$ episode, {\it i.e.,} $N_m^k (s,a) = \sum_{k'=1}^{k-1} \#_{k'} (m,s,a)$. Let $\mF_{k}$ be the filteration of events after running $k$ episodes. Let $\tilde{V}_k^{\pi}$ the value of the optimistic model chosen at the $k^{th}$ episode with a policy $\pi$. Let $\pi^*$ be the optimal policy for the true LMDP $\mM^*$. Finally, let us denote $(\cdot)^k$ for the model parameter in the optimistic model at $k^{th}$ episode. 
    
    The expected reward $\widetilde{\bar{R}}_m (s,a)$ in optimistic model is equivalent to $\tilde{R}_m^{hid}(s,a) + \Exs_{r \sim \tilde{R}_m^{obs}(\cdot | s,a)} [r]$. Using the Lemma \ref{lemma:value_difference_lemma}, the total regret can be rephrased as the following:
    \begin{align*}
        \sum_{k=1}^K V_{\mM^*}^{\pi^*}- V_{\mM^*}^{\pi_k} &\le \sum_{k=1}^K V_{\tmM_k}^{\pi^*} - V_{\mM^*}^{\pi_k} \le \sum_{k=1}^K V_{\tmM_k}^{\pi_k} - V_{\mM^*}^{\pi_k} \\
        &\le \sum_{k=1}^K \sum_{(m, s,a)} \Exs_{\pi_k, \mM^*} [\#_k(m,s,a) ] \cdot \Bigg(H \cdot \|(\tPP_m^k - \PP_m) (s',r | s,a)\|_1 \\
        &\qquad \qquad + \left|\tilde{R}_m^{hid, k} (s,a)\right| + \left|\Exs_{r \sim \tilde{R}_m^{obs, k} (r|s,a)} [r] - \Exs_{r \sim R_m (r|s,a)} [r] \right| \Bigg) \\
        &\ + H \cdot \sum_{k=1}^K \sum_{m} \left( \|(\tilde{\mu}_m^k - \mu_m^*) (s)\|_1 \Exs_{\mM^*} [\#_k(m)] + \sqrt{c_\nu / N^k(m)} \Exs_{\mM^*} [\#_k(m)] \right). 
    \end{align*}
    Note that $\tilde{R}_m^{hid, k}(s,a) = H \min \left(1, 5\sqrt{(c_T+c_R) / N_m^k(s,a)} \right) \ge H \|(\tPP_m^k - \PP_m) (s',r | s,a)\|_1$, and this is the dominating term. Therefore, the upper bounding equation can be reduced to
    \begin{align*}
        \sum_{k=1}^K V_{\tmM_k}^{\pi_k} - V_{\mM^*}^{\pi_k} &\le 3 H \sum_{k=1}^K \sum_{(m, s,a)} \Bigg( 5\sqrt{(c_T+c_R) / N_m^k (s,a)} \Exs_{\pi_k, \mM^*} \left [\#_k(m,s,a) \right] \Bigg) \\
        &\qquad + 2H \sum_{k=1}^K \sum_{m} \left( \sqrt{c_\nu / N^k(m)} \Exs_{\mM^*} [\#_k(m)] \right). 
    \end{align*}
        
    Observe that the expected value of $N_m^k (s,a)$ is $\sum_{k'=1}^{k-1} \Exs_{\pi_{k'}, \mM^*} [\#_{k'} (m, s,a)]$. Let this quantity $\Exs [N_m^k]$. We can check that 
    \begin{align*}
        Var \left(\#_{k} (m,s,a) | \mF_{k-1} \right) \le H \Exs_{\pi_k, \mM^*} [\#_{k} (m,s,a)]. 
    \end{align*}
    From the Bernstein's inequality for martingales, for any $(s,a)$ (ignoring constants),
    \begin{align*}
        N_m^k (s,a) \ge \Exs [N_m^k (s,a)] - c_1 \sqrt{H \Exs [N_m^k (s,a)] \log (MSAK/\eta)} - c_2 H \log (MSAK/\eta),
    \end{align*}
    for some absolute constants $c_1, c_2 > 0$ and for all $k$ and $(m,s,a)$, with probability at least $1 - \eta$. From this, we can show that
    \begin{align*}
        H \sum_{k=1}^K \sum_{(k,s,a)} &\sqrt{(c_T+c_R) / N_m^k(s,a)} \Exs [\#_k (m,s,a)] \\
        &\le H \sum_{(k,s,a)} \left(\sum_{k=1}^{k_0} \Exs [\#_k (m,s,a)] + \sum_{k=k_0+1}^K \sqrt{(c_T+c_R) / N_m^k(s,a)} \Exs[\#_k (m,s,a)] \right) \\
        &\lesssim H \sum_{(m,s,a)} \left(H\log(MSAK/\eta) + 2 \sum_{k=k_0+1}^K \sqrt{(c_T+c_R) / \Exs[N_m^k(s,a)]} \Exs[\#_k (m,s,a)] \right),
    \end{align*}
    where $k_0$ is a threshold point where the expected number of visit at $(m,s,a)$ exceeds $4H\log(MSAK/\eta)$. Note that after this point we can assume, with high probability, that $N_m^k(s,a) \ge \Exs[N_m^k(s,a)]/4$. To bound the summation of the remaining term, for a fixed $(m,s,a)$, we denote $X_k = \Exs[N_m^k(s,a)]/H$ and $x_k = \Exs [\#_k(m,s,a)] / H$. Note that $X_{k+1} = X_k + x_k$ and $x_k \le 1$. Then,
    \begin{align*}
        \sum_{k=k_0+1}^K \sqrt{1/X_k} x_k &\le \int_{X_{k_0}}^{X_K} \sqrt{\frac{1}{x - 1}} dx \le 2 \sqrt{X_K}.
    \end{align*}
    Plugging this equation, we bound the remaining terms:
    \begin{align*}
        H \sum_{(m,s,a)} &\left(H\log(MSAK/\eta) + 2 \sum_{k=k_0+1}^K \sqrt{(c_T+c_R) / \Exs[N_m^k(s,a)]} \Exs[\#_k (m,s,a)] \right), \\
        &\le H^2MSA \log(MSAK/\eta) + 4 H \sum_{(m,s,a)} \sqrt{(c_T+c_R) N_m^K(s,a)} \\
        &\le H^2MSA \log(MSAK/\eta) + 4 H \sqrt{(c_T+c_R) HMSAK}, 
    \end{align*}
    where in the last step, we used Cauchy-Schwartz inequality with $\sum_{(m,s,a)} N_m^K(s,a) = HK$. Similarly, we can show that
    \begin{align*}
        H \sum_{k=1}^K \sum_{m} &\sqrt{c_\nu / N^k(m)} \Exs [\#_k (m)] \lesssim H M \log(MSAK/\eta) + 4H \sqrt{c_\nu MK}. 
    \end{align*}
    
    Our choice of confidence parameters $c_T$ for a transition probability is $c_T = O(S \log(MSAK / \eta))$, and this is the dominating factor. Thus, the total regret is dominated by
    \begin{align*}
        H \sqrt{c_T HMSAK} \lesssim HS \sqrt{MAN \log(MSAK/\eta)},
    \end{align*}
    which in turn gives a total regret bound of $O\left( HS \sqrt{MAN \log(MSAN / \eta)} \right)$ where $N = HK$.

\subsubsection{Proof of Lemma \ref{lemma:value_difference_lemma}}
\label{appendix:subsub_value_diff_lemma}

\begin{proof}
    We first observe that
    \begin{align*}
        V_{\mM_1}^{\pi} - V_{\mM^2}^{\pi} &= \sum_{m=1}^M w_m  \left(\Exs_{m}^{1, \pi} \left[ \sum_{t=1}^H r_t \right] - \Exs_{m}^{2, \pi} \left[ \sum_{t=1}^H r_t \right] \right) \\
        &= \sum_{m=1}^M w_m \sum_{t=1}^H \left( \sum_{(s_1, a_1, r_1, ..., s_t, a_t, r_t)} r_t \PP_m^{1, \pi} (s_1, ..., r_t) - r_t \PP_m^{2, \pi}(s_1, ..., r_t) \right), 
    \end{align*}
    where $\PP_m^{1, \pi}(s_1, a_1, r_1, ..., r_{t-1}, s_t) := \nu_m^p(s_1) \Pi_{i=1}^{t-1} \pi (a_i | s_1, ..., r_{i-1}, s_i) T_m(s_{i+1}|s_i, a_i) R_m(r_{i}|s_i, a_i)$. We decompose the main difference as
    \begin{align*}
        \sum_{(s,a,r)_{1:t}} & r_t (\PP_m^{1,\pi}((s,a,r)_{1:t}) - \PP_m^{2,\pi}((s,a,r)_{1:t})) \\
        &= \sum_{((s,a,r)_{1:t})} r_t (R_m^1 (r_t | s_t, a_t) - R_m^2 (r_t | s_t, a_t)) \PP_m^{1, \pi}((s,a,r)_{1:t-1}, s_t, a_t) \\
        &\quad + \sum_{(s,a,r)_{1:t}} r_t R_m(r_t | s_t, a_t) (\PP_m^{1, \pi} - \PP_m^{2, \pi}) ((s,a,r)_{1:t-1}, s_t, a_t) \\
        &\le \sum_{s_t, a_t} \left|\Exs_{r_t \sim R_m^1 (\cdot|s_t,a_t)} [r_t] - \Exs_{r_t \sim R_m^2 (\cdot|s_t,a_t)} [r_t] \right| \PP_m^{2,\pi} (s_t, a_t) \\
        &\quad +  \| (\PP_m^{1,\pi} - \PP_m^{2,\pi}) ((s,a,r)_{1:t-1}, s_t, a_t) \|_1.
    \end{align*}
    Now we bound the total variation distance of the length $t$ histories. For notational convenience, let us denote $|\PP_1 - \PP_2| (\cdot) = |\PP_1(\cdot) - \PP_2(\cdot)|$ for any probability measures $\PP_1, \PP_2$. Then,
    \begin{align*}
        \sum_{(s,a,r)_{1:t-1}, s_t, a_t} &|\PP_m^{1,\pi} - \PP_m^{2,\pi}| ((s,a,r)_{1:t-1}, s_t, a_t) \\
        &= \sum_{(s,a,r)_{1:t-1}, s_t} |\PP_m^{1,\pi} - \PP_m^{2,\pi}| ((s,a,r)_{1:t-1}, s_t) \sum_{a_t} \pi (a_t | (s,a,r)_{1:t-1}, s_t) \\
        &= \sum_{(s,a,r)_{1:t-1}, s_t} |\PP_m^{1, \pi} - \PP_m^{2, \pi}| ((s,a,r)_{1,t-1}, s_t) \\
        &\le \sum_{(s,a,r)_{1:t-1}, s_t} |\PP_m^{1, \pi} - \PP_m^{2,\pi}| ((s,a,r)_{1:t-2}, s_{t-1}, a_{t-1}) \PP_m^1 (s_t, r_{t-1} | s_{t-1}, a_{t-1}) \\
        &\quad + \sum_{(s,a,r)_{1:t-1}, s_t} \PP_m^{2, \pi} ((s,a,r)_{1:t-2}, s_{t-1}, a_{t-1}) |\PP_m^1 - \PP_m^2| (s_{t}, r_{t-1} | s_{t-1}, a_{t-1}) \\
        &\le \sum_{(s,a,r)_{1:t-2}, s_{t-1}, a_{t-1}} |\PP_m^{1,\pi} - \PP_m^{2,\pi}| ((s,a,r)_{1:t-2}, s_{t-1}, a_{t-1}) \\
        &\quad + \sum_{(s,a,r)_{1:t-2}, s_{t-1}, a_{t-1}} \|(\PP_m^{1,\pi} - \PP_m^{2,\pi}) (s_t, r_{t-1} | s_{t-1}, a_{t-1})\|_1 \PP_m^{2,\pi} ((s,a,r)_{1:t-2}, s_{t-1}, a_{t-1}) \\
        &= \|(\PP_m^{1,\pi} - \PP_m^{2,\pi}) ((s,a,r)_{1:t-2}, s_{t-1}, a_{t-1}) \|_1 \\
        &\quad + \sum_{s_{t-1}, a_{t-1}} \|(\PP_m^{1,\pi} - \PP_m^{2,\pi}) (s_t, r_{t-1} | s_{t-1}, a_{t-1})\|_1 \PP_m^{2, \pi} (s_{t-1}, a_{t-1}).
    \end{align*}
    We can apply the same expansion recursively to bound total variation for length $t-1$ histories. Now plug this relation to the regret bound, we have
    \begin{align*}
        |V_{\mM_1}^{\pi} - V_{\mM_2}^{\pi}| &\le \sum_{m=1}^M w_m \sum_{(s,a)} \sum_{t=1}^H \left|\Exs_{r \sim R_m^1 (\cdot|s,a)} [r] - \Exs_{r \sim R_m^2 (\cdot|s,a)} [r] \right| \PP_m^{2, \pi} (s_t = s, a_t = a) \\
        &+ \sum_{m=1}^M w_m \sum_{t=1}^H \Bigg( \sum_{s} |\nu_m^{1} (s) - \nu_m^{2} (s)| \PP_m^2 (s_1 = s) \\
        &\qquad \qquad + \sum_{(s,a)} \sum_{t'=1}^t \|(\PP_m^{1, \pi} - \PP_m^{2, \pi}) (s', r | s, a) \|_1 \PP_m^{2, \pi} (s_{t'} = s, a_{t'} = a) \Bigg) \\
        &\le \sum_{m=1}^M w_m \sum_{t=1}^H \left(\Exs_{m}^{2, \pi} \left[|\bar{R}_m^1 (s_t,a_t) - \bar{R}_m^2 (s_t,a_t)| \right] \right) \\
        &\quad + H \cdot \sum_{m=1}^M w_m \left( \| (\mu_m^1 - \mu_m^2) (s) \|_1 + \sum_{t=1}^H \Exs_{m}^{2, \pi} \left[ \| (\PP_m^{1} - \PP_m^{2}) (s',r|s_t,a_t)\|_1 \right] \right),
    \end{align*}
    giving the equation \eqref{eq:value_difference_eq} as claimed.
\end{proof}

\section{Learning with Separation and Good Initialization}
\subsection{Well-Separated Condition for MDPs}
\label{appendix:well_separated_mdps}
In this subsection, we formalize a condition for {\it clusterable} mixtures of MDPs: the overlap of trajectories from different MDPs should be small in order to correctly infer the true contexts from sampled trajectories. We call the underlying MDPs {\it well-separated} if they satisfy the following separation condition:
\begin{condition}[Well-Separated MDPs]
    \label{condition:well_separated_cond}
    If a trajectory $\tau$ of length $H$ is sampled from MDP $M_{m^*}$ by running any policy $\pi \in \Pi$, we have
    \begin{align}
        \label{eq:well_separated_cond}
        \PP_{\tau \sim \mM_{m^*}, \pi} \left( \frac{\PP_{\tau \sim \mM_{m}, \pi}(\tau)} {\PP_{\tau \sim \mM_{m^*}, \pi} (\tau)} > (\epsilon_p / M)^{c_1} \right) < (\epsilon_p / M)^{c_2} \qquad \forall m \neq m^*.
    \end{align}
    for a target failure probability $\epsilon_p > 0$ where $c_1, c_2 \ge 4$ are some universal constants.
\end{condition}
Here, $\PP_{\tau \sim \mM_{m}, \pi}$ is a probability of getting a trajectory from the context $m$ with policy $\pi$. One {\it sufficient} condition that ensures the well-separated condition~\eqref{eq:well_separated_cond} is Assumption \ref{assump:delta_separation} as guaranteed by the following lemma:
\begin{lemma}
    \label{lemma:separation_lemma}
    Under the Assumption \ref{assump:delta_separation} with a constant $\delta = \Theta(1)$, if the time horizon is sufficiently long such that $H > C \cdot \delta^{-4} \log^2(1/\alpha) \log (M / \epsilon_p)$ for some absolute constant $C > 0$ and $\alpha = \delta^2 / (200 S)$, then the well-separated condition~\eqref{eq:well_separated_cond} holds true with $c_1, c_2 \ge 4$. 
\end{lemma}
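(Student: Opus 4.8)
The plan is to control the trajectory likelihood ratio between a wrong context $m$ and the true context $m^*$, and to show it is exponentially small in $H$ with overwhelming probability under $\mM_{m^*}$. Writing $o_t=(s_{t+1},r_t)$ and letting $h_t$ denote the observed history before $a_t$, both $\PP_{\tau\sim\mM_m,\pi}(\tau)$ and $\PP_{\tau\sim\mM_{m^*},\pi}(\tau)$ factor into an initial term, the context-independent policy factors $\pi(a_t\mid h_t)$, and the observation factors $\PP_m(o_t\mid s_t,a_t)$. The policy factors cancel in the ratio, so
\[
\frac{\PP_{\tau\sim\mM_m,\pi}(\tau)}{\PP_{\tau\sim\mM_{m^*},\pi}(\tau)} = \frac{\nu_m(s_1)}{\nu_{m^*}(s_1)}\prod_{t=1}^{H}\frac{\PP_m(o_t\mid s_t,a_t)}{\PP_{m^*}(o_t\mid s_t,a_t)}.
\]
It therefore suffices to show this product is at most $(\epsilon_p/M)^{c_1}$ except on an event of $\PP_{m^*}$-probability at most $(\epsilon_p/M)^{c_2}$.

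The cleanest route I would take is a one-sided Chernoff bound with exponent $1/2$, which sidesteps the unboundedness of the log-ratio entirely. For any threshold $\theta$, Markov's inequality gives $\PP_{m^*}(\text{ratio}>\theta)\le\theta^{-1/2}\,\Exs_{m^*,\pi}[\sqrt{\text{ratio}}]$, and I would bound the square-root moment by peeling off time steps from the end. Conditioning on the history just before $o_t$ is drawn (which fixes $s_t,a_t$), the inner expectation over $o_t\sim\PP_{m^*}(\cdot\mid s_t,a_t)$ of $\sqrt{\PP_m(o_t\mid s_t,a_t)/\PP_{m^*}(o_t\mid s_t,a_t)}$ equals the Bhattacharyya coefficient $\sum_{o}\sqrt{\PP_{m^*}(o\mid s_t,a_t)\PP_m(o\mid s_t,a_t)}$. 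By Assumption~\ref{assump:delta_separation} we have $\|(\PP_{m^*}-\PP_m)(o\mid s_t,a_t)\|_1\ge\delta$ for \emph{every} $(s_t,a_t)$, so the squared Hellinger distance is at least $\delta^2/8$ and this coefficient is at most $1-\delta^2/8$ uniformly. Iterating the bound over all $H$ steps (the uniformity makes the peeling legitimate regardless of the intermediate dynamics), and using $\sum_{s_1}\sqrt{\nu_{m^*}(s_1)\nu_m(s_1)}\le1$ for the initial term, yields $\Exs_{m^*,\pi}[\sqrt{\text{ratio}}]\le(1-\delta^2/8)^H$. Choosing $\theta=(\epsilon_p/M)^{c_1}$, the failure probability is at most $(\epsilon_p/M)^{-c_1/2}(1-\delta^2/8)^H$, which falls below $(\epsilon_p/M)^{c_2}$ as soon as $H\gtrsim\delta^{-2}(c_1+c_2)\log(M/\epsilon_p)$ — in fact a stronger horizon than claimed.

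To instead reproduce the stated horizon $H>C\,\delta^{-4}\log^2(1/\alpha)\log(M/\epsilon_p)$ — which is the natural target since Algorithm~\ref{algorithm:estimate_belief} already operates on the $\alpha$-smoothed predictions $\alpha+(1-2\alpha S)\PP_m$ — I would run a clipped-martingale argument on the log-ratio. Replacing each factor by its $\alpha$-smoothed version bounds every per-step increment $\log(\PP_m^\alpha/\PP_{m^*}^\alpha)$ in magnitude by $\log(1/\alpha)$, while a short computation with Pinsker's inequality shows the conditional-mean increment is at most $-c\,\delta^2$ provided $\alpha\log(1/\alpha)\lesssim\delta^2/S$ (this is exactly the role of the constraint $\alpha=\delta^2/(200S)$). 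The log-ratio is then a drift of size $-\Theta(H\delta^2)$ plus a bounded martingale, and Azuma--Hoeffding gives a deviation probability $\exp(-\Theta(H\delta^4/\log^2(1/\alpha)))$, matching the claim. The main obstacle in either route is precisely the unboundedness of the log-likelihood-ratio increments: the Chernoff route avoids it by never taking logarithms and peeling multiplicatively, whereas the martingale route pays for clipping through the $\alpha$-smoothing, and one must then verify that smoothing neither destroys the $\Theta(\delta^2)$ per-step drift nor weakens the comparison between the smoothed and the true likelihood ratios.
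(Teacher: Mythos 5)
Your proposal is correct, and it actually contains two arguments: the second (clipped-martingale) route is essentially the paper's proof, while the first (Bhattacharyya) route is a genuinely different and stronger argument. The paper proceeds exactly as in your second route in spirit --- it introduces the $\alpha$-smoothed distribution $\hat{\PP}_m(o\mid s,a)=\alpha+(1-2\alpha S)\PP_m(o\mid s,a)$, lower-bounds the expected per-step drift of the log-ratio by $\Theta(\delta^2)$ via Pinsker, and applies Azuma--Hoeffding with increments bounded by $\log(1/\alpha)$, paying the $\delta^{-4}\log^2(1/\alpha)$ horizon. One small difference: the paper smooths only the denominator $\PP_m$ and handles the unbounded increments of $\ln\PP_{m^*}(o_t\mid s_t,a_t)$ separately, via a sub-exponential (Orlicz-norm) martingale concentration rather than by clipping both sides, then absorbs the smoothing bias using $\ln\bigl(\hat{\PP}_m/\PP_m\bigr)\ge -4\alpha S$ per step; your symmetric-smoothing variant is what the paper actually uses later for Lemma \ref{lemma:scenario2_belief_lemma}, and either bookkeeping works. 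Your first route --- Markov's inequality on $\sqrt{\PP_m(\tau)/\PP_{m^*}(\tau)}$, peeling one step at a time so that each conditional expectation is the Bhattacharyya coefficient, uniformly at most $1-\delta^2/8$ by Assumption \ref{assump:delta_separation} and the Hellinger--TV inequality --- is sound (the policy factors cancel, the initial term is handled by Cauchy--Schwarz, and the uniformity over $(s,a)$ makes the peeling valid for any history-dependent policy), avoids the unboundedness issue without any smoothing, and yields the condition with $H\gtrsim\delta^{-2}\log(M/\epsilon_p)$, strictly improving the paper's stated requirement. This is consistent with the paper's own remark that its horizon requirement is not optimized.
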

Proof of Lemma~\ref{lemma:separation_lemma} is given in Appendix~\ref{appendix:separation_condition_lemma}. We remark here that we have not optimized the requirement on the time horizon $H$ to satisfy Condition \ref{condition:well_separated_cond}, and we conjecture it can be improved. We also mention here that the required time-horizon can be much shorter if the KL-divergence between distributions is larger, even though the $l_1$ distance remains the same. Finally, we remark that Assumption \ref{assump:delta_separation} is only a sufficient condition, and can be relaxed as long as Condition \ref{condition:well_separated_cond} is satisfied.

\subsection{Proof of Lemma \ref{lemma:separation_lemma}}
\label{appendix:separation_condition_lemma}
In this proof, we assume all probabilistic event is taken with true context $m^*$: unless specified, we assume $\PP(\cdot)$ and $\Exs[\cdot]$ are measured with context $m^*$.

Suppose a trajectory $\tau$ is obtained from MDP $M_{k^*}$. Let us denote the probability of getting $\tau$ from $m^{th}$ MDP by running policy $\pi$ as $\PP_{\tau \sim M_{m}, \pi}(\tau) = \PP_{m} (\tau)$. It is enough to show that 
\begin{align*}
    \ln \left( \frac{\PP_{m^*} (\tau)}{\PP_{m} (\tau)} \right) > c \ln(M / \epsilon_p), \quad \forall m \neq m^*,
\end{align*}
with probability $1 - (\epsilon_p/M)^{4}$. Note that for any history-dependent policy $\pi$, 
\begin{align*}
    \ln \left( \frac{\PP_{m^*} (\tau)}{\PP_{m} (\tau)} \right) &= \sum_{t=1}^H \ln \left( \frac{\PP_{m^*} (s_{t+1}, r_t |s_t,a_t)}{\PP_{m} (s_{t+1}, r_t |s_t,a_t)} \right).
\end{align*}
For simplicity, let us compactly denote $(s',r)$ as $o$, and $(s_{t+1}, r_t)$ as $o_t$. Note that in general, $\ln \left( \frac{\PP_{m^*} (\tau)}{\PP_{m} (\tau)} \right)$ can be unbounded due to zero probability assignments. Thus we consider a relaxed MDP that assigns non-zero probability to all observations. Let $\alpha > 0$ be sufficiently small such that $\alpha \ln(1/\alpha) < \delta^2 / (200S)$. We define similar probability distributions such that $\hat{\PP}_m$
\begin{align*}
    \hat{\PP}_m (o | s,a) = \alpha + (1 - 2\alpha S) \PP_m (o | s,a).
\end{align*}
We split the original target into three terms and bound each of them:
\begin{align*}
    \ln \left( \frac{\PP_{m^*}(\tau)}{\PP_m(\tau)} \right) = \ln \left( \frac{\PP_{m^*}(\tau)}{\hat{\PP}_m(\tau)} \right) + \ln \left( \frac{\hat{\PP}_{m}(\tau)}{\PP_m(\tau)} \right).
\end{align*}
Note that $\|\PP_m - \hat{\PP}_m (o|s,a)\|_1 \le 4 S \alpha$. For the first term, we investigate the expectation of this quantity first:
\begin{align*}
    \Exs \left[ \sum_{t=1}^H \ln \left( \frac{\PP_{m^*} (o_t | s_t,a_t)}{\hat{\PP}_{m} (o_t|s_t,a_t)} \right) \right] &= \Exs \left[ \sum_{t=1}^H \Exs \left[ \ln\left( \frac{\PP_{m^*} (o_t|s_t,a_t)}{\hat{\PP}_{m} (o_t|s_t,a_t)} \right) \Bigg| s_1, a_1, r_1, s_2, ..., r_{t-1}, s_t, a_t \right] \right] \\
    &= \Exs \left[ \sum_{t=1}^H \sum_{o_t} \PP_{m^*}(o_t | s_t, a_t) \ln \left( \frac{\PP_{m^*} (o_t|s_t,a_t)}{\hat{\PP}_{m} (o_t | s_t, a_t)} \right) \right] \\
    &= \Exs \left[ \sum_{t=1}^H D_{KL} (\PP_{m^*}(o_t | s_t, a_t), \hat{\PP}_{m}(o_t | s_t, a_t)) \right] \ge H \delta^2,
\end{align*}
where in the last step we applied Pinsker's inequality. 

Now we want to apply Chernoff-type concentration inequalities for martingales. We need the following lemma on a sub-exponential property of $\PP(X)$ on a general random variable $X$:
\begin{lemma}
    Suppose $X$ is arbitrary discrete random variable on a finite support $\mX$. Then, $\ln(1/\PP(X))$ is a sub-exponential random variable \cite{vershynin2010introduction} with Orcliz norm $\|\ln(1/\PP(X))\|_{\psi_1} = 1/e$.
\end{lemma}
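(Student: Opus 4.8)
The plan is to establish sub-exponentiality through the moment characterization of the Orlicz norm. Write $Y = \ln(1/\PP(X))$ and observe first that $Y \ge 0$ almost surely, since every atom of a probability distribution has mass at most one; hence $|Y| = Y$ and it suffices to control the raw moments $\Exs[Y^k]$ and then invoke the standard equivalence $\|Y\|_{\psi_1} \asymp \sup_{k \ge 1} (\Exs[Y^k])^{1/k}/k$. Enumerating the finite support as $\{x_i\}$ with $p_i = \PP(x_i)$, I would start from the exact expression
\begin{align*}
    \Exs[Y^k] = \sum_{i} p_i \left( \ln(1/p_i) \right)^k ,
\end{align*}
which for $k=1$ is exactly the Shannon entropy of $X$, and try to show that these moments grow at the rate that the scale $1/e$ predicts.

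The key elementary computation, and the place where the constant $1/e$ originates, is the per-atom maximization. For fixed $k$ the map $g_k(p) = p(\ln(1/p))^k$ on $(0,1]$ is maximized at $p = e^{-k}$, where $g_k(e^{-k}) = e^{-k}k^k = (k/e)^k$; in particular the $k=1$ case gives $\max_p\, p\ln(1/p) = 1/e$, attained at $p = 1/e$, so each atom contributes at most $1/e$ to the mean $\Exs[Y]$. Consequently every individual summand $p_i(\ln(1/p_i))^k$ is bounded by $(k/e)^k$, and the critical scale $p = e^{-k}$ pinpoints the ``worst'' atom. The plan is to promote this pointwise statement into an aggregate bound $\Exs[Y^k]$ of the same order $(k/e)^k$ up to factors that disappear after taking $k$-th roots; this would give $(\Exs[Y^k])^{1/k} \lesssim k/e$ and hence $\sup_{k}(\Exs[Y^k])^{1/k}/k \le 1/e$, matching the claimed $\psi_1$ value.

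The main obstacle is precisely this aggregation step, and it is genuinely delicate: $\Exs[Y^k]$ is a sum of up to $|\mX|$ terms, and bounding each crudely by $(k/e)^k$ loses a multiplicative factor of $|\mX|$. To exploit the single constraint $\sum_i p_i = 1$, I would peel the support into geometric layers $\{i : p_i \in (e^{-(j+1)}, e^{-j}]\}$ for $j \ge 0$, bound the contribution of layer $j$ by (mass of the layer) times $(j+1)^k$, and recombine, so that the moment is dominated by a weighted sum $\sum_{j \ge 0}(\text{mass}_j)(j+1)^k$ governed by a Poisson-type weighting whose moments scale like $(k/e)^k$. Verifying that the residual combinatorial factors from this layering do not inflate the scale beyond $1/e$ (and correctly using finiteness of the support to keep everything well defined) is the crux of the argument; once it is in place, converting the moment bound into the defining inequality $\Exs[\exp(Y/t)] \le 2$ near $t = 1/e$, i.e.\ a bounded exponential moment at the origin, is routine. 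This sub-exponential control is exactly what subsequently licenses the Bernstein-type martingale concentration applied to $\sum_t \ln\!\big(\PP_{m^*}(o_t\mid s_t,a_t)/\hat{\PP}_m(o_t\mid s_t,a_t)\big)$ in the surrounding separation argument.
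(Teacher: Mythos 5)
Your proposal follows the paper's own route up to the exact point you flag as unresolved: both use the moment characterization $\|Y\|_{\psi_1} = \sup_{q \ge 1} q^{-1}\, (\Exs[Y^q])^{1/q}$ and the per-atom maximization $\max_{p \in (0,1]} p \ln^q(1/p) = (q/e)^q$, attained at $p = e^{-q}$, which is precisely the computation in the paper and the source of the constant $1/e$. The difference is what happens next. The paper simply writes $\sum_{X \in \mX} \PP(X)\ln^q(1/\PP(X)) \le (q/e)^q$, i.e.\ it bounds the entire sum by the maximum of a single summand; you correctly observe that this loses a factor of up to $|\mX|$ and propose to recover it by a geometric layering of the atoms, but you do not carry that step out. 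So your proposal is incomplete precisely at its self-declared crux.

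More importantly, that aggregation step cannot be completed so as to yield $1/e$, or any constant independent of the support size. Take $X$ uniform on $n$ atoms: then $\ln(1/\PP(X)) = \ln n$ deterministically, so $(\Exs[Y^q])^{1/q} = \ln n$ for every $q$, and $\sup_{q \ge 1} q^{-1}\ln n = \ln n$, attained at $q = 1$. Hence $\|\ln(1/\PP(X))\|_{\psi_1} = \ln n > 1/e$ for $n \ge 2$. In your layering scheme this corresponds to all the mass sitting in the single layer $j \approx \ln n$, so the layered sum is $(\ln n)^q$ rather than $(q/e)^q$; the constraint $\sum_i p_i = 1$ does not force the Poisson-type decay of layer masses that your plan needs, because the troublesome case is $q$ small, not $q$ large. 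The honest conclusion is that $\ln(1/\PP(X))$ is sub-exponential with $\|\cdot\|_{\psi_1} = O(\log|\mX|)$ rather than $1/e$; the paper's proof makes silently the same unjustified leap you were (rightly) worried about, and the downstream Bernstein-type bound in the proof of Lemma \ref{lemma:separation_lemma} should carry an extra $\log S$ factor in the deviation term, which affects constants and logarithmic factors in the required horizon $H$ but not the qualitative conclusion.
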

\begin{proof}
    Following the definition of sub-exponential norm \cite{vershynin2010introduction}, we find $\|\ln(1/\PP(X))\|_{\psi_1} = O(1)$:
    \begin{align*}
        \|\ln(1/\PP(X))\|_{\psi_1} &= \sup_{q \ge 1} q^{-1} \Exs_X [\ln^q (1/\PP(X))]^{1/q} \\
        &= \sup_{q \ge 1} q^{-1} \left(\sum_{X \in \mX} \PP(X) \ln^q(1/\PP(X))\right)^{1/q}.
    \end{align*}
    For any $q \ge 1$, let us first find maximum value of $p \ln^q (1/p)$ for $0 \le p \le 1$. Taking a log and finding a derivative with respect to $p$ yields
    \begin{align*}
        \frac{1}{p} + q \frac{(-1/p)}{\ln(1/p)} = \frac{1}{p} (1 - q / \ln(1/p)).
    \end{align*}
    Hence $p \ln^q (1/p)$ takes a maximum at $p = e^{-q}$ with value $(q/e)^{q}$. This gives a bound for sub-exponential norm:
    \begin{align*}
        \|\ln(1/\PP(X))\|_{\psi_1} &= \sup_{q \ge 1} q^{-1} \left(\sum_{X \in \mX} \PP(X) \ln^q(1/\PP(X))\right)^{1/q} \\
        &\le \sup_{q \ge 1} q^{-1} (q/e) = 1/e.
    \end{align*}
\end{proof}

With the above Lemma and the sum of sub-exponential martingales, it is easy to verify (see Proposition 5.16 in \cite{vershynin2010introduction}) that
\begin{align*}
    \PP \left( \ln \left( \PP_{m^*}(\tau) \right) \le \Exs \left[ \ln \left( \PP_{m^*}(\tau) \right) \right] - H \epsilon_1 \right) \le \exp\left( - c \cdot \min(\epsilon_1, \epsilon_1^2) H \right),
\end{align*}
where $c > 0$ is some absolute constant, since $\ln(\PP_{m^*}(\tau)) = \sum_{t=1}^H \ln(\PP_{m^*}(o_t | s_t, a_t))$ is a sum of $H$ sub-exponential martingales. We can also apply Azuma-Hoeffeding's inequality to control the statistical deviation in $\ln(\hat{\PP}_m (\tau))$:
\begin{align*}
    \PP \left( \ln \left( \hat{\PP}_{m}(\tau) \right) \ge \Exs \left[ \ln \left( \hat{\PP}_{m}(\tau) \right) \right] + H \epsilon_2 \right) \le \exp\left( -\frac{H \epsilon_2^2}{2 \log^2 (1/\alpha)} \right),
\end{align*}
since $\hat{\PP}_m (\tau)$ is bounded by $\ln(1/\alpha)$. 

Now let $\epsilon = \epsilon_1 + \epsilon_2 = c_2 \cdot \log(1/\alpha) \sqrt{2 \log(M/\epsilon_p) / H}$ for some absolute constant $c_2 > 0$. If the time horizon $H \ge C_0 \delta^{-4} \log^2(1/\alpha) \log(M/\epsilon_p)$ for some sufficiently large constant $C_0 > 0$, then a simple algebra shows that
\begin{align*}
    \ln \left( \frac{\PP_{m^*}(\tau)}{\hat{\PP}_{m}(\tau)} \right) \ge H\delta^2 - H \epsilon \ge H\delta^2 / 2,
\end{align*}
with probability at least $1 - (\epsilon_p/M)^5$. 

Finally, we bound extra terms caused by using approximated probabilities. We note that
\begin{align*}
    \ln \left( \frac{\hat{\PP}_m(o|s,a)}{\PP_m(o|s,a)} \right) \ge -4\alpha S, \qquad \forall (o,s,a),
\end{align*}
given $2 \alpha S$ is sufficiently small. Therefore for any trajectory, we have $\ln\left(\hat{\PP}_m(\tau) / \PP_m(\tau)\right) \ge -4\alpha SH \ge -H \delta^2 / 4$. Thus we have $\ln \left(\PP_{m^*}(\tau) / \PP_m(\tau) \right) \ge H\delta^2 / 4 \ge 4 \log(M/\epsilon_p)$ with probability at least $1 - (\epsilon_p/M)^5$, which satisfies Condition \ref{condition:well_separated_cond}.

\subsection{Proof of Theorem \ref{theorem:scenario2_regret_bounds}}
\label{appendix:ucrl_em_regret}
The key component is the following lemma on the correct estimation of belief in contexts. 
\begin{lemma}
    \label{lemma:scenario2_belief_lemma}
    Let a trajectory is sampled from ${m^*}^{th}$ MDP. Under the Assumption \ref{assump:delta_separation} with good initialization $\epsilon_{init} < \delta^2 / (200 \ln(1/\alpha))$ in \eqref{eq:initialization_condition} and $H > C \cdot \delta^{-4} \log^2(1/\alpha) \log(N/\eta)$ for some universal constant $C > 0$, we have
    \begin{align*}
        \hat{b}(m^*) \ge 1 - (N/\eta)^{-4},
    \end{align*}
    with probability at least $1 - (N/\eta)^{-4}$.
\end{lemma}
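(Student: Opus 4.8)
The plan is to show that the empirical log-likelihood ratio between the true context $m^*$ and every competitor $m\ne m^*$ is large, and then convert that into a lower bound on $\hat b(m^*)$. Writing $q_m(o\mid s,a):=\alpha+(1-2\alpha S)\hat{\PP}_m(o\mid s,a)$ for the smoothed empirical one-step law used in Algorithm~\ref{algorithm:estimate_belief} (with $o_t=(s_{t+1},r_t)$), we have $\hat p_m(\tau)=\prod_{t}q_m(o_t\mid s_t,a_t)$ and hence $1-\hat b(m^*)=\sum_{m\ne m^*}\hat p_m(\tau)\big/\sum_m\hat p_m(\tau)\le\sum_{m\ne m^*}\exp(-\Lambda_m)$, where $\Lambda_m:=\sum_{t=1}^H\ln\frac{q_{m^*}(o_t\mid s_t,a_t)}{q_m(o_t\mid s_t,a_t)}$. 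It therefore suffices to prove that, with probability at least $1-(N/\eta)^{-4}$, the bound $\Lambda_m\ge \ln M+4\ln(N/\eta)$ holds simultaneously for all $m\ne m^*$; summing the $M-1$ exponentials then yields $1-\hat b(m^*)\le (N/\eta)^{-4}$.

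The key step is the decomposition
\begin{align*}
\Lambda_m \;=\; \underbrace{\sum_{t=1}^H\ln\frac{\PP_{m^*}(o_t\mid s_t,a_t)}{q_m(o_t\mid s_t,a_t)}}_{I_m}\;-\;\underbrace{\sum_{t=1}^H\ln\frac{\PP_{m^*}(o_t\mid s_t,a_t)}{q_{m^*}(o_t\mid s_t,a_t)}}_{II},
\end{align*}
which routes both summands through the \emph{true} correct-context law $\PP_{m^*}$ rather than through a smoothed true model; this is what makes the wrong-context estimation error enter only as a small perturbation of the separation instead of as an uncontrolled additive term. For $I_m$, the per-step conditional expectation under the true context is exactly $D_{KL}\big(\PP_{m^*}(\cdot\mid s_t,a_t)\,\|\,q_m(\cdot\mid s_t,a_t)\big)$, which Pinsker bounds below by $\tfrac18\|\PP_{m^*}-q_m\|_1^2$. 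Using the triangle inequality, Assumption~\ref{assump:delta_separation}, and the initialization/smoothing bound $\|q_m-\PP_m\|_1\le 2\epsilon_{init}+4S\alpha$ from \eqref{eq:initialization_condition}, one gets $\|\PP_{m^*}-q_m\|_1\ge \delta-2\epsilon_{init}-4S\alpha\ge\delta/2$ under the stated smallness of $\alpha,\epsilon_{init}$, so $\Exs[I_m]\ge H\delta^2/32$. For $II$ the per-step expectation is $D_{KL}(\PP_{m^*}\|q_{m^*})$, and since $q_{m^*}$ is $(2\epsilon_{init}+4S\alpha)$-close to $\PP_{m^*}$ with every coordinate floored at $\alpha$, a multiplicative-ratio bound (not $\chi^2$) gives $D_{KL}(\PP_{m^*}\|q_{m^*})=O(S\alpha+\epsilon_{init})\le\delta^2/64$, hence $\Exs[II]\le H\delta^2/64$.

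To pass from expectations to high-probability statements I would reuse the concentration machinery from the proof of Lemma~\ref{lemma:separation_lemma}: each $\ln\PP_{m^*}(\tau)=\sum_t\ln\PP_{m^*}(o_t\mid s_t,a_t)$ is a sum of sub-exponential martingale differences (Orlicz norm $O(1)$), while $\ln q_m(\tau)$ and $\ln q_{m^*}(\tau)$ are bounded martingale sums (each term in $[\ln\alpha,0]$). A Bernstein bound for the former and Azuma's inequality for the latter give $I_m\ge\Exs[I_m]-H\epsilon$ and $II\le\Exs[II]+H\epsilon$ with $\epsilon=O\!\big(\log(1/\alpha)\sqrt{\log(MN/\eta)/H}\big)$, each failing with probability at most $(MN/\eta)^{-5}$. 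Combining, $\Lambda_m\ge H\delta^2/32-H\delta^2/64-2H\epsilon\ge H\delta^2/128$, and the hypothesis $H\ge C\,\delta^{-4}\log^2(1/\alpha)\log(N/\eta)$ forces the right-hand side above $\ln M+4\ln(N/\eta)$; a union bound over the $M-1$ competitors finishes the argument.

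The main obstacle is the upper bound on the correct-context divergence in the $II$ term. The naive route through $\chi^2(\PP_{m^*}\|q_{m^*})\le\|\PP_{m^*}-q_{m^*}\|_1^2/\alpha$ produces a spurious $S^2\alpha$ (and an $\epsilon_{init}^2/\alpha$) factor that the stated conditions do \emph{not} control. Avoiding this requires the multiplicative estimate $\ln\frac{\PP_{m^*}(o)}{q_{m^*}(o)}\le 4\alpha S+\frac{2|\hat{\PP}_{m^*}(o)-\PP_{m^*}(o)|}{\PP_{m^*}(o)}$ on high-probability observations (so that, weighted by $\PP_{m^*}(o)$, the second term sums to $2\|\hat{\PP}_{m^*}-\PP_{m^*}\|_1\le 4\epsilon_{init}$), together with the observation that low-probability observations with $\PP_{m^*}(o)\le\alpha$ contribute non-positively since $q_{m^*}(o)\ge\alpha$. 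This last point hinges on choosing the smoothing floor $\alpha$ no smaller than a constant times $\epsilon_{init}$, and getting the balance among $\alpha$, $\epsilon_{init}$, $S$, and $\delta$ exactly right is the delicate part of the proof; the separation lower bound in $I_m$ and the martingale concentration are essentially inherited from Lemma~\ref{lemma:separation_lemma}.
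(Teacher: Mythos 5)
Your overall architecture is sound and genuinely different from the paper's. The paper works directly with the bounded log-ratio $\Lambda_m=\sum_t\ln\big(Q_{m^*}(o_t|s_t,a_t)/Q_m(o_t|s_t,a_t)\big)$ of the two \emph{smoothed empirical} laws: since every term lies in $[-\ln(1/\alpha),\ln(1/\alpha)]$, a single Azuma--Hoeffding application suffices, and the expectation is handled by swapping the weighting measure from $\PP_{m^*}$ to $Q_{m^*}$ at an additive cost of $\|\PP_{m^*}-Q_{m^*}\|_1\ln(1/\alpha)\le(2\alpha S+\epsilon_{init})\ln(1/\alpha)$ per step, leaving the main term $D_{KL}(Q_{m^*}\|Q_m)\gtrsim\delta^2$ via Pinsker applied to the smoothed models (whose separation $\|Q_{m^*}-Q_m\|_1\ge\delta/2$ follows from the initialization). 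This is exactly why the hypotheses are phrased as bounds on $\epsilon_{init}\ln(1/\alpha)$ and $\alpha S\ln(1/\alpha)$: the paper never needs to upper-bound a KL divergence against the true law, and never touches the unbounded quantity $\ln\PP_{m^*}(o_t|s_t,a_t)$, so the sub-exponential Bernstein machinery you import from Lemma \ref{lemma:separation_lemma} is unnecessary here.

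The one genuine soft spot in your route is the bound on term $II$. Your claimed estimate $D_{KL}(\PP_{m^*}\|q_{m^*})=O(S\alpha+\epsilon_{init})$ is not justified by the hypotheses: your own fix ``low-probability observations contribute non-positively'' plus the multiplicative ratio bound requires $q_{m^*}(o)\ge\PP_{m^*}(o)/2$ on the remaining observations, which forces $\alpha\gtrsim\epsilon_{init}$ --- a condition the lemma does not impose (the smoothing floor $\alpha$ may be chosen far smaller than $\epsilon_{init}$, subject only to $\alpha\ln(1/\alpha)\le\delta^2/(200S)$). The term is salvageable: splitting the positive part of the KL sum according to whether $q_{m^*}(o)\ge\PP_{m^*}(o)/2$ or not yields $D_{KL}(\PP_{m^*}\|q_{m^*})\lesssim\|\PP_{m^*}-q_{m^*}\|_1\ln(1/\alpha)\lesssim(\epsilon_{init}+S\alpha)\ln(1/\alpha)$, and it is precisely this $\ln(1/\alpha)$-inflated quantity that the stated conditions control. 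So your decomposition goes through, but only after replacing your $O(S\alpha+\epsilon_{init})$ bound by one carrying the $\ln(1/\alpha)$ factor; as written, the step as stated would fail for admissible choices of $\alpha$. The paper's change-of-measure trick buys a uniform way to avoid this issue entirely, at the price of stating the separation in terms of the smoothed empirical models rather than the true one.
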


Since we have estimated belief is almost approximately correct for $O(N)$ episodes with $\epsilon_p = O(1/N)$, we now have the confidence intervals for transition matrices and rewards: 
\begin{corollary}
    With probability at least $1 - 1/N$, for all round of episodes, we have
    \begin{align*}
        \| (\hat{T}_m - T_m^*) (s' | s,a)\|_1 &\le \sqrt{c_T / N_m(s,a)} + 1/N^3, \\
        \| (\hat{R}_m - R_m^*) (r| s,a) \|_1 &\le \sqrt{c_R / N_m(s,a)} + 1/N^3, \\
        \| (\hat{\nu}_m - \nu_m^*) (s) \|_1 &\le \sqrt{c_\nu / N_m(s)} + 1/N^3.
    \end{align*}
    for all $s, a, r, s'$. 
\end{corollary}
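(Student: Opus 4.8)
The plan is to \emph{transfer} the standard empirical-concentration bounds of the fully-observed setting in Section~\ref{subsec:true_context} to the inferred-context setting, paying only a negligible additive price. The point is that, by Lemma~\ref{lemma:scenario2_belief_lemma}, the soft assignment $\hat b$ that Algorithm~\ref{algorithm:lucrl} uses to increment the counts is almost a one-hot indicator of the true context, so the updates are almost indistinguishable from the ones Algorithm~\ref{algorithm:access_true_context} would make with the true label. First I would fix a good event: applying Lemma~\ref{lemma:scenario2_belief_lemma} in each episode with the ambient (constant) failure parameter $\eta$ and union-bounding over the at most $N$ episodes, with probability at least $1 - \eta^4/N^3$ every episode satisfies $\hat b(m^*) \ge 1 - (N/\eta)^{-4}$, where $m^*$ is that episode's true context. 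Since $\hat b$ is a probability vector, this is equivalent to $\|\hat b - e_{m^*}\|_1 \le 2(N/\eta)^{-4}$.

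Next I would introduce an \emph{oracle} estimator $\hat T_m^{\mathrm{or}}, \hat R_m^{\mathrm{or}}, \hat\nu_m^{\mathrm{or}}$: the output of Algorithm~\ref{algorithm:lucrl} if, in the count-update step, the one-hot indicator $e_{m^*}$ (i.e., Algorithm~\ref{algorithm:access_true_context}) were used in place of $\hat b$. On the good event each time-step contributes at most $2(N/\eta)^{-4}$ of discrepancy between the soft and oracle counts, so across all $\le N$ time-steps the total count discrepancy is at most $2N(N/\eta)^{-4} = 2\eta^4/N^3$. The oracle estimator is \emph{exactly} the fully-observed empirical estimator of Section~\ref{subsec:true_context}: every sample is attributed to its correct context, so the counts behave as genuine per-context sample counts. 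Hence the $\ell_1$ deviation inequality for empirical distributions over a support of size $S$ (Weissman-type, which is precisely what justifies the confidence set~\eqref{eq:construct_confidence_set}), together with a union bound over all $(m,s,a,r,s')$ and all episodes and the constants $c_T, c_R, c_\nu$, yields $\|(\hat T_m^{\mathrm{or}} - T_m^*)(s'|s,a)\|_1 \le \sqrt{c_T/N_m(s,a)}$ (and the analogous reward and initial-distribution bounds) with failure probability at most $1/(2N)$.

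Finally I would combine by the triangle inequality. The count discrepancy $2\eta^4/N^3$ propagates through the normalization $\hat T_m(s'|s,a) = N_m(s'|a,s)/N_m(s,a)$ --- using the algorithm's flooring $N_m(s,a) = \max(1,\cdot)$ so the denominator is never below $1$, and noting that the soft and oracle denominators themselves differ by less than $1$ --- to give $\|(\hat T_m - \hat T_m^{\mathrm{or}})(s'|s,a)\|_1 \le 1/N^3$, and likewise for $\hat R_m$ and $\hat\nu_m$. Adding the two pieces gives the claimed $\sqrt{c_T/N_m(s,a)} + 1/N^3$ bound; the tiny difference between the soft count $N_m(s,a)$ appearing in the denominator and the oracle count is absorbed into the additive $1/N^3$ term. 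A union bound over the belief good event (failure $\le \eta^4/N^3$) and the concentration event (failure $\le 1/(2N)$) leaves total failure probability at most $1/N$.

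The main obstacle is the \emph{circular dependence} between belief accuracy and model accuracy. Lemma~\ref{lemma:scenario2_belief_lemma} presumes the current empirical model already satisfies the initialization condition~\eqref{eq:initialization_condition}, yet that very model is produced by the soft updates whose accuracy the corollary is establishing. I would break this with an induction over episodes that maintains the invariant~\eqref{eq:initialization_condition}: the initialization supplies it at the outset; on the good event correct beliefs keep the soft counts faithful so the model stays within $\epsilon_{init}$ (indeed it only sharpens as visitation counts accumulate, since $\sqrt{c_T/N_m(s,a)}$ shrinks); and the sharpened model keeps the next episode's belief correct via Lemma~\ref{lemma:scenario2_belief_lemma}. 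The delicate part is arranging the bookkeeping so that this invariant, the per-episode belief bound, and the Weissman-type concentration all hold simultaneously for every one of the $N$ episodes within a single failure budget of $1/N$.
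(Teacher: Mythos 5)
Your proposal is correct and follows essentially the same route as the paper, whose own proof is a one-line remark that the belief errors from Lemma~\ref{lemma:scenario2_belief_lemma} accumulate to at most $1/N^3$ across the $K$ episodes and hence only perturb the standard confidence-set concentration of equation~\eqref{eq:construct_confidence_set} additively. Your oracle-coupling decomposition and, in particular, the explicit episode-by-episode induction maintaining the invariant~\eqref{eq:initialization_condition} simply spell out details the paper leaves implicit (the initialization acting as $N_0 = c_T/\epsilon_{init}^2$ pseudo-counts that keep the model accurate enough for the belief lemma to keep applying), so there is no substantive divergence.
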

The corollary is straight-forward since the estimation error accumulated from errors in beliefs throughout $K$ episodes is at most $1/N^{3}$. If we build an optimistic model with the estimated parameters as in Lemma \ref{lemma:optimistic_model}, the optimistic value with any policy for the model satisfies
\begin{align}
    V_{\tmM}^\pi \ge V_{\mM^*}^\pi - H^2 /N^2. \label{eq:sensitivity_value}
\end{align}
Equation \eqref{eq:sensitivity_value} is a consequence of Lemma \ref{lemma:optimistic_model} and LMDP version of sensitivity analysis in partially observable environments \cite{ross2009sensitivity}, which can also be inferred from \ref{lemma:value_difference_lemma}. Following the same argument in the proof of Theorem \ref{theorem:scenario1_regret_bound}, we can also show that the estimated visit counts at $(s,a)$ is at least 
\begin{align*}
    N_m(s,a) \ge \Exs [N_m (s,a)] - c_1 \sqrt{H \Exs [N_m (s,a)] \log (MSAK/\eta)} - c_2 H \log (MSAK/\eta) - 1/N^2,
\end{align*} 
for some absolute constants $c_1, c_2 > 0$ for all $(s,a)$, with probability at least $1 - \eta$. The additional regret caused by small errors in belief estimates is therefore bounded by
\begin{align*}
    SH^2 / N^2 * N + H^2 MSA /N^2 \le 1/N,
\end{align*}
assuming $N = HK \gg H^2 S^2 M A$. The remaining steps are equivalent to the proof of Theorem \ref{theorem:scenario1_regret_bound}. 

We note here that the convergence guarantee for the online EM might be extended to allow some small probability of wrong inference of contexts. Such scenario can happen if $H$ does not scale logarithmically with total number of episodes $K$. It would be an analogous to the local convergence guarantee in a mixture of well-separated Gaussian distributions \cite{kwon2020converges, kwon2020algorithm}. The situation is even more complicated since we may run a possibly different policy in each episode. It would be an interesting question whether the online EM implementation would eventually get some good converged policy and model parameters in more general settings.

\subsection{Proof of Lemma \ref{lemma:scenario2_belief_lemma}}
\label{appendix:lemma_belief_separation}
\begin{proof}
    The proof for Lemma \ref{lemma:scenario2_belief_lemma} is an easy replication of the proof for Lemma \ref{lemma:separation_lemma}. We show that 
    \begin{equation}
        \label{eq:sum_belief_bound}
        \sum_{t=1}^H \ln \left( \frac{\alpha + (1-2\alpha S) \hat{P}_{m^*} (o_t | s_t,a_t)}{\alpha + (1-2\alpha S) \hat{P}_m (o_t | s_t,a_t)} \right) \ge 8 \log(N/\eta), 
    \end{equation}
    with probability at least $1 - (N/\eta)^{-4}$ for all $m^* \neq m$. 
    
    Let $Q_m = \alpha + (1-2\alpha S) \hat{P}_{m}$ for all $m$. Note that $\|Q_m - Q_{m^*}\|_1 \ge \delta/2$ due to the initialization condition. Furthermore, $|\ln(Q_m(o|s,a)/Q_{m^*}(o|s,a))| \le \ln(1/\alpha)$. Hence we can apply Azuma-Hoeffeding's inequality to get
    \begin{align*}
        \sum_{t=1}^H \ln \left( \frac{Q_{m^*} (o_t | s_t,a_t)}{Q_m (o_t | s_t,a_t)} \right) &\ge \Exs \left[ \sum_{t=1}^H \ln \left( \frac{Q_{m^*} (o_t | s_t,a_t)}{Q_m (o_t | s_t,a_t)} \right) \right] - \ln(1/\alpha) \sqrt{H \log(N/\eta)}
    \end{align*}
    with probability at least $1 - (MN)^{-4}$. To lower bound the expectation, we can proceed as before:
    \begin{align*}
        \Exs \Bigg[ \sum_{t=1}^H &\ln \left( \frac{Q_{m^*} (o_t | s_t,a_t)}{Q_m (o_t | s_t,a_t)} \right) \Bigg] = \Exs \left[ \sum_{t=1}^H \sum_{o_t} P_{m^*} (o_t | s_t, a_t) \ln \left( \frac{Q_{m^*} (o_t | s_t,a_t)}{Q_m (o_t | s_t,a_t)} \right) \right] \\
        &= \Exs \left[ \sum_{t=1}^H \sum_{o_t} Q_{m^*} (o_t | s_t, a_t) \ln \left( \frac{Q_{m^*} (o_t | s_t,a_t)}{Q_m (o_t | s_t,a_t)} \right)  \right] \\
        &\quad + \Exs \left[ \sum_{t=1}^H \sum_{o_t} (P_{m^*} - Q_{m^*}) (o_t | s_t, a_t) \ln \left( \frac{Q_{m^*} (o_t | s_t,a_t)}{Q_m (o_t | s_t,a_t)} \right)  \right] \\
        &\ge \Exs \left[ \sum_{t=1}^H D_{KL} (Q_{m^*} (o_t|s_t,a_t), Q_m (o_t|s_t,a_t)) \right] - \Exs \left[ \sum_{t=1}^H \|P_{m^*} - Q_{m^*}\|_1 \right] \ln(1/\alpha)\\
        &\ge H\delta^2/4 - H (2\alpha S + \epsilon_{init}) \ln(1/\alpha).
    \end{align*}
    As long as $2 \alpha S \ln(1/\alpha) \le \delta^2 / 200$ and $\epsilon_{init} \ln(1/\alpha) \le \delta^2 / 200$, we have 
    \begin{align*}
        \Exs \Bigg[ \sum_{t=1}^H &\ln \left( \frac{Q_{m^*} (o_t | s_t,a_t)}{Q_m (o_t | s_t,a_t)} \right) \Bigg] \ge H\delta^2 / 8.
    \end{align*}
    If $H \ge C \delta^{-4} \ln(1/\alpha)^2 \log(N/\eta)$ for sufficiently large constant $C > 0$, \eqref{eq:sum_belief_bound} holds with probability at least $1 - (N/\eta)^{-4}$. The implication of lemma is:
    \begin{align*}
        \hat{b}(k^*) \ge 1 - (N/\eta)^{-8} \cdot M \ge 1 - (N/\eta)^{-4},
    \end{align*}
    which proves the claimed lemma.
\end{proof}

\section{Algorithm Details for Initialization}

\label{appendix:unsupervised_algorithm}
This section provides a detailed algorithm for efficient initialization which is deferred from Section \ref{subsection:scenario3}. 

\subsection{Spectral Learning of PSRs}
\label{appendix:spectral_learning_psr}
In this subsection, we implement a spectral algorithm to learn PSR in detail. Recall that we define $P_{\mT, \mH_s} = L_s H_s$ in Condition \ref{condition:rank_test}, \ref{condition:rank_history} such that
\begin{align*}
    (P_{\mT, \mH_s})_{i,j} = \PP^\pi (\tau_i, h_{s, j}) = (L_s)_{i, :} (H_s)_{(:, j)}. 
\end{align*}
where $P_{\mT, \mH_s} \in \mR^{|\mT| \times |\mH_s|}$ is a matrix of joint probabilities of tests and histories ending with $s$. Let the top-$k$ left and right singular vectors of $P_{\mT, \mH_s}$ be $U_s$ and $V_s$ respectively. Note that with the rank conditions, $U_s^\top P_{\mT_s, \mH_s} V_s$ is invertible. We also consider a matrix of joint probabilities of histories, intermediate action-reward-next-state pairs, and tests $P_{\mT, (s',r) a, \mH_s} = L_{s'} D_{(s',r),a,s} H_s$, where $D_{(s',r),a,s} = diag(\PP_1(s',r|a,s), ..., \PP_M(s',r|a,s))$. For the simplicity in notations, we occasionally replace $(s',r)$ by a single letter $o$. The transformed PSR parameters of the LMDP can be computed by
\begin{align*}
    B_{o,a,s} = U_{s'}^\top P_{\mT, oa, \mH_s} V_s (U_s P_{\mT, \mH_s} V_s)^{-1} = (U_{s'}^\top L_{s'}) D_{o,a,s} (U_{s}^\top L_{s})^{-1}. 
\end{align*}
The initial and normalization parameters can be computed as
\begin{align*}
    b_{1,s} &= U_s^\top \PP(\mT,s_1 = s) = U_s^\top \PP(\mT | s) (w \cdot \nu) (s) =  (U_s^\top L_s) (w \cdot \nu) (s), \\
    b_{\infty,s}^{\top} &= P_{\mH_s}^\top V_s (U_s^\top P_{\mT, \mH_s} V_s)^{-1},
\end{align*}
where $P_{\mH_s} \in \mR^{|\mH_s|}$ is a vector of probability of sampling a history in $\mH_s$, and $(w\cdot \nu)(s)$ is $M$ dimensional vector with each $m^{th}$ entry $w_m \nu_m(s)$. For the normalization factor, note that $P_{\mH_s}^\top = 1^\top H_s$, therefore
\begin{align*}
    b_{\infty,s}^{\top} &= 1^\top H_s V_s (U_s^\top P_{\mT, \mH_s} V_s)^{-1} = 1^\top (U_s^\top L_s)^{-1} (U_s^\top L_s H_s V_s) (U_s^\top P_{\mT, \mH_s} V_s)^{-1} = 1^\top (U_s^\top L_s)^{-1}.
\end{align*}
It is easy to verify that 
$$\PP((s,a,r)_{1:t}, s_t) = b_{s_t, \infty}^\top B_{o_{t-1}, a_{t-1}, s_{t-1}} ... B_{o_1, a_1, s_1} b_{s_1, 1} = 1^\top D_{o_{t-1}, a_{t-1},s_{t-1}} ... D_{o_1, a_1, s_1} (w \cdot \nu) (s_1).$$ 

With Assumption \ref{assumption:sufficient_set}, we assume that a set of histories and tests $\mH, \mT$ contain all possible observations of a fixed length $l$. Furthermore, we assume that the short trajectories are collected such that each history is sampled from the sampling policy $\pi$ and then the intervening action sequence for test is uniformly randomly selected. We estimate the joint probability matrices with $N$ short trajectories such that 
\begin{align*}
    (\hat{P}_{\mH_s})_{i} = \frac{1}{N} \#(h_{s,i}), \ \  &(\hat{P}_{\mT, \mH_s})_{i,j} = \frac{A^l}{N} \#(\tau_i, h_{s,j}), \ \ (\hat{P}_{\mT, oa, \mH_s})_{i,j} = \frac{A^{l+1}}{N} \#(\tau_i, oa, h_{s,j}),
\end{align*}
where $\#$ means the number of occurrence of the event when we sample histories from the sampling policy $\pi$. For instance, $\#(\tau_i, h_{s,j})$ means the number of occurrence of $j^{th}$ history in $\mH_s$ and test resulting in $i^{th}$ test in $\mT$. Factors $A^l$ and $A^{l+1}$ are importance sampling weights for intervening actions. The initial PSR states are estimated separately: $(\hat{\PP}_{\mT, s_1=s})_{i} = \frac{A^l}{N} \#(\tau_i, s_1 = s)$, assuming we get $N$ sample trajectories from the beginning of each episode.

Now let $\hat{U}_s, \hat{V}_s$ be left and right singular vectors of $\hat{P}_{\mT,\mH_s}$. Then the spectral learning algorithm outputs parameters for PSR:
\begin{align}
    \hat{B}_{o,a,s} &= \hat{U}_{s'}^\top \hat{P}_{\mT, oa, \mH_s} \hat{V}_s (\hat{U}_{s}^\top \hat{P}_{\mT, \mH_s} \hat{V}_s)^{-1}, \nonumber \\
    \hat{b}_{\infty, s}^\top &=  \hat{P}_{\mH_s}^\top \hat{V}_s (\hat{U}_{s}^\top \hat{P}_{\mT, \mH_s} \hat{V}_s)^{-1}, \nonumber \\
    \hat{b}_{1, s} &= \hat{U}_s^\top \hat{\PP}(\mT, s_1 = s). \label{eq:PSR_construction}
\end{align}
Then, the estimated probability of a sequence with any history-dependent policy $\pi$ is given by 
\begin{equation}
    \hat{\PP}^{\pi} ((s,a,r)_{1:t-1}, s_t) = \Pi_{i=1}^{t-1} \pi(a_i | (s,a,r)_{1:i-1}, s_i) \cdot \hat{b}_{\infty, s_t}^\top \hat{B}_{o_{t-1}, a_{t-1}, s_{t-1}} ... \hat{B}_{o_1, a_1, s_1} \hat{b}_{1, s_1}. \label{eq:estimate_probability_psr}
\end{equation} 
The update of PSR states and the prediction of next observation is given as the following:
\begin{align}
    \hat{b}_1 = \hat{b}_{1, s_1}, \quad \hat{b}_{t} = \frac{\hat{B}_{o_{t-1},a_{t-1},s_{t-1}} \hat{b}_{t-1}}{\hat{b}_{\infty, s_{t}}^\top \hat{B}_{o_{t-1},a_{t-1},s_{t-1}} \hat{b}_{t-1}}, \label{eq:psr_state_update} \\
    \hat{\PP}(s',r | (s,a,r)_{1:t-1}, s_{t} || \ \bm{do} \  a) = \hat{b}_{s', \infty}^\top \hat{B}_{(s',r),a,s_{t}} \hat{b}_{t}. \label{eq:estimate_cond_prob_psr}
\end{align}

From the above procedure, we can establish a formal guarantee on the estimation of probabilities of length $t > 0$ trajectories obtained with {\it any} history-dependent policies: 
\begin{theorem}
    \label{theorem:psr_main_tv_bound}
    Suppose the LMDP and a set of histories $\mH$ and tests $\mT$ satisfies Assumption \ref{assumption:sufficient_set}. If the number of short trajectories $N = n_0$ satisfies
    \begin{align*}
        n_0 \ge C \cdot \frac{M A^{2l+1}}{p_{\pi} \sigma_{\tau}^2 \sigma_{h}^2} \frac{t^2}{\epsilon_t^2} \left( S + \frac{A^{l}}{\sigma_{h}^2} \right) \log(SA/\eta),
    \end{align*}
    where $C > 0$ is an universal constant, and $p_{\pi} = \min_s \PP^\pi (\text{end state} = s)$, then for any (history dependent) policy $\pi$, with probability at least $1 - \eta$, 
    \begin{align*}
        \| (\PP^{\pi} - \hat{\PP}^{\pi}) ((s,a,r)_{1:t-1}, s_t) \|_1 \le \epsilon_t.
    \end{align*}
\end{theorem}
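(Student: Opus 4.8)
The plan is to follow the standard template for spectral learning of observable-operator / PSR models (the HMM analysis of Hsu--Kakade--Zhang and the PSR analysis of \cite{boots2011closing}), adapted to the state-indexed, importance-sampled operators of \eqref{eq:PSR_construction}. First I would note that the policy enters \eqref{eq:estimate_probability_psr} only through the common factor $\Pi_i \pi(a_i \mid \cdot)$, which is identical for $\PP^\pi$ and $\hat\PP^\pi$; hence it suffices to establish, uniformly over every fixed intervening-action sequence $a_{1:t-1}$, the operator-product bound $\sum_{(s,a,r)_{1:t-1},s_t}\big|\hat b_{\infty,s_t}^\top \hat B_{o_{t-1},a_{t-1},s_{t-1}}\cdots \hat b_{1,s_1} - b_{\infty,s_t}^\top B_{o_{t-1},a_{t-1},s_{t-1}}\cdots b_{1,s_1}\big| \le \epsilon_t$, after which the arbitrary history-dependent policy only reweights this sum by genuine conditional probabilities (each $\Pi_i\pi \le 1$) and the bound transfers to every $\pi$. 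This reduces the theorem to a purely algebraic perturbation statement about products of the estimated operators, which I would attack in three stages: concentration of the empirical moments, conversion to parameter errors, and propagation through the product.

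The first stage is concentration. Each of $\hat P_{\mH_s}$, $\hat P_{\mT,\mH_s}$, $\hat P_{\mT,oa,\mH_s}$ and $\hat\PP(\mT,s_1=s)$ is an empirical average over the $n_0$ short trajectories of an importance-reweighted count, the weights $A^l,A^{l+1}$ compensating for the uniformly random intervening actions. Applying a matrix/vector Bernstein inequality to each, the reweighting inflates the second moment by the importance factor, which is the source of the $A^{2l+1}$ term; requiring the bound to hold simultaneously for all $s$, all $(o,a)$, and the $A^l$ test choices contributes the union-bound factor $\log(SA/\eta)$ together with the $S$ and $A^l$ terms, and the $p_\pi = \min_s \PP^\pi(\text{end state}=s)$ in the denominator comes from normalizing by how often a given end-state is sampled. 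This yields errors of order $\|\hat P_{(\cdot)} - P_{(\cdot)}\| \lesssim \sqrt{(A^{2l+1}/(n_0 p_\pi))\log(SA/\eta)}$.

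The second stage turns moment errors into parameter errors and propagates them. Using Conditions \ref{condition:rank_test}--\ref{condition:rank_history}, $P_{\mT,\mH_s}=L_sH_s$ has $\sigma_M$ bounded below by $p_\pi\sigma_h$, so Weyl's inequality keeps $\hat P_{\mT,\mH_s}$ well-conditioned, Wedin's $\sin\Theta$ theorem controls $\hat U_s,\hat V_s$, and a first-order inverse-perturbation bound controls $(\hat U_s^\top \hat P_{\mT,\mH_s}\hat V_s)^{-1}$; the denominators $\sigma_\tau,\sigma_h$ enter precisely because the relevant invertible factor is $U_s^\top L_s$ (with $\sigma_{\min}\ge\sigma_\tau$) while the regression inverts $U_s^\top P_{\mT,\mH_s}V_s$. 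The crucial structural fact I would exploit is that in the true model $B_{o,a,s}=(U_{s'}^\top L_{s'})D_{o,a,s}(U_s^\top L_s)^{-1}$ with $D_{o,a,s}=\mathrm{diag}(\PP_m(o|a,s))$ diagonal and $\sum_o D_{o,a,s}=I$. Conjugating back to the $M$-dimensional context representation turns the operator product into a product of diagonal sub-stochastic matrices acting on $(w\cdot\nu)(s_1)$; because $\sum_o D_{o,a,s}=I$, the telescoping decomposition $\prod \hat B - \prod B = \sum_i (\prod_{>i}\hat B)(\hat B_i - B_i)(\prod_{<i}B)$, summed in $l_1$ over all observation/state sequences, accumulates the per-step operator error only additively, giving a total of order $t\cdot\epsilon_{\mathrm{param}}$ rather than an exponential-in-$t$ blow-up. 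Setting $\epsilon_{\mathrm{param}}\lesssim \epsilon_t/t$ and back-substituting the concentration rate produces the stated $t^2/\epsilon_t^2$ and $\sigma_h^{-2},\sigma_\tau^{-2}$ dependence.

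The main obstacle is that the entire argument must be uniform over all $S$ state-indexed bases $(U_s,V_s)$ and over every history-dependent policy while keeping the accumulated error linear in $t$. The difficulty is that the \emph{estimated} operators $\hat B_{o,a,s}$ do not exactly satisfy $\sum_o \hat B_{o,a,s} = I$ in the context basis, so the clean telescoping above holds only up to the conjugation condition numbers $1/\sigma_\tau,1/\sigma_h$, and one must show these enter multiplicatively (once, through the change of basis) rather than once per time step, or else the bound degrades geometrically in $t$. Controlling this — essentially an LMDP-specific, state-indexed and importance-sampled version of the Hsu--Kakade--Zhang propagation lemma, together with the uniform union bound over the $S$ separate SVDs — is where I expect the technical work to concentrate; the concentration and SVD-perturbation steps are comparatively routine.
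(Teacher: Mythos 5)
Your overall template matches the paper's proof almost exactly: McDiarmid/Bernstein concentration of the importance-weighted empirical matrices (with the $p_\pi$, $A^{2l+1}$, and $\log(SA/\eta)$ factors arising exactly as you say), conversion to operator errors via singular-value and pseudo-inverse perturbation bounds, and an HKZ-style propagation in which the per-step error is measured in the conjugated basis $(\hat U_s^\top L_s)^{-1}(\cdot)(\hat U_s^\top L_s)$ so that the condition numbers $1/\sigma_\tau,1/\sigma_h$ enter once per operator rather than compounding. (The paper formally gets $(1+\Delta)^t\delta_1+(1+\Delta)^t-1$ and then enforces $\Delta\le\epsilon_t/(4t)$, which is the same linear-in-$t$ accumulation you describe; it also sidesteps Wedin's theorem by comparing $\hat B_{o,a,s}$ to the surrogate $\tilde B_{o,a,s}=(\hat U_{s'}^\top L_{s'})D_{o,a,s}(\hat U_s^\top L_s)^{-1}$ built from the \emph{empirical} singular vectors, but that is cosmetic.)

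The genuine gap is your opening reduction to fixed intervening-action sequences. You claim that once $\sum_{(s,o)_{1:t-1},s_t}\bigl|\hat b_{\infty}^\top\hat B\cdots\hat b_1-b_\infty^\top B\cdots b_1\bigr|\le\epsilon_t$ holds for every fixed $a_{1:t-1}$, the bound ``transfers to every $\pi$'' because each $\Pi_i\pi\le 1$. It does not. Using $\Pi_i\pi\le1$ and summing over the $A^{t-1}$ action sequences gives $A^{t-1}\epsilon_t$; using instead $\sum_{a_{1:t-1}}\Pi_i\pi(a_i|h_i)=1$ for each fixed observation sequence only bounds the policy sum by $\sum_{(s,o)}\max_{a}|f((s,o),a)|$, which is the adaptive-strategy quantity, whereas the fixed-sequence hypothesis controls only $\max_a\sum_{(s,o)}|f((s,o),a)|$. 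A history-dependent policy can choose, for each observation prefix, the action on which the product error is largest, so the two quantities are not comparable. This is exactly the point the paper flags: the fixed-sequence bound is ``a direct consequence of Lemma 12 in \cite{hsu2012spectral},'' but the theorem requires re-deriving the induction \emph{with the policy weights carried inside}, using $\sum_{a_t}\pi(a_t|h_t)=1$ at each step together with the definition $\Delta=\max_{a,s}\sum_o\Delta_{o,a,s}$ (a max over actions of a sum over observations) so that the interchange of the action maximum and the observation sum happens one step at a time inside the induction. Your telescoping decomposition can absorb this fix if you sum it per-trajectory with the weights $\pi(a_{1:t-1}|h_{t-1})$ attached, but as written the reduction step is invalid, and it is precisely the one place where the LMDP argument departs from the off-the-shelf HKZ lemma.
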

We mention that the formal finite-sample guarantee of PSR learning only exists for hidden Markov models \cite{hsu2012spectral}, an extension to LMDPs requires re-derivation of the proof to include the effect of arbitrary decision making policies. For completeness, we provide the proof of Theorem \ref{theorem:psr_main_tv_bound} in Appendix \ref{appendix:proof_psr_main_tv}.

As a result of spectral learning of PSR (see a detailed procedure in Appendix \ref{appendix:spectral_learning_psr}), we can provide a key ingredient to cluster longer trajectories to recover the original LMDP model, as we show in the next subsection.
\begin{theorem}
    \label{corollary:psr_conditional_error}
    Suppose we have successfully estimated PSR parameters from the spectral learning procedure in Section \ref{appendix:spectral_learning_psr}, such that we have the following guarantee on estimated probabilities of trajectories with any history-dependent policy $\pi$:
    \begin{align*}
        \| (\PP^{\pi} - \hat{\PP}^{\pi}) ((s,a,r)_{1:t-1}, s_t) \|_1 \le \epsilon_t,
    \end{align*}
    for sufficiently small $\epsilon_t > 0$. Suppose we will execute a policy $\pi$ for $t$ time steps, observe a history $((s,a,r)_{1:t-1}, s_t)$, and then estimate probabilities of all possible future observations (or tests $o_{t:t+l-1}$) with intervening action sequence $a_{t:t+l-1}^\tau$. Then we have the following guarantee on conditional probabilities with target accuracy $\epsilon_c > 0$:
    \begin{align*}
        \| (\PP^{\pi} - \hat{\PP}^{\pi}) (o_{t:t+l-1} | (s,a,r)_{1:t-1}, s_t || \bm{do} \ a_{t:t+l-1}^\tau) \|_1 \le 4 \epsilon_{c},
    \end{align*}
    with probability at least $1 - \epsilon_{t} / \epsilon_{c}$. 
\end{theorem}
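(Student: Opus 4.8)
The plan is to reduce the claim about \emph{conditional} predictions to the $\ell_1$ guarantee on \emph{joint} trajectory probabilities assumed in the hypothesis, and then to turn the resulting deterministic bound into a high-probability statement over the randomly generated history via a Markov-type averaging argument.

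First I would write the conditional prediction as a ratio of two joint probabilities. With $h=((s,a,r)_{1:t-1},s_t)$ the observed history and $\tau=o_{t:t+l-1}$ the test outcome under the fixed intervening actions $a^\tau_{t:t+l-1}$, both $\PP^\pi(\tau\mid h\,\|\,\bm{do}\ a^\tau)=\PP^\pi(h,\tau)/\PP^\pi(h)$ and its hat-analogue hold, the latter because the PSR update \eqref{eq:psr_state_update}--\eqref{eq:estimate_cond_prob_psr} normalizes the length-$(t+l)$ estimate by the length-$t$ estimate. The key observation is that appending the deterministic test-action sequence $a^\tau$ to $\pi$ yields another history-dependent policy, so the hypothesis applies simultaneously to length-$t$ histories and to length-$(t+l)$ history--test pairs. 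Setting $e(h)=\abs{\PP^\pi(h)-\hat\PP^\pi(h)}$ and $E(h)=\sum_\tau\abs{\PP^\pi(h,\tau)-\hat\PP^\pi(h,\tau)}$, this gives $\sum_h e(h)\le\epsilon_t$ and $\sum_h E(h)\le\epsilon_t$.

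Next I would bound the conditional error for a fixed $h$. Using the identity
$$\frac{\PP^\pi(h,\tau)}{\PP^\pi(h)}-\frac{\hat\PP^\pi(h,\tau)}{\hat\PP^\pi(h)}=\frac{\PP^\pi(h,\tau)-\hat\PP^\pi(h,\tau)}{\PP^\pi(h)}-\frac{\hat\PP^\pi(h,\tau)}{\hat\PP^\pi(h)}\cdot\frac{\PP^\pi(h)-\hat\PP^\pi(h)}{\PP^\pi(h)},$$
summing $\abs{\cdot}$ over $\tau$, and using $\sum_\tau\PP^\pi(\tau\mid h)=1$, I obtain
$$\norm{(\PP^\pi-\hat\PP^\pi)(\cdot\mid h)}_1\le\frac{E(h)}{\PP^\pi(h)}+\frac{e(h)}{\PP^\pi(h)}\sum_\tau\frac{\hat\PP^\pi(h,\tau)}{\hat\PP^\pi(h)},$$
a bound governed by $(E(h)+e(h))/\PP^\pi(h)$ as long as $\hat\PP^\pi(h)$ is comparable to $\PP^\pi(h)$ and the estimated conditional mass $\sum_\tau\hat\PP^\pi(\tau\mid h)$ stays bounded.

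Finally I would convert this into the probabilistic claim. Call $h$ \emph{bad} if $E(h)+e(h)>2\epsilon_c\,\PP^\pi(h)$. On the complementary good set, $e(h)\le 2\epsilon_c\PP^\pi(h)$ forces $\hat\PP^\pi(h)\ge\PP^\pi(h)/2$ and $\sum_\tau\hat\PP^\pi(\tau\mid h)\le 2$ (for $\epsilon_c$ small), so the previous display gives $\norm{(\PP^\pi-\hat\PP^\pi)(\cdot\mid h)}_1\le 2(E(h)+e(h))/\PP^\pi(h)\le 4\epsilon_c$. Every bad $h$ instead satisfies $\PP^\pi(h)<(E(h)+e(h))/(2\epsilon_c)$, so $\PP^\pi(\text{bad})=\sum_{h\ \text{bad}}\PP^\pi(h)\le\frac{1}{2\epsilon_c}\sum_h\bigl(E(h)+e(h)\bigr)\le\frac{2\epsilon_t}{2\epsilon_c}=\frac{\epsilon_t}{\epsilon_c}$. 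The main obstacle is precisely the division by a possibly tiny $\PP^\pi(h)$, compounded by the fact that the PSR estimate $\hat\PP^\pi(\cdot\mid h)$ need not be a genuine probability distribution; both are resolved by only claiming accuracy on histories of non-negligible probability, whose complement carries total mass at most $\epsilon_t/\epsilon_c$, while the assumed $\ell_1$-closeness keeps $\hat\PP^\pi(h)$ and the normalization factor controlled on the good set.
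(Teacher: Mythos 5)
Your proposal is correct and follows essentially the same route as the paper: both express the conditional as a ratio of joint probabilities under the extended policy $\pi'$ (the given $\pi$ followed by the deterministic test actions), apply the same algebraic decomposition of the ratio difference, and use the same Markov-type argument to show that histories on which the per-history error exceeds an $\epsilon_c$-relative bound carry total mass at most $\epsilon_t/\epsilon_c$ (this is exactly the paper's Lemma \ref{lemma:psr_cond_good_event}). The only place you are slightly looser than the paper is in asserting $\sum_h E(h)\le\epsilon_t$ for the length-$(t+l)$ joints from the stated length-$t$ hypothesis — the paper instead proves Lemma \ref{lemma:psr_cond_helper_sum} to control the $l$-step extension error relative to $\PP^\pi(h)$ via the PSR parameter errors — but since $l=O(1)$ this only affects constants, not the structure of the argument.
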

%Probability guarantee is with respect to the randomness in sampled histories. If there were an additional assumption on non-zero probability assignments over all possible observations from all state-action pairs, the error in conditional probability can be bounded in KL-divergence deterministically as in \cite{hsu2012spectral}. 

\subsection{Clustering with PSR Parameters and Separation}
\label{Appendix:clustering_with_psrs}
\begin{algorithm}[t]
    \caption{Recovery of LMDP parameters}
    \label{algo:clustering_trajectories}
    {\bf Input:} A set of short histories $\mH$ and tests $\mT$ for learning PSR, and tests $\mT'$ for clustering 

    \begin{algorithmic}[1]
        \STATE {\textcolor{gray}{ // Learn PSR parameters up to precision $o(\delta)$}}
        \STATE{Estimate PSR parameters $\{\hat{b}_{1, s}, \hat{b}_{\infty,s}, \hat{B}_{o,a,s}, \ \forall o,a,s \}$ following \eqref{eq:PSR_construction} in Appendix \ref{appendix:spectral_learning_psr} up to precision $o(\delta)$}
        \STATE {\textcolor{gray}{ // Get clusters $\{\hat{T}_m(\cdot|s,a), \hat{R}_m(\cdot|s,a)\}_{(s,a) \in \mS \times \mA, m\in [M]}$ with learned PSR parameters}}
        \STATE Initialize $V_s = \{\}$ for all $s \in \mS$
        \FOR{$n_1/3$ episodes}
            \STATE Play exploration policy $\pi$ and get a trajectory $h = (s_1, a_1, r_1, ..., s_H, a_H, r_H)$
            \STATE Get PSR state $\hat{b}_{H-1}$ at time step $H-1$ using equation \eqref{eq:psr_state_update}
            \STATE Compute $p_{H-1}(\mT^{'}) = \hat{\PP}(\mT^{'} |(s,a,r)_{1:H-2}, s_{H-1})$ using equation \eqref{eq:estimate_cond_prob_psr}
            \STATE Add $p_{H-1}$ in $V_{s_{H-1}}$
        \ENDFOR
        \FOR{all $s \in \mS$}
            \STATE Find $M$-cluster centers $C_s$ that cover all points in $V_s$ ({\it e.g.,} with $k$-means++ \cite{arthur2007k})
        \ENDFOR
        \STATE {\textcolor{gray}{ // Build each MDP model by correctly assigning contexts to estimated transition and reward probabilities}}
        \FOR{$n_1/3$ episodes}
            \STATE Play exploration policy $\pi$ until time-step $H-1$ and get a PSR state $\hat{b}_{H-1}$ at time step $H-1$
            \STATE Play an uniformly sampled action $a$ and get a PSR state $\hat{b}_{H}$ at time step $H$
            \STATE Compute $p_{H-1}(\mT^{'}), p_{H}(\mT^{'})$
            \STATE Find centers (labels) $c_{H-1} \in C_{s_{H-1}}$ and $c_{H} \in C_{s_{H}}$ such that $c_{H-1}$ and $c_H$ are the closest to $p_{H-1}$ and $p_H$ respectively.
            \STATE If $s_{H-1}$ and $s_H$ are different, let two centers $c_{H-1}$, $c_H$ be in the same context
        \ENDFOR
        \STATE If reordering of contexts are inconsistent, return FAIL
        \STATE Otherwise, construct $\hat{T}_m$ and $\hat{R}_m$ from cluster centers $\{C_s\}_{s \in \mS}$
        \FOR{$n_1/3$ episodes}
            \STATE Play exploration policy $\pi$ and get a PSR state $\hat{b}_H$ at time step $H$
            \STATE Compute $p_{H}(\mT^{'})$ and find centers $c_H \in C_{s_H}$ that is closest to $p_{H}(\mT^{'})$
            \STATE Get the context $m$ where $c_H$ belongs to, and update initial state distribution $\hat{\nu}_m$ of $m^{th}$ MDP
        \ENDFOR
    \end{algorithmic}
\end{algorithm}

We begin with the high-level idea of the algorithm that works as the following: suppose we have a new trajectory of length $H$ and the last two states are $s_{H-1}, s_H$ from unknown context $m^*$. We first consider true conditional probability given a history of $h = (s, a, o)_{1:H-2}$. Here $H > C_0 \cdot \delta^{-4} \log^2(1/\alpha) \log(N/\eta)$ is the length of episodes which satisfies the required condition for $H$ to infer the context (see Lemma \ref{lemma:separation_lemma}). $N$ is total number of episodes to be run with L-UCRL (Algorithm \ref{algorithm:lucrl}). Under Condition \ref{condition:well_separated_cond} with a failure probability $\epsilon_p = O(1/N)$, the true belief state over contexts $b$ at time step $O(H)$ satisfies
\begin{align*}
    b (m^*) \ge 1 - (\eta/N)^4. 
\end{align*}
With PSR parameters, we can estimate prediction probabilities at time step $H-1$ for any given histories. This in turn implies that for any intervening actions $a_1^\tau, ..., a_{l'}^\tau$ of length $l'$, the prediction probability given the history of length $H-1$ is nearly close to the prediction in the ${m^*}^{th}$ MDP:
\begin{align*}
    \| (\PP - \PP_{m^*}) (o_1^\tau...o_{l'}^\tau | h || \bm{do} \ a_1^\tau ... a_{l'}^\tau) \|_1 \le (\eta/N)^4,
\end{align*}
with probability at least $1 - (\eta/N)^4$. On the other hand, note that in the ${m^*}^{th}$ MDP, 
$$\PP_{m^*}(o_1^\tau...o_{l'}^\tau|h || \bm{do} \ a_1^\tau...a_{l'}^\tau) = \PP_{m^*} (o_1^\tau...o_{l'}^\tau | s_{H-1} || \bm{do} \ a_1^\tau ... a_{l'}^\tau).$$ 
Therefore, combining with Theorem \ref{corollary:psr_conditional_error}, we have that
\begin{align*}
    \| (\PP - \hat{\PP}) (o_1^\tau...o_{l'}^\tau | h || \bm{do} \ a_1^\tau ... a_{l'}^\tau) \|_1 \le 4\epsilon_c, \qquad \forall a_1^\tau ... a_{l'}^\tau \in \mA^{l'},
\end{align*}
with probability at least $1 - A^{l'} \epsilon_t / \epsilon_c$. In other words, the prediction probability estimated with PSR parameters are almost correct within error $(\eta/N)^{4} + 4\epsilon_c$ with probability at least $1 - A^{l'} \epsilon_t / \epsilon_c$.

In a slightly more general context, let $\mT^{'}$ be a set of all tests of length $l'$ with all possible intervening $A^{l'}$ action sequences where $1 \le l' \le l$. The core idea of clustering is to have the error in prediction probability $\epsilon_c$ smaller than the separation of prediction probabilities between different MDPs. Let $\delta_{psr}$ be the average $l_1$ distance between predictions of all length ${l'}$ tests such that:
\begin{align}
    \label{eq:separation_psr}
    \sum_{a_1^\tau ... a_{l'}^\tau \in \mA^{l'}} \| (\PP_{m_1} - \PP_{m_2}) (o_1^\tau...o_{l'}^\tau | s || \bm{do} \ a_1^\tau ... a_{l'}^\tau) \|_1 \ge A^{l'} \cdot \delta_{psr}, \qquad \forall s \in \mS, \ \forall m_1 \neq m_2 \in [M]. 
\end{align}
For instance, Assumption \ref{assumption:sufficient_set} alone gives that the equation \eqref{eq:separation_psr} holds with $l' = l$ and $A^{l'} \cdot \delta_{psr} \ge \sigma_\tau$, since
\begin{align*}
    \sum_{a_1^\tau ... a_{l'}^\tau \in \mA^{l'}} \| (\PP_{m_1} - \PP_{m_2}) (o_1^\tau...o_{l'}^\tau | s || \bm{do} \ a_1^\tau ... a_{l'}^\tau) \|_1 &\ge \|L_s (e_{m_1} - e_{m_2}) \|_1 \ge \|L_s (e_{m_1} - e_{m_2}) \|_2 \ge \sqrt{2} \sigma_\tau,
\end{align*}
where $e_m$ is a standard basis vector in $\mathbb{R}^M$ with $1$ at the $m^{th}$ position. If MDPs satisfy the Assumption \ref{assump:delta_separation}, then equation \eqref{eq:separation_psr} holds with $l' = 1$ and $\delta_{psr} = \delta$. The discussion in Section \ref{subsection:scenario3} applies to this case.

\begin{figure}[t]
    \centering
    \includegraphics[width=0.5\textwidth]{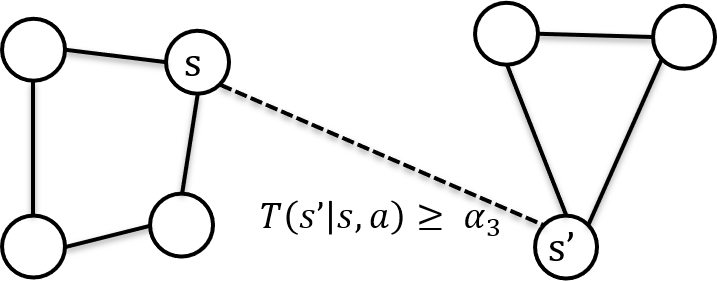}
    \caption{Connected graph constructed from an MDP with Assumption \ref{assumption:connectivity}}
    \label{fig:mdp_connectivity}
\end{figure}
Once the equation \eqref{eq:separation_psr} is given true with some $\delta_{psr} = \Theta(1)$, with high probability, we can identify the context by grouping trajectories with same ending state and similar $l'$-step predictions at time-step $H - 1$. Hence a prediction at the $(H-1)^{th}$ time step serves as a label for each trajectory. 

We are then left with recovering the full LMDP models. Even though we can cluster trajectories according to predictions conditioning on length $H-1$ histories, if we have two trajectories landed in two different states at $(H-1)^{th}$ time-step, we have no means to combine them even if they are still from the same context. In order to resolve this, our approach requires the following assumption: 
\begin{assumption}
    \label{assumption:connectivity}
    For all $m \in [M]$, let $\mG_m$ be an undirected graph where each node in $\mG_m$ corresponds to each state $s \in \mS$. Suppose we connect $(s, s')$ in $\mG_m$ (assign an edge between $s, s'$) for $s \neq s'$ if there exists at least one action $a \in \mA$ such that $T_m(s'|s,a) \ge \alpha_3$ for some $\alpha_3 > 0$. Then, $\mG_m$ is connected, {\it i.e.,} from any states there exists a path to any other states on $\mG_m$.
\end{assumption}
The high-level idea of Assumption \ref{assumption:connectivity} is to consider a graph between states as in Figure \ref{fig:mdp_connectivity}. We want to recover edges between different states $s, s'$ in $\mG_m$ so that we can assign same labels resulted from the same context but ended at different states. 

With Assumption \ref{assumption:connectivity}, if we have a trajectory that ends with last two states $(s_{H-1}, s_H) = (s,s')$ where $s \neq s'$, then we can find labels of this trajectory according to two different labeling rules at state $s$ and $s'$. Hence, we can associate labels assigned by predictions at two different states $s, s'$. Afterwards, even if we have two trajectories ending at different states from the same context, we can assign the same label to two trajectories if we have seen a connection between $(s,s')$. In other words, this step connects labels according to the same context in different states $s, s'$. Note that even if there is no direct connection, we can infer the identical context if we have a path in a graph by crossing over states that have direct connections.
\begin{remark}
    \label{remark:connectivity}
    Assumption \ref{assumption:connectivity} is satisfied if, for instance, each MDP has a finite diameter $D > 0$ \cite{jaksch2010near} where
    \begin{align*}
        D = \min_{\pi} \max_{m, s \neq s'} \Exs_m^\pi [\text{$\#$ of steps} ( s \rightarrow s')],
    \end{align*}
    $D$ is the minimum required number of expected steps in any MDP (with some deterministic memoryless policy $\pi$) to move from any state $s$ to any other states $s'$. In this case, each $\mG_m$ is connected with $\alpha_3 \ge 1/D$, since if we have some disconnected groups of states in $\mG_m$, then the diameter cannot be smaller than $1/\alpha_3$ (see also Figure \ref{fig:mdp_connectivity}). Note that in general, we only need $\alpha_3$ to be bounded below to make each graph $\mG_m$ connected for all states. With the connectivity of $\mG_m$, we can associate labels in all different states in a consistent way to resolve ambiguity in the ordering of contexts. 
\end{remark}

As we get more trajectories that end with various $s_{H-1}$ and $s_H$, whenever $s_{H-1} \neq s_H$, we can associate labels across more different states, and recover more connections (edges in $\mG_m$). Then, once every node in $\mG_m$ is connected in each context $m$, we can recover full transition and reward models for the context $m$ since we resolved the ambiguity in the ordering of labels of all different states. After we recover transition and reward models, we recover initial distribution of each MDP with a few more length $H$ trajectories. The full clustering procedure is summarized in Algorithm \ref{algo:clustering_trajectories}.

To reliably estimate the parameters with Algorithm \ref{algo:clustering_trajectories} to serve as a good initialization for Algorithm \ref{algorithm:lucrl}, we require
\begin{align*}
    \epsilon_c \le \frac{1}{4} \cdot \min(\epsilon_{init}, \delta_{psr}),  \ 
    (\eta/N)^{4} + A^{l'} \epsilon_t/\epsilon_c \le 0.01 /n_1,\end{align*}
which in turn implies the desired accuracy in total variation distance between full length $t$ trajectories: $\epsilon_t \ll A^{-l'} \epsilon_c / n_1$. In summary, total sample complexity we need for the initialization to be
\begin{align*}
    n_0 \ge C_0 \cdot \frac{H^2 M n_1^2}{\epsilon_c^2} \cdot \frac{A^{2l+2l'+1}}{p_{s}\sigma_\tau^2 \sigma_h^2} \left( S + \frac{A^l}{\sigma_h^2} \right) poly \log(N /\eta),
\end{align*}
for sufficiently large absolute constant $C_0 > 0$.

\subsubsection{Proof of Theorem \ref{theorem:final_result}}
\label{appendix:proof_clustering_with_psr}
\begin{proof}
    Let $n_1 \ge C_1 \cdot  \log(n_1) MA / (\alpha_2 \alpha_3)$, $\epsilon_c = c \cdot \min(\delta_{psr}, \epsilon_{init})$ for some sufficiently large constant $C_1 > 0$ and sufficiently small constant $c > 0$. Let $\epsilon_t = \epsilon_c / (10 n_1 A^{l'})$. Plugging this to the Theorem \ref{theorem:psr_main_tv_bound} and Theorem \ref{corollary:psr_conditional_error}, if we use $n_0$ short trajectories for learning PSR where 
    \begin{align*}
        n_0 \ge C_0 \cdot \frac{H^2 M^3}{\epsilon_c^2 \alpha_2^2 \alpha_3^2} \cdot \frac{A^{2l+2l'+3}}{p_{s}\sigma_\tau^2 \sigma_h^2} \left( S + \frac{A^l}{\sigma_h^2} \right) poly \log(N /\eta),
    \end{align*}
    then the error of the estimated conditional probability given a trajectory and a test is less than $\epsilon_c$ with probability at least $9/10$ for all $n_1$ trajectories (over the randomness of new trajectories).
    
    With Assumption \ref{assumption:reachability}, with $n_1 \gg M/\alpha_2 \log(MS)$, we can visit all states in all MDPs at least once at time step $H-1$ after $n_1/3$ episodes with probability larger than 9/10. Furthermore, for all $n_1/3$ trajectories $h_1, ..., h_{n_1/3}$ up to $H-1$ time step, we have
    \begin{align*}
        \| (\PP^\pi - \hat{\PP}^\pi) (\mT^{'} | h_i) \|_1 \le A^{l'} \epsilon_c, \qquad \forall i \in [n_1],
    \end{align*}
    with probability at least $9/10$ by union bound. Let $k_i$ and $s_i$ be the true context and ending state of $h_i$. With Assumption \ref{assump:delta_separation} and the separation Lemma \ref{lemma:separation_lemma}, we also have with probability at least $1 - \eta$ that
    \begin{align*}
        \| \PP^\pi(\mT^{'} | h_i) - \PP_{k_i}^\pi(\mT^{'} | s_i) \|_1 \le A^{l'} \cdot (\eta/N)^4, \qquad \forall i \in [n_1],
    \end{align*}
    where $N \gg n_1$ is the number of episodes to be run after initialization with Algorithm \ref{algorithm:lucrl}. Note that the prediction probabilities are $\delta_{psr}$-separated, Theorem \ref{corollary:psr_conditional_error} ensures that all possible sets of $l'$-step predictions are within error $\epsilon_c \ll \delta_{psr}$. Thus, we are guaranteed that all $h_i$s whose estimated $\hat{\PP}^\pi (\mT^{'} | h_i)$ are within $A^{l'} \epsilon_c$-error are generated from the same context. Note that with Assumption \ref{assump:delta_separation}, we have $l' = 1$ and $\delta_{psr} = \delta$. 
    
    Suppose now that we have Assumption \ref{assump:delta_separation}. In this case, we set $T'$ be a set of all possible observations of length $1$. Now we are remained with the recovery of full transition and reward models for each context. Note that same guarantees in the previous paragraph hold for predictions at the time step $H$ with probability $9/10$. With Assumption \ref{assumption:connectivity} (see also Remark \ref{remark:connectivity}), we build a connection graph for each context. That is, with $n_1 = O(MA \log(MS) / (\alpha_2 \alpha_3))$ episodes (since we need to see at least one occurrence of all edges in all contexts, {\it i.e.,} all $(m,s)$ with edges to neighborhood states $s'$ via action $a$), we have pairs of $(s_{H-1}, s_H)$ in the same trajectory where $s_{H-1}$ and $s_H$ are sufficient to recover all edges in all graphs $\mG_m$. Note that each edge occurs with probability at least $O(\alpha_2 \alpha_3 / (MA))$ and there are at most $MS^2$ edges, which gives a desired number of trajectories for clustering. 
    
    More specifically, by associating 1-step predictions at time steps $H-1$ and $H$ in the same trajectory, we can connect labels found at $s_{H-1}$ with estimated quantity $\PP_m(\cdot | s_{H-1}, a)$ and the same one found at $s_H$, as we confirm these labels are in the same true context $m$ of the trajectory. We can aggregate more sample trajectories until we recover all edges in the connection graph $\mG_m$. As long as this association results in a consistent reordering of contexts in all states, we can recover the full transition models (as well as rewards and initial distributions) for all contexts. 
    
    Now we visit every state $s$ with probability at least $\alpha_2$ at time step $H-1$ by Assumption \ref{assumption:reachability}. Then, by taking uniform action $a$ at time step $H-1$, with probability at least $\alpha_3/A$, we reveal the connection from $s$ to some other state $s'$ (which is essential for the consistent reordering of contexts) at time step $H$ by Assumption \ref{assumption:connectivity}. If we repeat this process for $n_1 = C_1 \cdot MA \log(MS) / (\alpha_2 \alpha_3)$ episodes, we can collect all necessary information for the reordering of contexts in all different states. In conclusion, Algorithm \ref{algo:clustering_trajectories} recovers $T_m$ and $R_m$ up to $\epsilon_c$-accuracy for all $m,s,a$ (not necessarily in the same order in $m$). Initial state distributions for all contexts can be similarly recovered. The entire process succeeds with probability at least $2/3$. 
\end{proof}

\section{Proofs for Spectral Learning of PSR}
In this section, we provide deferred proofs for the Lemmas used in Appendix \ref{appendix:spectral_learning_psr}. If the norm $\| \cdot \|$ is used without subscript, we mean $l_2$-norm for vectors and operator norm for matrices.

\subsection{Proof of Theorem \ref{theorem:psr_main_tv_bound}}
\label{appendix:proof_psr_main_tv}

Let us define a few notations before we get into the detail. Let us denote $p_s = 1^\top P_{H_s} = \PP(\text{end state} = s)$, and empirical counterpart $\hat{p}_s = 1^\top \hat{P}_{H_s}$, for the (empirical) probability of sampling a history ending with $s$. First, we normalize joint probability matrices:
\begin{align*}
    P_{\mT, \mH | s} = \frac{P_{\mT, \mH_s}}{p_s}, \quad P_{\mT, oa, \mH | s} = \frac{P_{\mT, oa, \mH_s}}{p_{s}}, \quad \hat{P}_{\mT, \mH | s} = \frac{\hat{P}_{\mT, \mH_s}}{p_{s}}, \quad \hat{P}_{\mT, oa, \mH | s} = \frac{\hat{P}_{\mT, oa, \mH_s}}{p_{s}}.
\end{align*}
We occasionally express unnormalized PSR states with PSR parameters $\{ (b_{\infty, s}, B_{o,a,s}, b_{1, s}) \}$ as given a history $(s,a,o)_{1:t-1}$ as
\begin{align*}
    {b}_{t, s_1} = {B}_{(o,a,s)_{t-1}} {B}_{(o,a,s)_{t-2}} ... {B}_{(o,a,s)_{1}} {b}_{1,s_1} = {B}_{(o,a,s)_{t-1:1}} {b}_{1,s_1}.
\end{align*}
The empirical counterpart will be defined similarly with $\hat{\cdot}$ on the top. We often concisely use $h_{t}$ instead of $(s,a,o)_{1:t-1} = (s,a,r)_{1:t-1} s_t$. We represent the probability of choosing actions $a_1, ..., a_{t-1}$ when the history is $h_{t-1}$ as
\begin{align*}
    \pi(a_{1:t-1}|h_{t-1}) = \pi(a_1|h_1) \pi(a_2|h_{2}) ... \pi(a_{t-1}|h_{t-1}).
\end{align*}

Now suppose that empirical estimates of probability matrices satisfy the following:
\begin{align*}
    \|P_{\mH_s} - \hat{P}_{\mH_s}\|_2 &\le \epsilon_{0,s} \\  
    \|\PP(\mT, s_1 = s) - \hat{\PP}(\mT, s_1 = s)\|_2 &\le \epsilon_{1,s} \\
    \|P_{\mT, \mH |s} - \hat{P}_{\mT, \mH | s} \|_2 &\le \epsilon_{2,s} \\
    \|P_{\mT, o a, \mH |s} - \hat{P}_{\mT, o a, \mH |s} \|_2 &\le \epsilon_{3,oas},
\end{align*}
for all $s, a, o$. The following lemma shows how the error in estimated matrices affects the accuracy of PSR parameters.
\begin{lemma}
    \label{lemma:appendix_PSR_l2_error}
    Let the true transformed PSR parameters with $\hat{U_s}, \hat{V}_s$ be
    \begin{align*}
        \tilde{B}_{o,a,s} &= \hat{U}_{s'}^\top P_{\mT, oas, \mH_s} \hat{V}_s (\hat{U}_{s}^\top P_{\mT, \mH_s} \hat{V}_s)^{-1}, \\
        \tilde{b}_{\infty, s} &= P_{\mH_s}^\top \hat{V}_s (\hat{U}_{s}^\top P_{\mT, \mH_s} \hat{V}_s)^{-1}, \\
        \tilde{b}_{1, s} &= \hat{U}_s^\top \PP(\mT, s_1 = s),
    \end{align*}
    for all $s, a, o$. Let $\sigma_{M,s}$ be the minimum ($M^{th}$) singular value of $\sigma_M(\hat{U}_s^\top P_{\mT, \mH|s} \hat{V}_s)$. Then, we have that
    \begin{align*}
        \|\tilde{B}_{o,a,s} - \hat{B}_{o,a,s}\|_2 &\le \frac{\epsilon_{3,oas}}{\sigma_{M,s}} + \sqrt{A^l} \PP^\pi (o | s || \bm{do} \ a) \frac{2 \epsilon_{2,s}}{\sigma_{M,s}^2}, \\
        \|\tilde{b}_{s,\infty} - \hat{b}_{s, \infty}\|_2 &\le \frac{\epsilon_{0,s}}{p_s \sigma_{M,s} } + \frac{2 \epsilon_{2,s}}{\sigma_{M,s}^2}, \\
        \|\tilde{b}_{s, 1} - \hat{b}_{s,1}\|_2 &\le \epsilon_{1,s},
    \end{align*}
    where $\PP^\pi (\cdot)$ is the probability of events when we sample histories with the exploration policy $\pi$. 
\end{lemma}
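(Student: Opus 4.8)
The plan is to treat this as a standard first-order matrix-perturbation estimate, isolating the single non-routine ingredient: an operator-norm bound on the ``numerator'' block that carries the importance-sampling factor $\sqrt{A^l}$.

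First I would normalize away $p_s$. Since $P_{\mT, oas, \mH_s} = p_s\,P_{\mT, oa, \mH\mid s}$ and $P_{\mT,\mH_s} = p_s\,P_{\mT,\mH\mid s}$, the factor $p_s$ cancels inside $\tilde B_{o,a,s}$, and likewise $(\hat U_s^\top P_{\mT,\mH_s}\hat V_s)^{-1} = p_s^{-1}Y^{-1}$ appears in $\tilde b_{\infty,s}$, where I write $Y := \hat U_s^\top P_{\mT,\mH\mid s}\hat V_s$ and $\hat Y$ for its empirical version (built from $\hat P_{\mT,\mH\mid s}$, which by its definition is also divided by the \emph{true} $p_s$). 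Then both $\tilde B_{o,a,s}$ and $\hat B_{o,a,s}$ have the shape $X Y^{-1}$, with $X := \hat U_{s'}^\top P_{\mT,oa,\mH\mid s}\hat V_s$. By definition $\sigma_{M,s} = \sigma_M(Y)$, so $\|Y^{-1}\| = 1/\sigma_{M,s}$; whenever $\epsilon_{2,s}\le\sigma_{M,s}/2$, Weyl's inequality gives $\sigma_M(\hat Y)\ge\sigma_{M,s}/2$, hence $\|\hat Y^{-1}\|\le 2/\sigma_{M,s}$. Because $\hat U_s,\hat V_s,\hat U_{s'}$ have orthonormal columns they are contractions, so $\|X-\hat X\|\le\epsilon_{3,oas}$ and $\|Y-\hat Y\|\le\epsilon_{2,s}$.

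The three bounds now follow from one algebraic identity. For the initial vector, $\tilde b_{1,s}-\hat b_{1,s} = \hat U_s^\top(\PP(\mT,s_1=s)-\hat\PP(\mT,s_1=s))$, so $\|\tilde b_{1,s}-\hat b_{1,s}\|\le\epsilon_{1,s}$ is immediate. For $\tilde B_{o,a,s}$ and $\tilde b_{\infty,s}$ I use $X Y^{-1}-\hat X\hat Y^{-1} = (X-\hat X)Y^{-1} + \hat X\,Y^{-1}(\hat Y-Y)\hat Y^{-1}$, which yields $\|\tilde B_{o,a,s}-\hat B_{o,a,s}\|\le \epsilon_{3,oas}/\sigma_{M,s} + 2\|\hat X\|\,\epsilon_{2,s}/\sigma_{M,s}^2$. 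For the normalization vector the same identity with the row vector $q^\top := p_s^{-1}P_{\mH_s}^\top\hat V_s$ gives the first term $\|q-\hat q\|/\sigma_{M,s}\le \epsilon_{0,s}/(p_s\sigma_{M,s})$ after using $\|P_{\mH_s}-\hat P_{\mH_s}\|_2\le\epsilon_{0,s}$, and the second term $2\|\hat q\|\,\epsilon_{2,s}/\sigma_{M,s}^2$, where $\|q\|\le p_s^{-1}\|P_{\mH_s}\|_1 = 1$ (so $\|\hat q\|\le 1+o(1)$, absorbed into the constant).

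It remains to supply the two $\|\hat X\|$-type bounds, and this is where the real work lies. I would prove $\|X\|\le\sqrt{A^l}\,\PP^\pi(o\mid s\,\|\,\bm{do}\ a)$ (and deduce the same for $\|\hat X\|$ up to the negligible $\epsilon_{3,oas}$) by unpacking the factorization $P_{\mT,oa,\mH_s}=L_{s'}D_{o,a,s}H_s$ with $D_{o,a,s}=\mathrm{diag}(\PP_1(o\mid a,s),\dots,\PP_M(o\mid a,s))$. The observation probability $\PP^\pi(o\mid s\,\|\,\bm{do}\ a)$ enters through $D_{o,a,s}$, while the $\sqrt{A^l}$ is the price of estimating interventional test probabilities under uniformly random action selection: each column of $L_{s'}$ is a test-probability vector of unit $\ell_\infty$-mass but $\ell_1$-mass $A^l$, so its contribution to the operator norm scales as $\sqrt{A^l}$ rather than $A^l$. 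The main obstacle is carrying this factor cleanly through the projection by $\hat U_{s'},\hat V_s$ and the normalization by $p_s$ (via $H_s$) so that the belief/end-state weights collapse exactly to $\PP^\pi(o\mid s\,\|\,\bm{do}\ a)$ without leaving a spurious $\sqrt{M}$ or $1/p_s$; once this operator-norm estimate is in hand, substituting it into the two displays above completes the proof.
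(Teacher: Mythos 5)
Your skeleton matches the paper's for everything routine: the resolvent-type decomposition $XY^{-1}-\hat X\hat Y^{-1}=(X-\hat X)Y^{-1}+\hat X(Y^{-1}-\hat Y^{-1})$ applied to $\tilde B_{o,a,s}$ and $\tilde b_{\infty,s}$, the immediate bound for $\tilde b_{1,s}$ from orthonormality of $\hat U_s$, the identity $\|Y^{-1}\|_2=1/\sigma_{M,s}$, and $\|P_{\mH_s}/p_s\|_2\le 1$ for the normalization vector. (The paper obtains the $2\epsilon_{2,s}/\sigma_{M,s}^2$ factor by citing Stewart's perturbation bound for pseudo-inverses, Lemma \ref{lemma:theorem_matrix_perturb}, instead of your explicit $Y^{-1}(\hat Y-Y)\hat Y^{-1}$ identity combined with Weyl's inequality; the two are interchangeable here.) Those parts of your argument are correct.

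The gap is precisely where you say ``the real work lies'': the estimate $\|P_{\mT, oa, \mH|s}\|_2\le\sqrt{A^l}\,\PP^\pi (o | s || \bm{do} \ a)$ is asserted but not established, and the route you sketch --- submultiplicativity through the factorization $L_{s'}D_{o,a,s}H_s$ --- does not deliver it. The product $\|L_{s'}\|_2\,\|D_{o,a,s}\|_2\,\|H_s\|_2$ produces $\max_m\PP_m(o|s,a)$ (the largest diagonal entry) rather than the marginal $\PP^\pi (o | s || \bm{do} \ a)$, and the natural column-wise $\ell_2$ bounds on $L_{s'}$ and $H_s$ each cost an extra $\sqrt M$ with no mechanism in your argument to cancel it; you flag this yourself as ``the main obstacle'' and leave it open. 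The paper sidesteps the factorization entirely: it bounds the operator norm by the Frobenius norm of the joint-probability matrix and computes entrywise, writing the $(i,j)$ entry of $P_{\mT, oa, \mH_s}$ as $\PP^\pi (o\, o_{1:l}^{\tau_i} | h_{s,j} || \bm{do} \ a\, a_{1:l}^{\tau_i})\,\PP^\pi(h_{s,j})$, factoring out $\PP^\pi (o | \cdot || \bm{do} \ a)$ by the chain rule, using $\sum_{o_{1:l}}\PP(\cdot)^2\le 1$ for each of the $A^l$ intervening action sequences (the sole source of the $\sqrt{A^l}$), and finishing with $\sqrt{\sum_h\PP^\pi(h)^2}\le\sum_h\PP^\pi(h)=p_s$, which cancels the $1/p_s$ normalization exactly. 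Since this single operator-norm estimate is the only non-routine content of the lemma, the proposal as written is incomplete at its crux.
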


The proofs of helping lemmas will be proved at the last of this subsection. We define the following quantities with error bounds similarly as in \cite{hsu2012spectral}:
\begin{align}
    \delta_{\infty, s} &= \|L_s^\top \hat{U}_s (\tilde{b}_{\infty,s} - \hat{b}_{\infty, s})\|_{\infty} \le \|L_s^\top\|_{\infty,2} \| \tilde{b}_{\infty,s} - \hat{b}_{\infty, s}\|_2 \le \sqrt{A^l} \| \tilde{b}_{\infty,s} - \hat{b}_{\infty, s}\|_2, \nonumber \\
    \delta_{1, s} &= \|(\hat{U}_s^\top L_s)^{-1} (\tilde{b}_{1,s} - \hat{b}_{1, s})\|_{1} \le \sqrt{M} \|\tilde{b}_{1,s} - \hat{b}_{1,s}\|_2 / \sigma_M(\hat{U_s}^\top L_s), \nonumber \\
    \Delta_{o,a,s} &= \|(\hat{U}_s^\top L_s)^{-1} (\tilde{B}_{o,a,s} - \hat{B}_{o,a,s}) (\hat{U}_s^\top L_s) \|_1 \le \sqrt{M} \|\tilde{B}_{o,a,s} - \hat{B}_{o,a,s}\|_{2} / \sigma_M(\hat{U}_s^\top L_s), \nonumber \\
    \Delta &= \max_{a,s} \sum_o \Delta_{o,a,s}, \ \delta_{\infty} = \max_s \delta_{\infty, s}, \ \delta_1 = \max_s \delta_{1,s}. \label{eq:psr_delta_bound}
\end{align}
We let $\epsilon_t = \delta_{\infty} + (1+\delta_{\infty}) ((1+\Delta)^t \delta_1 + (1+\Delta)^t - 1)$. We first note that for any fixed action sequence $a_{1:t-1}$, it holds that
\begin{align*}
    \sum_{(s,o)_{1:t-1}} &|\tilde{b}_{\infty,s}^\top \tilde{B}_{(o,a,s)_{t-1:1}} \tilde{b}_{1,s_1} - \hat{b}_{\infty,s}^\top \hat{B}_{(o,a,s)_{t-1:1}} \hat{b}_{1,s_1}| \\
    &\le \delta_{\infty} + (1 + \delta_{\infty}) ((1+\Delta)^t \delta_{1} + (1+\Delta)^t - 1).
\end{align*}
This equation is a direct consequence of the Lemma 12 in \cite{hsu2012spectral}. However, here we aim to get the bound for {\it all} history dependent policy, hence we need to establish the theorem by re-deriving the induction hypothesis with considering the policy. We now bound the original equation. Observe first that
\begin{align*}
    &\sum_{(s,a,r)_{1:t-1}, s_t} |\PP^{\pi} ((s,a,r)_{1:t-1}, s_t) - \hat{\PP}^{\pi} ((s,a,r)_{1:t-1}, s_t) | \\
    &= \sum_{(s,a,r)_{1:t-1}, s_t} \pi(a_{1:t-1} | h_{t-1})  |\tilde{b}_{\infty,s}^\top \tilde{B}_{(o,a,s)_{t-1:1}} \tilde{b}_{1,s_1} - \hat{b}_{\infty,s}^\top \hat{B}_{(o,a,s)_{t-1:1}} \hat{b}_{1,s_1}|.
\end{align*}
Following the steps in \cite{hsu2012spectral}, for each $s_1$, we will prove the following Lemma: 
\begin{lemma}
    \label{lemma:appendix_psr_sum_main}
    For any $t$, we have
    \begin{align}
        \sum_{(s,a,o)_{1:t-1}} \pi(a_{1:t-1} | h_{t-1}) \|(\hat{U}_{s_t}^\top L_{s_t})^{-1} (\tilde{B}_{(o,a,s)_{t-1:1}} \tilde{b}_{1,s_1} - & \hat{B}_{(o,a,s)_{t-1:1}} \hat{b}_{1,s_1}) \|_1 \nonumber \\
        &\qquad \le (1+\Delta)^{t} \delta_1 + (1 + \Delta)^{t} - 1. \label{eq:l1_sum_UL_inv}
    \end{align}
\end{lemma}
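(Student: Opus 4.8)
The plan is to prove the statement by induction on $t$, with the initial state $s_1$ held fixed (as indicated by the phrase ``for each $s_1$''), so that the sum $\sum_{(s,a,o)_{1:t-1}}$ runs over $a_1,o_1,\dots,a_{t-1},o_{t-1}$ while $s_2,\dots,s_t$ are read off from the observations $o_i=(s_{i+1},r_i)$. The engine of the argument is the exact change-of-basis identity satisfied by the \emph{true} transformed operators. Substituting $P_{\mT,oas,\mH_s}=L_{s'}D_{o,a,s}H_s$ and $P_{\mT,\mH_s}=L_sH_s$ (with $o=(s',r)$) into the definition of $\tilde{B}_{o,a,s}$ and using $(\hat{U}_s^\top L_s)\cdot H_s\hat{V}_s(\hat{U}_s^\top L_sH_s\hat{V}_s)^{-1}=I$ yields
\begin{align*}
    \tilde{B}_{o,a,s} = (\hat{U}_{s'}^\top L_{s'})\, D_{o,a,s}\, (\hat{U}_s^\top L_s)^{-1}, \qquad (\hat{U}_s^\top L_s)^{-1}\tilde{b}_{1,s}=(w\cdot\nu)(s).
\end{align*}
Thus, decoding any product of true operators into the ``physical'' basis turns it into a product of the diagonal matrices $D_{o,a,s}=\mathrm{diag}(\PP_m(o|s,a))_m$ acting on $(w\cdot\nu)(s_1)$; the resulting vector is entrywise nonnegative and its coordinates are exactly the per-context joint probabilities of the trajectory. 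I will write $S_t$ for the left-hand side of \eqref{eq:l1_sum_UL_inv} and prove $S_t\le(1+\Delta)S_{t-1}+\Delta$ with base case $S_1=\delta_{1,s_1}\le\delta_1$.

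For the inductive step I peel off the outermost operator, writing $\tilde{b}_{(o,a,s)_{t-1:1}}=\tilde{B}_{o_{t-1},a_{t-1},s_{t-1}}\tilde{b}_{(o,a,s)_{t-2:1}}$ and likewise for the hatted quantities, then split the decoded error $(\hat{U}_{s_t}^\top L_{s_t})^{-1}(\tilde{b}_{(o,a,s)_{t-1:1}}-\hat{b}_{(o,a,s)_{t-1:1}})$ into the standard two pieces: a ``propagation'' term $(\hat{U}_{s_t}^\top L_{s_t})^{-1}\tilde{B}_{o_{t-1},a_{t-1},s_{t-1}}\,(\tilde{b}-\hat{b})_{(o,a,s)_{t-2:1}}$ and an ``operator-error'' term $(\hat{U}_{s_t}^\top L_{s_t})^{-1}(\tilde{B}-\hat{B})_{o_{t-1},a_{t-1},s_{t-1}}\,\hat{b}_{(o,a,s)_{t-2:1}}$. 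Using the change-of-basis identity, the first piece equals $D_{o_{t-1},a_{t-1},s_{t-1}}\,(\hat{U}_{s_{t-1}}^\top L_{s_{t-1}})^{-1}(\tilde{b}-\hat{b})_{(o,a,s)_{t-2:1}}$, i.e. $D$ times the length-$(t-1)$ decoded error (in the $s_{t-1}$ basis). For the second piece I insert $(\hat{U}_{s_{t-1}}^\top L_{s_{t-1}})(\hat{U}_{s_{t-1}}^\top L_{s_{t-1}})^{-1}$ so that the operator-difference appears sandwiched as $(\hat{U}_{s_t}^\top L_{s_t})^{-1}(\tilde{B}-\hat{B})_{o_{t-1},a_{t-1},s_{t-1}}(\hat{U}_{s_{t-1}}^\top L_{s_{t-1}})$, whose induced $\ell_1\!\to\!\ell_1$ norm is exactly $\Delta_{o_{t-1},a_{t-1},s_{t-1}}$ (the quantity in \eqref{eq:psr_delta_bound}, with the output decoded in the next-state basis $s'=s_t$), multiplied by $\|(\hat{U}_{s_{t-1}}^\top L_{s_{t-1}})^{-1}\hat{b}_{(o,a,s)_{t-2:1}}\|_1$.

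The new ingredient beyond \cite{hsu2012spectral} is folding in the policy, and the key is that summing over the last observation and action collapses cleanly. Taking $\ell_1$ norms and weighting by $\pi(a_{1:t-1}|h_{t-1})=\pi(a_{t-1}|h_{t-1})\pi(a_{1:t-2}|h_{t-2})$, I use the two ``stochasticity'' facts $\sum_{o}D_{o,a,s}=I$ (each diagonal entry sums to $1$, giving $\sum_{o_{t-1}}\|D_{o_{t-1},a_{t-1},s_{t-1}}y\|_1=\|y\|_1$ independent of $a_{t-1}$) and $\sum_{a_{t-1}}\pi(a_{t-1}|h_{t-1})=1$. Applied to the propagation term these reduce it, after summing over $(a,o)_{1:t-2}$ with weights $\pi(a_{1:t-2}|h_{t-2})$, exactly to $S_{t-1}$; applied to the operator-error term and using $\sum_{o}\Delta_{o,a,s}\le\Delta$ they give $\Delta\cdot\hat{S}_{t-1}$, where $\hat{S}_{t-1}:=\sum_{(s,a,o)_{1:t-2}}\pi(a_{1:t-2}|h_{t-2})\|(\hat{U}_{s_{t-1}}^\top L_{s_{t-1}})^{-1}\hat{b}_{(o,a,s)_{t-2:1}}\|_1$ is the total decoded magnitude of the \emph{estimated} states. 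By the triangle inequality $\hat{S}_{t-1}\le S_{t-1}+\sum\pi\,\|(\hat{U}_{s_{t-1}}^\top L_{s_{t-1}})^{-1}\tilde{b}_{(o,a,s)_{t-2:1}}\|_1$, and the second summand is a total probability: since the decoded true states are nonnegative with coordinates $\prod_i\PP_m(o_i|s_i,a_i)\,w_m\nu_m(s_1)$, summing with the policy gives $\sum_m w_m\nu_m(s_1)\le1$. Hence $\hat{S}_{t-1}\le 1+S_{t-1}$, yielding $S_t\le S_{t-1}+\Delta(1+S_{t-1})=(1+\Delta)S_{t-1}+\Delta$. Unrolling from $S_1\le\delta_1$ gives $S_t\le(1+\Delta)^{t-1}\delta_1+(1+\Delta)^{t-1}-1$, which is dominated by the claimed bound $(1+\Delta)^{t}\delta_1+(1+\Delta)^{t}-1$.

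I expect the main obstacle to be bookkeeping the state-dependence of the decoding matrices $\hat{U}_s^\top L_s$: unlike the HMM case of \cite{hsu2012spectral} where a single $\hat{U}^\top O$ is used throughout, here each operator changes basis from the $s$-frame to the $s'$-frame, so one must be scrupulous that the telescoping error decomposition, the definition of $\Delta_{o,a,s}$, and the sandwiched similarity transform all reference the correct frames ($s_t$ on the output, $s_{t-1}$ on the input). The second delicate point is verifying that the policy weights genuinely factor out of the recursion constant — i.e. that after the $\sum_o D_{o,a,s}=I$ collapse the propagation term becomes independent of $a_{t-1}$ so that $\sum_{a_{t-1}}\pi(a_{t-1}|h_{t-1})=1$ can be applied — ensuring no spurious dependence on $A$ or on the number of observations enters the per-step factor $(1+\Delta)$.
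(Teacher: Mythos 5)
Your proposal is correct and follows essentially the same route as the paper: induction on $t$ with the recursion $S_t \le (1+\Delta)S_{t-1}+\Delta$, obtained by sandwiching the operator error with $(\hat{U}_{s_t}^\top L_{s_t})^{-1}(\cdot)(\hat{U}_{s_{t-1}}^\top L_{s_{t-1}})$, collapsing the last observation via $\sum_o D_{o,a,s}=I$ and the last action via $\sum_a \pi(a|h)=1$, and using that the decoded true state has total mass at most $1$. Your two-term splitting $(\tilde B-\hat B)\hat b+\tilde B(\tilde b-\hat b)$ combined with $\hat S_{t-1}\le 1+S_{t-1}$ is algebraically equivalent to the paper's three-term decomposition and yields the identical per-step bound.
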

We are now ready to prove the original claim. Let us denote $\tilde{b}_{t,s_1} = \tilde{B}_{(o,a,s)_{t-1:1}} \tilde{b}_{1,s_1}$ and $\hat{b}_{t,s_1} = \hat{B}_{(o,a,s)_{t-1:1}} \hat{b}_{1,s_1}$. The remaining step is to involve the effect of error in $\hat{b}_{\infty, s_t}$. Following the similar steps, we decompose the summation as:
\begin{align*}
    \sum_{(s,a,o)_{1:t-1}} &\pi(a_{1:t-1} | h_{t-1}) |\tilde{b}_{\infty,s_t}^\top \tilde{B}_{(o,a,s)_{t-1:1}} \tilde{b}_{1,s_1} - \hat{b}_{\infty,s_t}^\top \hat{B}_{(o,a,s)_{t-1:1}} \hat{b}_{1,s_1}| \\
    &= \sum_{(s,a,o)_{1:t-1}} \pi(a_{1:t-1} | h_{t-1}) |(\tilde{b}_{\infty,s_t} - \hat{b}_{\infty,s_t})^\top (\hat{U}_{s_t}^\top L_{s_t}) (\hat{U}_{s_t}^\top L_{s_t})^{-1} \tilde{b}_{t,s_1}| \\
    &\ + \sum_{(s,a,o)_{1:t-1}} \pi(a_{1:t-1} | h_{t-1}) |(\tilde{b}_{\infty,s_t} - \hat{b}_{\infty,s_t})^\top (\hat{U}_{s_t}^\top L_{s_t}) (\hat{U}_{s_t}^\top L_{s_t})^{-1} (\tilde{b}_{t,s_1} - \hat{b}_{t,s_1})| \\
    &\ + \sum_{(s,a,o)_{1:t-1}} \pi(a_{1:t-1} | h_{t-1}) |\tilde{b}_{\infty,s_t}^\top (\hat{U}_{s_t}^\top L_{s_t}) (\hat{U}_{s_t}^\top L_{s_t})^{-1} (\tilde{b}_{t,s_1} - \hat{b}_{t,s_1})|.
\end{align*}
For the first term, 
\begin{align*}
    \sum_{(s,a,o)_{1:t-1}} &\pi(a_{1:t-1} | h_{t-1}) |(\tilde{b}_{\infty,s_t} - \hat{b}_{\infty,s_t})^\top (\hat{U}_{s_t}^\top L_{s_t}) (\hat{U}_{s_t}^\top L_{s_t})^{-1} \tilde{b}_{t,s_1}| \\
    &\le \sum_{(s,a,o)_{1:t-1}} \pi(a_{1:t-1} | h_{t-1}) \|(\tilde{b}_{\infty,s_t} - \hat{b}_{\infty,s_t})^\top (\hat{U}_{s_t}^\top L_{s_t})\|_{\infty} \|(\hat{U}_{s_t}^\top L_{s_t})^{-1} \tilde{b}_{t,s_1}\|_1 \\
    &\le \sum_{(s,a,o)_{1:t-1}} \pi(a_{1:t-1} | h_{t-1}) \delta_{\infty, s_t} \|(\hat{U}_{s_t}^\top L_{s_t})^{-1} \tilde{b}_{t,s_1}\|_1 \\
    &\le \delta_{\infty} \sum_{(s,a,o)_{1:t-1}} \pi(a_{1:t-1} | h_{t-1}) \PP(h_{t} | a_{1:t-1}) \le \delta_{\infty}.
\end{align*}
Following the similar step, the second term is bounded by $\delta_{\infty} ((1+\Delta)^{t} \delta_1 + (1 + \Delta)^{t} - 1)$. For the last term, note that $\tilde{b}_{\infty,s_t}^\top (\hat{U}_{s_t}^\top L_{s_t}) = 1^\top$. Therefore, 
\begin{align*}
    \sum_{(s,a,o)_{1:t-1}} &\pi(a_{1:t-1} | h_{t-1}) |\tilde{b}_{\infty,s_t}^\top (\hat{U}_{s_t}^\top L_{s_t}) (\hat{U}_{s_t}^\top L_{s_t})^{-1} (\tilde{b}_{t,s_1} - \hat{b}_{t,s_1})| \\
    &\le \sum_{(s,a,o)_{1:t-1}} \pi(a_{1:t-1} | h_{t-1}) \|(\hat{U}_{s_t}^\top L_{s_t})^{-1} (\tilde{b}_{t,s_1} - \hat{b}_{t,s_1})\|_1 \\
    &\le ((1+\Delta)^{t} \delta_1 + (1 + \Delta)^{t} - 1). 
\end{align*}
Therefore, we conclude that
\begin{align*}
    \sum_{(s,a,o)_{1:t-1}} |\PP^{\pi} ((s,a,o)_{1:t-1}) - \hat{\PP}^{\pi} ((s,a,o)_{1:t-1}) | \le \delta_{\infty} + (1 + \delta_{\infty}) ((1+\Delta)^t \delta_{1} + (1+\Delta)^t - 1).
\end{align*}
Finally, in other to make the error term smaller than $\epsilon_t$, we want the followings:
\begin{align*}
    \delta_{\infty} \le \epsilon_t / 8, 
    \Delta \le \epsilon_t / 4t,
    \delta_1 \le \epsilon_t / 4.
\end{align*}
We need the following lemma on finite-sample error in estimated probability matrices:
\begin{lemma}
    \label{lemma:psr_finite_sample_error}
    For a sufficiently large constant $C > 0$, the errors in empirical estimates of the probability matrices are bounded by
    \begin{align*}
        \|P_{\mH_s} - \hat{P}_{\mH_s} \|_2 &\le C \sqrt{\frac{p_s}{N} \log(SA/\eta)}, \\
        \|P_{\mT,\mH_s} - \hat{P}_{\mT,\mH_s} \|_2 &\le C A^l \sqrt{\frac{p_s}{N} \log(SA/\eta)}, \\
        \|\PP(\mT,s_1 = s) - \hat{\PP} (\mT, s_1 = s) \|_2 &\le C A^l \sqrt{\frac{\PP(s_1 = s)}{N} \log(SA/\eta)}, \\
        \|P_{\mT,oa, \mH_s} - \hat{P}_{\mT, oa, \mH_s} \|_2 &\le C A^{l+1} \left( \sqrt{\frac{\PP^\pi (o|s || \bm{do} \ a) p_s}{NA} \log(SA/\eta)} + \frac{\log(SA/\eta)}{N}\right), 
    \end{align*}
    for all $s, a, o$ with probability at least $1 - \eta$. 
\end{lemma}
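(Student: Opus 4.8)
The plan is to recognize that all four quantities are empirical averages of $N$ i.i.d.\ bounded random matrices (viewing the vector-valued estimates as $d\times 1$ matrices), so that a single application of the matrix Bernstein inequality, with appropriately computed norm and variance parameters, delivers each bound. Recall the sampling protocol behind Assumption \ref{assumption:sufficient_set}: each of the $N$ short trajectories draws its history from the sampling policy $\pi$, and all intervening actions used to realize a test (and, for the fourth matrix, the single middle action of the $oa$-block) are drawn uniformly at random. A uniformly random length-$l$ action sequence coincides with a prescribed one with probability $A^{-l}$, which is exactly why the estimators carry the importance weights $A^l$ (and $A^{l+1}$ for the $oa$-matrix): with these weights each per-trajectory contribution is unbiased, e.g.\ $\Exs[A^l (W_n)_{ij}] = \PP^\pi(\tau_i, h_{s,j}) = (P_{\mT,\mH_s})_{ij}$, where $W_n$ is the $0/1$ indicator matrix for trajectory $n$. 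The crucial structural feature is that each raw indicator $W_n$ has at most one nonzero entry equal to $1$ (a trajectory determines a unique history, and given the realized actions and observations a unique test is matched), so $W_n$ is rank one with unit operator norm.

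For each term I would write $\hat P - P = \tfrac1N\sum_{n=1}^N S_n$ with $S_n$ the centered, reweighted per-trajectory matrix, and extract the two Bernstein parameters: the almost-sure bound $R = \max_n \|S_n\|$ and the variance proxy $v = \max\{\|\sum_n \Exs[S_n S_n^\top]\|,\ \|\sum_n \Exs[S_n^\top S_n]\|\}$ (operator norm throughout). Because $W_n$ is a single-entry indicator, $R$ is $O(1)$ for $P_{\mH_s}$, $O(A^l)$ for $P_{\mT,\mH_s}$ and $\PP(\mT,s_1{=}s)$, and $O(A^{l+1})$ for $P_{\mT,oa,\mH_s}$. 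The second moments $\Exs[W_nW_n^\top]$ and $\Exs[W_n^\top W_n]$ are diagonal matrices of marginal probabilities, so their operator norms reduce to a maximum of marginals, controlled by the relevant total mass: $\sum_i \Exs[(W_nW_n^\top)_{ii}] = p_s$ for the history/test matrix, giving $v \le N A^{2l} p_s$; analogously $v\le N p_s$ for $P_{\mH_s}$, $v\le N A^{2l}\PP(s_1{=}s)$ for the initial vector, and $v\le N A^{2l+1}\PP^\pi(o|s,a)\,p_s$ for $P_{\mT,oa,\mH_s}$ (the extra $A^{-1}$ coming from the uniform middle action). A point worth stressing is that although the test-success probabilities depend on the latent context through the belief, the concentration involves only the \emph{observable} marginal law of the data; the latent structure enters solely in identifying the population mean, not in the variance computation.

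Substituting into matrix Bernstein, $\PP(\|\sum_n S_n\|\ge N\epsilon) \le (d_1+d_2)\exp\!\big(\tfrac{-N^2\epsilon^2/2}{\,v + R N\epsilon/3\,}\big)$, and solving for $\epsilon$ at confidence $\eta$ gives $\|\hat P - P\| \lesssim \tfrac1N\sqrt{v\log((d_1+d_2)/\eta)} + \tfrac{R}{N}\log((d_1+d_2)/\eta)$. For the first three matrices the sub-Gaussian term dominates once $N$ is moderately large, recovering $C\sqrt{p_s/N\,\log}$, $CA^l\sqrt{p_s/N\,\log}$ and $CA^l\sqrt{\PP(s_1{=}s)/N\,\log}$ respectively; for the fourth the marginal $\PP^\pi(o|s,a)\,p_s$ may be arbitrarily small, so the Bernstein $R\log/N$ correction cannot be absorbed and one keeps both terms, yielding exactly $CA^{l+1}\big(\sqrt{\PP^\pi(o|s,a)\,p_s/(NA)\,\log}+\log/N\big)$. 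The dimension factor is $d_1+d_2 = |\mT|+|\mH_s|$ (or $|\mT|+1$ for the vectors); since $l=O(1)$ these are $\mathrm{poly}(A^l,S)$, so $\log(d_1+d_2)=O(\log(SA))$ and folds into $\log(SA/\eta)$ after enlarging $C$. Finally a union bound over the $S$ choices of $s$ (and the $\le 2S^2A$ choices of $(o,a,s)$ for the last bound) replaces $\eta$ by $\eta/\mathrm{poly}(SA)$, again absorbed into the $\log(SA/\eta)$ term.

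The main obstacle I anticipate is the bookkeeping in the variance computation under importance sampling: one must verify that reweighting by $A^l$ scales $\Exs[S_nS_n^\top]$ by $A^{2l}$ while the probability of realizing any particular action sequence contributes the compensating $A^{-l}$, so that the net variance is \emph{linear} rather than quadratic in the importance weight, and separately track the extra $A^{-1}$ from the middle action in the $oa$-matrix. Getting these powers of $A$ exactly right---together with confirming that the single-entry structure makes the cross second moments diagonal so that operator norms collapse to marginal masses bounded by $p_s$ (resp.\ $\PP^\pi(o|s,a)\,p_s$)---is the delicate step; everything else is a routine instantiation of matrix Bernstein.
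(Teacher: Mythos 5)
Your proof is correct, but it takes a genuinely different route from the paper. The paper follows the template of Proposition 19 in Hsu--Kakade--Zhang: it first splits, e.g., $\|P_{\mH_s}-\hat P_{\mH_s}\|_2 \le p_s\|P_{\mH|s}-\hat P_{\mH|s}\|_2 + |p_s-\hat p_s|\,\|P_{\mH|s}\|_2$, handles the second piece by scalar Bernstein for $|p_s-\hat p_s|$, and for the first piece conditions on the (random) number $N_s=\hat p_s N$ of trajectories ending at $s$, applies McDiarmid's bounded-differences inequality to the norm, and bounds the expectation through the Frobenius norm by a multinomial variance computation ($\Exs\|P_{\mT,\mH|s}-\hat P_{\mT,\mH|s}\|_2 \le A^l\sqrt{1/N_s}$, with the importance weight entering exactly as in your linear-in-$A^l$ accounting). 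You instead apply matrix Bernstein directly to the unconditional i.i.d.\ sum of centered, reweighted rank-one indicator matrices, computing $R$ and the variance proxy $v$ from the single-nonzero-entry structure; your computations of the powers of $A$ and of the marginal masses ($p_s$, $\PP(s_1=s)$, $\PP^\pi(o|s\,\|\,\bm{do}\,a)\,p_s/A$) are all correct and reproduce the four stated rates, including the additive $A^{l+1}\log/N$ term in the fourth bound, which in the paper arises instead from the Bernstein correction in the count estimates. The trade-off: the paper's McDiarmid route is dimension-free (no logarithm of $|\mT|+|\mH_s|$) but requires the somewhat delicate conditioning on the random $N_s$ and a separate treatment of $p_s$ versus $\hat p_s$; your route handles all four bounds uniformly from one tail inequality and avoids the conditioning, at the cost of a $\log(|\mT|+|\mH_s|)$ dimension factor, which you correctly observe is $O(\log(SA))$ since $l=O(1)$ and is absorbed into $\log(SA/\eta)$. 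Note also that your caveat that the first three bounds require the additive correction to be dominated (i.e., $N\gtrsim \log(SA/\eta)/p_s$) is shared by the paper, which states explicitly that its sample complexity guarantees $N\gg \log(1/\eta)/p_s$, so you are at the same level of rigor on this point.
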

This lemma follows the same concentration argument to Proposition 19 in \cite{hsu2012spectral} using McDiarmid's inequality. The proofs of three lemmas are given at the end of this subsection. With Lemma \ref{lemma:appendix_PSR_l2_error}, \ref{lemma:psr_finite_sample_error} and equation \eqref{eq:psr_delta_bound}, we now decide the sample size. For $\Delta$, 
\begin{align*}
    \Delta &\le \sum_{o} \Delta_{o,a,s} \\
    &\le \frac{\sqrt{M}}{\sigma_M(\hat{U}_s^\top L_s)} \left( \sum_{o} \frac{\epsilon_{3,oas}}{\sigma_{M,s}} + \sqrt{A^l} \PP^\pi (o | s || \bm{do} \ a) \frac{2\epsilon_{2,s}}{\sigma^2_{M,s}} \right) \\
    &\le \frac{\sqrt{M}}{\sigma_M(\hat{U}_s^\top L_s)} \left( \frac{\sum_{o} \epsilon_{3,oas}}{\sigma_{M,s}} + \sqrt{A^l} \frac{2\epsilon_{2,s}}{\sigma^2_{M,s}} \right). \\
\end{align*}
The summation of $\epsilon_{3,oas}$ is bounded by
\begin{align*}
    \sum_{o} \epsilon_{3,oas} &\le C A^{l+1} \sum_o \left(\sqrt{\frac{\PP^\pi (o|s || \bm{do} \ a)}{NA p_s} \log(SA/\eta)} + \frac{\log(SA/\eta)}{Np_s} \right) \\
    &\le C A^{l+1} \left(\sqrt{\frac{2S}{NA p_s} \log(SA/\eta)} + \frac{2S\log(SA/\eta)}{N p_s} \right).
\end{align*}
Also note that 
\begin{align*}
    \epsilon_{2,s} &\le C A^l \sqrt{\frac{\log(SA/\eta)}{N p_s}}.
\end{align*}
In order to have $\Delta < \epsilon_t / (4t)$, the sample size should be at least 
\begin{align*}
    N \ge C' \cdot K \frac{t^2}{\epsilon_t^2} \left( \frac{A^{2l+1} S}{p_s \sigma_M^2(\hat{U}_s^\top L_s) \sigma_{M,s}^2} + \frac{A^{3l+1} }{p_s \sigma_K^2(\hat{U}_s^\top L_s) \sigma_{M,s}^4} \right) \log(SA/\eta),
\end{align*}
for some large constant $C' > 0$. 

Finally, $\sigma_{M,s} = \sigma_M (\hat{U}_s^\top P_{\mT, \mH|s} \hat{V}_s) \ge (1 - \epsilon_0) \sigma_M(P_{\mT, \mH|s})$ where $\epsilon_0 = \epsilon_{2,s}^2 / ((1-\epsilon_t) \sigma_M(P_{\mT, \mH|s}))^2$ by applying Lemma \ref{lemma:theorem_singular_value} twice. Hence, as long as $N \gg 1/\sigma_M(P_{\mT, \mH|s})$, it holds that $\sigma_{M,s} \ge \sigma_M(P_{\mT, \mH|s}) / 2$. Similarly, we have $\sigma_M(\hat{U}_s^\top L_s) \ge \sigma_M(L_s)/2$. Plugging this inequality in the sample complexity completes the Theorem \ref{theorem:psr_main_tv_bound}.

\subsection{Proof of Theorem \ref{corollary:psr_conditional_error}}
\begin{proof}
    We first define an extended policy $\pi'$ which runs the given policy $\pi$ for $t$ times and play intervening action sequences $a_{t}...a_{t+l-1}$. Let us denote $o = (r,s')$ to represent a pair of reward and next state compactly. A simple corollary of Theorem \ref{theorem:psr_main_tv_bound} is the following lemma:
    \begin{lemma}
        \label{lemma:psr_cond_helper_sum}
        With the estimated PSR parameters in Theorem \ref{theorem:psr_main_tv_bound}, for any given trajectory $(s,a,o)_{1:t-1}$, the following holds:
        \begin{align*}
            \sum_{a_t, r_{t}, ..., s_{t+l}} & \pi(a_{1:t-1} | h_{t-1}) | b_{\infty, s_{t+l}}^\top B_{(o,a,s)_{t+l-1:1}} b_{1, s_1} - \hat{b}_{\infty, s_{t+l}}^\top \hat{B}_{(o,a,s)_{t+l-1:1}} \hat{b}_{1, s_1} | \\
            &\le \epsilon_l \PP^\pi ((s,a,o)_{1:t-1}) + 2 \pi(a_{1:t-1} | h_{t-1}) \|(\hat{U}_{s_{t}}^\top  L_{s_{t}})^{-1} (\tilde{b}_{t,s_1} - \hat{b}_{t,s_1}) \|_1.
        \end{align*}
    \end{lemma}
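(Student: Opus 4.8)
The plan is to mimic the telescoping decomposition used to prove Theorem~\ref{theorem:psr_main_tv_bound}, but to run the induction only over the $l$ test steps $t,\dots,t+l-1$ while treating the PSR state at time $t$ as a fixed (and already erroneous) initial condition. The first step is to pass to the coordinate system in which the true operators act diagonally: recall from Appendix~\ref{appendix:spectral_learning_psr} that $(\hat{U}_{s'}^\top L_{s'})^{-1}\tilde{B}_{o,a,s}(\hat{U}_s^\top L_s) = D_{o,a,s}$ and $\tilde{b}_{\infty,s}^\top(\hat{U}_s^\top L_s)=1^\top$, so that $(\hat{U}_{s_t}^\top L_{s_t})^{-1}\tilde{b}_{t,s_1} = D_{(o,a,s)_{t-1}}\cdots D_{(o,a,s)_1}(w\cdot\nu)(s_1)$ is the nonnegative vector of per-context history probabilities, with $\ell_1$-norm equal to $\PP(h_t\mid a_{1:t-1})$. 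Writing $e_t := (\hat{U}_{s_t}^\top L_{s_t})^{-1}(\tilde{b}_{t,s_1}-\hat{b}_{t,s_1})$ for the accumulated history error, this $\|e_t\|_1$ is precisely the quantity appearing on the right-hand side of the claim.

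Next I would split the summand into a history-error part and a test-propagation part. Let $\tilde{M} := \tilde{b}_{\infty,s_{t+l}}^\top \tilde{B}_{(o,a,s)_{t+l-1:t}}$ and $\hat{M} := \hat{b}_{\infty,s_{t+l}}^\top \hat{B}_{(o,a,s)_{t+l-1:t}}$ denote the test-part operator chains (including the final contraction), so that the summand is $\tilde{M}\tilde{b}_{t,s_1}-\hat{M}\hat{b}_{t,s_1}$. Decompose
\[
\tilde{M}\tilde{b}_{t,s_1}-\hat{M}\hat{b}_{t,s_1} = \tilde{M}(\tilde{b}_{t,s_1}-\hat{b}_{t,s_1}) + (\tilde{M}-\hat{M})\tilde{b}_{t,s_1} - (\tilde{M}-\hat{M})(\tilde{b}_{t,s_1}-\hat{b}_{t,s_1}).
\]
For the first and third pieces I insert $(\hat{U}_{s_t}^\top L_{s_t})(\hat{U}_{s_t}^\top L_{s_t})^{-1}$ at state $s_t$ and bound the $\ell_1$-mass of $\tilde{M}(\hat{U}_{s_t}^\top L_{s_t})$, resp.\ $(\tilde{M}-\hat{M})(\hat{U}_{s_t}^\top L_{s_t})$, exactly as in the proof of Theorem~\ref{theorem:psr_main_tv_bound}: summing over the test observations, the true chain contributes total mass at most $1+\delta_\infty$, and the operator-error chain contributes at most $\delta_\infty+(1+\delta_\infty)((1+\Delta)^l-1)$ by the induction of Lemma~\ref{lemma:appendix_psr_sum_main} and the definitions in \eqref{eq:psr_delta_bound}. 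Both pieces are thus controlled by $\|e_t\|_1$, and since $\delta_\infty,\Delta=o(1)$ their combined contribution is at most $2\|e_t\|_1$, which after multiplying by the fixed factor $\pi(a_{1:t-1}\mid h_{t-1})$ yields the second term of the claim.

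For the middle piece $(\tilde{M}-\hat{M})\tilde{b}_{t,s_1}$ I would again move to $(\hat{U}^\top L)^{-1}$ coordinates: here the true state maps to the nonnegative history-probability vector of mass $\PP(h_t\mid a_{1:t-1})$, and summing the operator-error chain over the $l$ test steps together with the final contraction reproduces exactly the $l$-step error accumulation of Theorem~\ref{theorem:psr_main_tv_bound}, namely the factor $\epsilon_l=\delta_\infty+(1+\delta_\infty)((1+\Delta)^l\delta_1+(1+\Delta)^l-1)$, scaled by the mass $\PP(h_t\mid a_{1:t-1})$. Multiplying by $\pi(a_{1:t-1}\mid h_{t-1})$ collapses $\pi(a_{1:t-1}\mid h_{t-1})\,\PP(h_t\mid a_{1:t-1})$ into $\PP^\pi((s,a,o)_{1:t-1})$, giving the first term $\epsilon_l\,\PP^\pi((s,a,o)_{1:t-1})$.

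The main obstacle is the bookkeeping in the last two paragraphs: one must verify that the history-error factor $\|e_t\|_1$ and the intrinsic $l$-step factor $\epsilon_l$ separate cleanly, that the cross term (operator error applied to state error) is genuinely lower order and can be folded into the factor $2$ rather than producing a product of the two small quantities, and that the policy weights $\pi(a_{1:t-1}\mid h_{t-1})$ and the test-step summation attach to the correct factors so that the collapse to $\PP^\pi((s,a,o)_{1:t-1})$ goes through. In effect this amounts to re-running the telescoping induction of Theorem~\ref{theorem:psr_main_tv_bound} with a nontrivial initial error $e_t$ injected at time $t$ in place of the clean base case $\delta_1$ at time $1$.
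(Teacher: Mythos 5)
Your proposal is correct and follows essentially the same route as the paper's proof: the identical three-term decomposition of $\tilde{M}\tilde{b}_{t,s_1}-\hat{M}\hat{b}_{t,s_1}$, the same insertion of $(\hat{U}_{s_t}^\top L_{s_t})(\hat{U}_{s_t}^\top L_{s_t})^{-1}$, the same sub-induction over the $l$ test steps bounding the operator-error chain by $(1+\Delta)^l-1$, and the same collapse of $\pi(a_{1:t-1}\mid h_{t-1})\,\PP(h_t\mid a_{1:t-1})$ into $\PP^\pi((s,a,o)_{1:t-1})$. The only minor discrepancy is that your $\epsilon_l$ carries a spurious $(1+\Delta)^l\delta_1$ term, whereas the paper's $\epsilon_l=\delta_\infty+(1+\delta_\infty)((1+\Delta)^l-1)$ omits it because the test chain is anchored at the exact history state $\tilde{b}_{t,s_1}$ rather than the estimated initial state; this only loosens the constant and does not affect the argument.
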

    On top of this lemma, we also have the following lemma that bounds the probability of bad events in which the error in estimated probability can be arbitrarily large:
    \begin{lemma}
        \label{lemma:psr_cond_good_event}
        For any history-dependent policy $\pi$, with the PSR parameters guaranteed in Theorem \ref{theorem:psr_main_tv_bound}, we have
        \begin{align*}
            |\hat{\PP}^{\pi'} ((s,a,o)_{1:t-1}) - \PP^{\pi} ((s,a,o)_{1:t-1})| \le \epsilon_c \PP^\pi ((s,a,o)_{1:t-1} ), \\
            \pi(a_{1:t-1} | h_{t-1}) \|(\hat{U}_{s_{t}}^\top  L_{s_{t}})^{-1} (\tilde{b}_{t,s_1} - \hat{b}_{t,s_1}) \|_1 \le \epsilon_c \PP^\pi ((s,a,o)_{1:t-1}),
        \end{align*}
        with probability at least $1 - \epsilon_t / \epsilon_c$.
    \end{lemma}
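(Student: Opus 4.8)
The plan is to run a Markov-type (averaging) argument that upgrades the \emph{aggregate} $\ell_1$ guarantees of Theorem~\ref{theorem:psr_main_tv_bound} and Lemma~\ref{lemma:appendix_psr_sum_main} into the stated \emph{per-trajectory} relative bounds. Throughout, write $\omega = (s,a,o)_{1:t-1}$ for a length-$(t-1)$ trajectory and observe that $\PP^{\pi}(\cdot)$ is a genuine probability measure on such trajectories (the action probabilities from $\pi$ are included), so $\sum_{\omega} \PP^{\pi}(\omega) = 1$. The ``with probability at least $1-\epsilon_t/\epsilon_c$'' in the statement is therefore understood with respect to the draw of the observed history $\omega$ under $\PP^{\pi}$, the PSR estimation having already succeeded by the hypothesis of Theorem~\ref{corollary:psr_conditional_error}.

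First I would dispatch the first inequality. Since the extended policy $\pi'$ agrees with $\pi$ on the first $t-1$ steps (the intervening actions act only at times $\geq t$, which do not enter $\omega$), the prefix probabilities coincide, $\hat{\PP}^{\pi'}(\omega) = \hat{\PP}^{\pi}(\omega)$, so Theorem~\ref{theorem:psr_main_tv_bound} applies verbatim and yields
\begin{align*}
    \sum_{\omega} \left| \hat{\PP}^{\pi'}(\omega) - \PP^{\pi}(\omega) \right| \le \epsilon_t.
\end{align*}
Define the bad set $B_1 = \{ \omega : |\hat{\PP}^{\pi'}(\omega) - \PP^{\pi}(\omega)| > \epsilon_c\, \PP^{\pi}(\omega) \}$. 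For every $\omega \in B_1$ we have $\PP^{\pi}(\omega) < |\hat{\PP}^{\pi'}(\omega)-\PP^{\pi}(\omega)|/\epsilon_c$, and summing this over $B_1$ gives
\begin{align*}
    \PP^{\pi}(B_1) = \sum_{\omega \in B_1} \PP^{\pi}(\omega) \le \frac{1}{\epsilon_c} \sum_{\omega} \left| \hat{\PP}^{\pi'}(\omega) - \PP^{\pi}(\omega) \right| \le \frac{\epsilon_t}{\epsilon_c}.
\end{align*}

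For the second inequality I would invoke Lemma~\ref{lemma:appendix_psr_sum_main}, which controls precisely the summand of interest: with $\tilde{b}_{t,s_1} = \tilde{B}_{(o,a,s)_{t-1:1}} \tilde{b}_{1,s_1}$ and $\hat{b}_{t,s_1} = \hat{B}_{(o,a,s)_{t-1:1}} \hat{b}_{1,s_1}$,
\begin{align*}
    \sum_{\omega} \pi(a_{1:t-1} | h_{t-1}) \left\| (\hat{U}_{s_t}^\top L_{s_t})^{-1} (\tilde{b}_{t,s_1} - \hat{b}_{t,s_1}) \right\|_1 \le (1+\Delta)^t \delta_1 + (1+\Delta)^t - 1 \le \epsilon_t,
\end{align*}
where the last step uses the definition of $\epsilon_t$ in Theorem~\ref{theorem:psr_main_tv_bound} (indeed $\epsilon_t \ge (1+\Delta)^t \delta_1 + (1+\Delta)^t - 1$ since $\delta_\infty \ge 0$). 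Applying the identical averaging step to the bad set $B_2$ on which this summand exceeds $\epsilon_c\, \PP^{\pi}(\omega)$ gives $\PP^{\pi}(B_2) \le \epsilon_t/\epsilon_c$.

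Finally, a union bound over $B_1 \cup B_2$ shows both inequalities hold simultaneously off a set of $\PP^{\pi}$-measure at most $2\epsilon_t/\epsilon_c$; the stated $1 - \epsilon_t/\epsilon_c$ then follows after absorbing the harmless factor of two into $\epsilon_t$ (equivalently, invoking the two aggregate bounds at level $\epsilon_t/2$), consistent with the paper's convention of not tracking absolute constants. I expect the main difficulty to be bookkeeping rather than analysis: one must verify that the summand in the second claim is exactly the quantity bounded in Lemma~\ref{lemma:appendix_psr_sum_main} (it reappears as the third decomposition term in the proof of Theorem~\ref{theorem:psr_main_tv_bound}), and that restricting $\pi'$ to the observed prefix returns the $\pi$-measure so that the total-variation guarantee transfers without modification.
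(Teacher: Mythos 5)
Your proposal is correct and takes essentially the same route as the paper: the paper also converts the aggregate $\ell_1$ bounds from Theorem \ref{theorem:psr_main_tv_bound} and equation \eqref{eq:l1_sum_UL_inv} of Lemma \ref{lemma:appendix_psr_sum_main} into per-trajectory relative bounds via a Markov-type averaging argument (phrased there as a proof by contradiction: if the bad event had probability exceeding $\epsilon_t/\epsilon_c$, the aggregate sum would exceed $\epsilon_t$), using the identity $\PP^\pi(\omega) = \pi(a_{1:t-1}\mid h_{t-1})\,\|(\hat{U}_{s_t}^\top L_{s_t})^{-1}\tilde{b}_{t,s_1}\|_1$ exactly as you do. You are in fact slightly more careful than the paper, which applies the argument to each claim separately without addressing the factor of two from the union bound that you flag and absorb into constants.
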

    
    By the definition of conditional test probability, note that
    \begin{align*}
        | \hat{\PP}^{\pi'} (\tau| (s,a,o)_{1:t-1} ) - \PP^{\pi'} (\tau| (s,a,o)_{1:t-1}) | &= \left| \frac{\hat{\PP}^{\pi'} (\tau, (s,a,o)_{1:t-1})}{\hat{\PP}^{\pi} ((s,a,o)_{1:t-1}) } - \frac{\PP^{\pi'} (\tau, (s,a,o)_{1:t-1})}{\PP^{\pi} ((s,a,o)_{1:t-1})} \right|,
    \end{align*}
    which is less than 
    \begin{align*}
        \frac{(1 + \epsilon_c)}{\PP^{\pi} ((s,a,o)_{1:t-1})} \left| \hat{\PP}^{\pi'} (\tau, (s,a,o)_{1:t-1}) - \PP^{\pi'} (\tau, (s,a,o)_{1:t-1}) \right| + \epsilon_c \frac{\PP^{\pi'} (\tau, (s,a,o)_{1:t-1})}{\PP^{\pi} ((s,a,o)_{1:t-1})},
    \end{align*}
    with probability at least $1 - \epsilon_t / \epsilon_c$. Now we sum over all possible trajectories $\tau$ in $\mO$ with intervening actions $a_1 ... a_l$ after observing $(s,a,o)_{1:t-1}$. Under the good event guaranteed in Lemma \ref{lemma:psr_cond_good_event}, the summation over all possible future trajectories is less than
    \begin{align*}
        \frac{(1 + \epsilon_c)}{\PP^{\pi} ((s,a,o)_{1:t-1})} \left((\epsilon_t + 2\epsilon_c) \PP^{\pi} ((s,a,o)_{1:t-1}) + \epsilon_c \PP^{\pi} ((s,a,o)_{1:t-1}) \right) \le 4\epsilon_c,
    \end{align*}
    from Lemma \ref{lemma:psr_cond_helper_sum} and \ref{lemma:psr_cond_good_event}. Therefore, for a fixed intervening action sequences $a_{t}, ... , a_{t+l-1}$, we can conclude that
    \begin{align*}
        \| \PP^{\pi} (\mO | (s,a,o)_{1:t-1} ||\bm{do} \ a_{t:t+l-1}) - \hat{\PP}^{\pi} (\mO | (s,a,o)_{1:t-1} ||\bm{do}\  a_{t:t+l-1}) \|_1 \le 4 \epsilon_{c},
    \end{align*}
    with probability at least $1 - \epsilon_{t} / \epsilon_c$.
\end{proof}

\subsection{Proof of Lemma \ref{lemma:appendix_PSR_l2_error}}
\begin{proof}
    The proof of the lemma can be done by unfolding expressions:
    \begin{align*}
        \|\tilde{B}_{o,a,s} - \hat{B}_{o,a,s}\| &= \|\hat{U}_{s'}^\top P_{\mT, oa, \mH|s} \hat{V}_s (\hat{U}_{s}^\top P_{\mT, \mH|s} \hat{V}_s)^{-1} - \hat{U}_{s'}^\top P_{\mT, oa, \mH|s} \hat{V}_s (\hat{U}_{s}^\top \hat{P}_{\mT, \mH|s} \hat{V}_s)^{-1} \| \\ 
        &\le \|(\hat{U}_{s'}^\top (P_{\mT, oa, \mH|s} - \hat{P}_{\mT, oa, \mH|s}) \hat{V}_s) (\hat{U}_{s}^\top P_{\mT, \mH|s} \hat{V}_s)^{-1} \| \\
        &\quad + \|(\hat{U}_{s'}^\top P_{\mT, oa, \mH|s} \hat{V}_s) ((\hat{U}_{s}^\top P_{\mT, \mH|s} \hat{V}_s)^{-1} - (\hat{U}_{s}^\top \hat{P}_{\mT, \mH|s} \hat{V}_s)^{-1} ) \| \\
        &\le \frac{\epsilon_{3,oas}}{\sigma_{M,s}} + \| P_{\mT, oa, \mH|s} \|_2 \frac{2 \epsilon_{2,s}}{\sigma_{M,s}^2} \le \frac{\epsilon_{3,oas}}{\sigma_{M,s}} + \PP^\pi (o | s || \bm{do} \ a) \sqrt{A^l} \frac{2 \epsilon_{2,s}}{\sigma_{M,s}^2},
    \end{align*}
    where we used Lemma \ref{lemma:theorem_matrix_perturb} from matrix perturbation theory for the second inequality, and 
    \begin{align*}
        \|P_{\mT, ao, \mH_s}\|_2 &\le \sqrt{\sum_{\tau \in \mT, h \in \mH_s} \PP^\pi (o o_1^\tau o_2^\tau ... o_l^\tau | h || \bm{do} \ a a_1^\tau ... a_l^\tau)^2  \PP^\pi(h)^2 } \\
        &\le \sqrt{ \sum_{a_1,a_2,...,a_l} \sum_{o_1,...,o_l} \sum_{h \in \mH_s} \PP^\pi (oo_1...a_l | h || \bm{do} \  aa_1...a_l)^2 \PP^\pi (h)^2 } \\
        &\le \sqrt{ \sum_{a_1,a_2,...,a_l} \PP^\pi(o|h||\bm{do} \ a)^2 \sum_{o_1,...,o_l} \sum_{h \in \mH_s} \PP^\pi (o_1...o_l | hao || \bm{do} \  a_1...a_l)^2 \PP^\pi (h)^2 } \\
        &\le \PP^\pi(o|h||\bm{do} \ a) \sqrt{ \sum_{a_1,a_2,...,a_l} \sum_{h \in \mH_s} \PP^\pi(h)^2 } = \PP^\pi(o|h||\bm{do} \ a) \sqrt{ A^l} \sqrt{\sum_{h \in \mH_s} \PP^\pi(h)^2 } \\
        &\le \PP^\pi(o|h||\bm{do} \ a) \sqrt{ A^l} \sum_{h \in \mH_s} \PP^\pi(h) = \PP^\pi(o|h||\bm{do} \ a) \sqrt{ A^l} p_s,
    \end{align*}
    therefore $\|P_{\mT, oa, \mH|s}\| \le \PP^\pi(o|h||\bm{do} \ a) \sqrt{A^l}$ for the last inequality. For initial and normalization parameters,
    \begin{align*}
        \|\tilde{b}_{\infty,s} - \hat{b}_{\infty,s}\| &\le \|(P_{\mH_s} - \hat{P}_{\mH_s})^\top \hat{V}_s (\hat{U}_{s}^\top P_{\mT, \mH_s} \hat{V}_s)^{-1} \| + \|P_{\mH_s}^\top \hat{V}_s ((\hat{U}_{s}^\top P_{\mT, \mH_s} \hat{V}_s)^{-1} - (\hat{U}_{s}^\top \hat{P}_{\mT, \mH_s} \hat{V}_s)^{-1} ) \| \\
        &\le \frac{\epsilon_{0,s}}{\sigma_M (P_{\mT, \mH_s})} + \| P_{\mH_s}/\hat{p}_s \|_2 \frac{2 \epsilon_{2,s}}{\sigma_M (P_{\mT, \mH|s})^2} \le \frac{\epsilon_{0,s}}{p_s \sigma_M (P_{\mT, \mH|s})} + \frac{2 \epsilon_{2,s}}{\sigma_M (P_{\mT, \mH|s})^2}.
    \end{align*}
    \begin{align*}
        \|\tilde{b}_{s,1} - \hat{b}_{s,1}\| &\le \epsilon_{1,s}.
    \end{align*}
\end{proof}

\subsection{Proof of Lemma \ref{lemma:appendix_psr_sum_main}}
\begin{proof}
    We show this lemma by induction on $t$. For $t = 1$, we bound $\|(\hat{U}_{s_1} L_{s_1})^{-1} (\tilde{b}_{1,s_1} - \hat{b}_{1,s_1})\|_1 \le \delta_{1,s_1}$ by definition. Now assume it holds for $t-1$ and check the induction hypothesis. 
    \begin{align*}
        &\sum_{(s,a,o)_{1:t-1}} \pi(a_{1:t-1} | h_{t-1}) \|(\hat{U}_{s_t}^\top L_{s_t})^{-1}  (\tilde{B}_{(o,a,s)_{t-1:1}} \tilde{b}_{1,s_1} - \hat{B}_{(o,a,s)_{t-1:1}} \hat{b}_{1,s_1}) \|_1 \\
        &= \sum_{(s,a,o)_{1:t-2}} \sum_{a_{t-1}, o_{t-1}} \pi(a_{t-1} | h_{t-1}) \pi(a_{1:t-2} | h_{t-2}) \|(\hat{U}_{s_t}^\top L_{s_t})^{-1}  (\tilde{B}_{(o,a,s)_{t-1:1}} \tilde{b}_{1,s_1} - \hat{B}_{(o,a,s)_{t-1:1}} \hat{b}_{1,s_1}) \|_1 \\
        &= \sum_{a_{t-1}} \pi(a_{t-1} | h_{t-1}) \sum_{o_{t-1}} \sum_{(s,a,o)_{1:t-2}} \pi(a_{1:t-2} | h_{t-2}) \|(\hat{U}_{s_t}^\top L_{s_t})^{-1}  (\tilde{B}_{(o,a,s)_{t-1:1}} \tilde{b}_{1,s_1} - \hat{B}_{(o,a,s)_{t-1:1}} \hat{b}_{1,s_1}) \|_1 \\
        &= \sum_{a_{t-1}} \pi(a_{t-1} | h_{t-1}) \sum_{o_{t-1}} \sum_{(s,a,o)_{1:t-2}} \pi(a_{1:t-2} | h_{t-2}) \|(\hat{U}_{s_t}^\top L_{s_t})^{-1}  (\tilde{b}_{t,s_1} -   \hat{b}_{t,s_1}) \|_1.
    \end{align*}
    We investigate the inside sum by decomposing $\|(\hat{U}_{s_t}^\top L_{s_t})^{-1}  (\tilde{b}_{t,s_1} -   \hat{b}_{t,s_1}) \|_1$ as
    \begin{align*}
        \|(\hat{U}_{s_t}^\top L_{s_t})^{-1} (\tilde{b}_{t,s_1} - &\hat{b}_{t,s_1}) \|_1 = \|(\hat{U}_{s_t}^\top L_{s_t})^{-1}  (\tilde{B}_{(o,a, s)_{t-1}} - \hat{B}_{(o,a, s)_{t-1}}) (\hat{U}_{s_{t-1}}^\top L_{s_{t-1}})\|_1 \| (\hat{U}_{s_{t-1}}^\top L_{s_{t-1}})^{-1} \tilde{b}_{t-1,s_1} \|_1 \\
        &+ \|(\hat{U}_{s_t}^\top L_{s_t})^{-1}  (\tilde{B}_{(o,a, s)_{t-1}} - \hat{B}_{(o,a, s)_{t-1}}) (\hat{U}_{s_{t-1}}^\top L_{s_{t-1}})\|_1 \| (\hat{U}_{s_{t-1}}^\top L_{s_{t-1}})^{-1} (\tilde{b}_{t-1,s_1} - \hat{b}_{t-1, s_1}) \|_1 \\
        &+ \|(\hat{U}_{s_t}^\top L_{s_t})^{-1}  \tilde{B}_{(o,a, s)_{t-1}} (\hat{U}_{s_{t-1}}^\top L_{s_{t-1}}) (\hat{U}_{s_{t-1}}^\top L_{s_{t-1}})^{-1} (\tilde{b}_{t-1,s_1} - \hat{b}_{t-1, s_1}) \|_1.
    \end{align*}
    For the first term, 
    \begin{align*}
        \sum_{o_{t-1}} &\sum_{(s,a,o)_{1:t-2}} \pi(a_{1:t-2} | h_{t-2}) \|(\hat{U}_{s_t}^\top L_{s_t})^{-1}  (\tilde{B}_{(o,a, s)_{t-1}} - \hat{B}_{(o,a, s)_{t-1}}) (\hat{U}_{s_{t-1}}^\top L_{s_{t-1}})\|_1 \| (\hat{U}_{s_{t-1}}^\top L_{s_{t-1}})^{-1} \tilde{b}_{t-1,s_1} \|_1 \\
        &= \sum_{o_{t-1}} \sum_{(s,a,o)_{1:t-2}} \pi(a_{1:t-2} | h_{t-2}) \Delta_{(o,a,s)_{t-1}} \| (\hat{U}_{s_{t-1}}^\top L_{s_{t-1}})^{-1} \tilde{b}_{t-1,s_1} \|_1 \\
        &\le \Delta \sum_{(s,a,o)_{1:t-2}}  \pi(a_{1:t-2} | h_{t-2}) \| (\hat{U}_{s_{t-1}}^\top L_{s_{t-1}})^{-1} \tilde{b}_{t-1,s_1} \|_1 \\
        &= \Delta \sum_{(s,a,o)_{1:t-2}} \pi(a_{1:t-2} | h_{t-2}) \PP(h_{t-2}) = \Delta,
    \end{align*}
    where we used the definition of $\tilde{b}_{t-1,s_1} = \hat{U}_{s_{t-1}}^\top L_{s_{t-1}} \PP((s,a,o)_{1:t-2})$. For the second term, by the induction hypothesis,
    \begin{align*}
        \sum_{o_{t-1}} &\sum_{(s,a,o)_{1:t-2}} \pi(a_{1:t-2} | h_{t-2}) \Delta_{(o,a,s)_{t-1}} \| (\hat{U}_{s_{t-1}}^\top L_{s_{t-1}})^{-1} (\tilde{b}_{t-1,s_1} - \hat{b}_{t-1,s_1}) \|_1 \\
        &\le \Delta \sum_{(s,a,o)_{1:t-2}} \pi(a_{1:t-2} | h_{t-2}) \| (\hat{U}_{s_{t-1}}^\top L_{s_{t-1}})^{-1} (\tilde{b}_{t-1,s_1} - \hat{b}_{t-1,s_1}) \|_1 \\
        &= \Delta ((1+\Delta)^{t-1} \delta_1 + (1 + \Delta)^{t-1} - 1).
    \end{align*}
    The last term can also be derived following the same argument in \cite{hsu2012spectral}. It gives
    \begin{align*}
        \sum_{o_{t-1}} &\sum_{(s,a,o)_{1:t-2}}  \pi(a_{1:t-2} | h_{t-2}) \|(\hat{U}_{s_t}^\top L_{s_t})^{-1}  \tilde{B}_{(o,a, s)_{t-1}} (\hat{U}_{s_{t-1}}^\top L_{s_{t-1}}) (\hat{U}_{s_{t-1}}^\top L_{s_{t-1}})^{-1} (\tilde{b}_{t-1,s_1} - \hat{b}_{t-1, s_1}) \|_1 \\
        &\le \sum_{o_{t-1}} \sum_{(s,a,o)_{1:t-2}}  \pi(a_{1:t-2} | h_{t-2}) \|D_{(o,a, s)_{t-1}} (\hat{U}_{s_{t-1}}^\top L_{s_{t-1}})^{-1} (\tilde{b}_{t-1,s_1} - \hat{b}_{t-1, s_1}) \|_1 \\
        &\le \sum_{o_{t-1}} \sum_{(s,a,o)_{1:t-2}}  \pi(a_{1:t-2} | h_{t-2}) \|D_{(o,a,s)_{t-1}}\|_1 \|(\hat{U}_{s_{t-1}}^\top L_{s_{t-1}})^{-1} (\tilde{b}_{t-1,s_1} - \hat{b}_{t-1, s_1}) \|_1 \\
        &\le (1+\Delta)^{t-1} \delta_1 + (1 + \Delta)^{t-1} - 1.
    \end{align*}
    Now combining these three bounds, we get
    \begin{align*}
        \sum_{a_{t-1}} &\pi(a_{t-1} | h_{1:t-1}) \sum_{o_{t-1}} \sum_{(a,r,s')_{1:t-2}}  \pi(a_{1:t-2} | h_{1:t-2}) \|(\hat{U}_{s_t}^\top L_{s_t})^{-1}  (\tilde{b}_{t,s_1} -   \hat{b}_{t,s_1}) \|_1 \\
        &\le \sum_{a_{t-1}} \pi(a_{t-1} | h_{1:t-1}) (\Delta + (1 + \Delta) ((1+\Delta)^{t-1} \delta_1 + (1 + \Delta)^{t-1} - 1)) \\
        &\le \sum_{a_{t-1}} \pi(a_{t-1} | h_{1:t-1}) ((1+\Delta)^{t} \delta_1 + (1 + \Delta)^{t} - 1) = (1+\Delta)^{t} \delta_1 + (1 + \Delta)^{t} - 1. 
    \end{align*}
\end{proof}

\subsection{Proof of Lemma \ref{lemma:psr_finite_sample_error}}
\begin{proof}
    For the first inequality, we note that
    \begin{align*}
        \|P_{\mH_s} - \hat{P}_{\mH_s} \|_2 &\le p_s \|P_{\mH|s} - \hat{P}_{\mH|s} \|_2 + |p_s - \hat{p}_s| \| P_{\mH|s} \|_2.
    \end{align*}
    Let $N_s = \hat{p}_s N$. For the first term, we can use McDiarmid's inequality since a change at one sample among $N_s$ samples (conditioned on starting test from $s$) causes only $\sqrt{2/N_s}$ difference:
    \begin{align*}
        \|P_{\mH|s} - \hat{P}_{\mH|s}\|_2 - \Exs[\|P_{\mH|s} - \hat{P}_{\mH|s}\|_2]  \lesssim \sqrt{\frac{1}{N_s} \ln(1/\eta)},
    \end{align*}
    with probability at least $1 - \eta / 100$. Let $\#(h_{s,i})$ be a count of a history $h_{s,i}$ after seeing $N_s$ histories that end with $s$. Also,
    \begin{align*}
        \Exs[\|P_{\mH|s} - \hat{P}_{\mH|s}\|_2] &\le \sqrt{\Exs[\|P_{\mH|s} - \hat{P}_{\mH|s}\|_2^2}] \le \sqrt{\sum_{i=1}^{|H_s|} Var \left(\frac{1}{N_s} \#(h_{s,i}) \Big| s \right)} \\
        &\le \sqrt{\frac{1}{N_s} \sum_{i=1}^{|H_s|} \PP(h_{s,i} | s) } \le \sqrt{\frac{1}{N_s}}.
    \end{align*}
    Therefore, we can conclude that $\|P_{\mH|s} - \hat{P}_{\mH|s}\|_2 \lesssim \sqrt{\frac{1}{N_s} \ln(1/\eta)}$. On the other hand, we can show that $|p_s - \hat{p}_s| \lesssim \sqrt{\frac{p_s}{N} \log(1/\eta)} + \frac{\log(1/\eta)}{N}$ via a simple application of Bernstein's inequality. Note that our sample complexity guarantees $N \gg \log(1/\eta)/p_s$. Hence,
    \begin{align*}
        \|P_{\mH_s} - \hat{P}_{\mH_s} \|_2 &\le p_s \|P_{\mH|s} - \hat{P}_{\mH|s} \|_2 + |p_s - \hat{p}_s| \| P_{\mH|s} \|_2 \\
        &\lesssim p_s \sqrt{\frac{1}{N_s} \log(1/\eta)} + \sqrt{\frac{p_s}{N}  \log(1/\eta)} \| P_{\mH|s} \|_2 \lesssim \sqrt{\frac{p_s}{N} \log(1/\eta)}.
    \end{align*}
    
    Similarly, we can show that
    \begin{align*}
        \Exs[\|P_{\mT, \mH|s} - \hat{P}_{\mT, \mH|s}\|_2] &\le \sqrt{\Exs[\|P_{\mT, \mH|s} - \hat{P}_{\mT, \mH|s}\|_F^2}] \le \sqrt{\sum_{j \in [\mT], i \in \left[|\mH_s| \right]} Var \left(\frac{A^l}{N_s} \#(\tau_j, h_{s,i}) \Big| s \right)} \\
        &\le A^l \sqrt{\frac{1}{N_s} \sum_{j \in [\mT], i \in \left[|\mH_s| \right]} \PP(\tau_j, h_{s,i} | s) } \le A^l \sqrt{\frac{1}{N_s}}.
    \end{align*}
    Following the same argument with McDiarmid's inequality, we get the second inequality. The remaining inequalities can be shown through similar arguments. Taking over union bounds over all $s, a, o$ gives the Lemma. 
\end{proof}

\subsection{Proof of Lemma \ref{lemma:psr_cond_helper_sum}}
\begin{proof}
    As in the proof in Theorem \ref{theorem:psr_main_tv_bound}, let us denote $\tilde{b}_{t,s_1} = \tilde{B}_{(o,a,s)_{t-1:1}} \tilde{b}_{1,s_1}$. Then, we can decompose the terms as before:
    \begin{align*}
        &\sum_{r_{t},s_{t+1}, ..., s_{t+l}} | b_{\infty, s_{t+l}}^\top B_{(o,a,s)_{t+l-1:1}} b_{1, s_1} - \hat{b}_{\infty, s_{t+l}}^\top \hat{B}_{(o,a,s)_{t+l-1:1}} \hat{b}_{1, s_1} | \\
        &= \sum_{r_{t}, ..., s_{t+l}} | \tilde{b}_{\infty, s_{t+l}}^\top \tilde{B}_{(o,a,s)_{t+l-1:t}} \tilde{b}_{t, s_1} - \hat{b}_{\infty, s_{t+l}}^\top \hat{B}_{(o,a,s)_{t+l-1:t}} \hat{b}_{t, s_1} | \\
        &= \sum_{r_{t}, ..., s_{t+l}} | \tilde{b}_{\infty, s_{t+l}}^\top \tilde{B}_{(o,a,s)_{t+l-1:t}} (\hat{U}_{s_t}^\top L_{s_t}) (\hat{U}_{s_t}^\top L_{s_t})^{-1} \tilde{b}_{t, s_1} - \hat{b}_{\infty, s_{t+l}}^\top \hat{B}_{(o,a,s)_{t+l-1:t}} (\hat{U}_{s_t}^\top L_{s_t}) (\hat{U}_{s_t}^\top L_{s_t})^{-1} \hat{b}_{t, s_1} | \\
        &\le \sum_{r_{t}, ..., s_{t+l}} \| (\tilde{b}_{\infty, s_{t+l}}^\top \tilde{B}_{(o,a,s)_{t+l-1:t}} - \hat{b}_{\infty, s_{t+l}} \hat{B}_{(o,a,s)_{t+l-1:t}}) (\hat{U}_{s_t}^\top L_{s_t})\|_1 \|(\hat{U}_{s_t}^\top L_{s_t})^{-1} \tilde{b}_{t,s_1} \|_1 \\
        &\ + \sum_{r_{t}, ..., s_{t+l}} \| (\tilde{b}_{\infty, s_{t+l}}^\top \tilde{B}_{(o,a,s)_{t+l-1:t}} - \hat{b}_{\infty, s_{t+l}} \hat{B}_{(o,a,s)_{t+l-1:t}}) (\hat{U}_{s_t}^\top L_{s_t}) \|_1 \| (\hat{U}_{s_t}^\top L_{s_t})^{-1} (\tilde{b}_{t,s_1} - \hat{b}_{t,s_1}) \|_1  \\
        &\ + \sum_{r_{t}, ..., s_{t+l}} \| \tilde{b}_{\infty, s_{t+l}}^\top \tilde{B}_{(o,a,s)_{t+l-1:t}} (\hat{U}_{s_t}^\top L_{s_t}) \|_1 \| (\hat{U}_{s_t}^\top L_{s_t})^{-1} (\tilde{b}_{t,s_1} - \hat{b}_{t,s_1}) \|_1.
    \end{align*}
    We follow the same induction procedure starting with showing the following equation:
    \begin{align*}
        \sum_{s_{t}, r_{t}, ..., s_{t+l-1}} \| (\hat{U}_{s_{t+l}}^\top L_{s_{t+l}})^{-1} (\tilde{B}_{(o,a,s)_{t+l-1:t}} - \hat{B}_{(o,a,s)_{t+l-1:t}}) (\hat{U}_{s_t}^\top L_{s_t})\|_1 \le (1+\Delta)^l - 1.
    \end{align*}
    We show this equation by induction as in the previous proof. If $l = 1$, then
    $$\|(\hat{U}_{s_{t+1}} L_{s_{t+1}})^{-1} (\tilde{B}_{(o,a,s)_{t}} - \hat{B}_{(o,a,s)_{t}}) (\hat{U}_{s_t}^\top L_{s_t})\|_1 \le \sum_{o_t} \Delta_{o_t, a_t, s_t} \le \Delta,$$
    by the definition of $\Delta$. Now we assume it holds for sequences of length less than $l$, and prove the induction hypothesis for $l$. We again split the term into three terms:
    \begin{align*}
        \sum_{r_{t}, ..., s_{t+l}} &\| (\hat{U}_{s_{t+l}}^\top L_{s_{t+l}})^{-1} (\tilde{B}_{(o,a,s)_{t+l-1:t}} - \hat{B}_{(o,a,s)_{t+l-1:t}}) (\hat{U}_{s_{t+l-1}}^\top L_{s_{t+l-1}})\|_1 \\
        &\le \sum_{s_{t}, r_{t}, ..., s_{t+l}} \| (\hat{U}_{s_{t+l}}^\top L_{s_{t+l}})^{-1} (\tilde{B}_{(o,a,s)_{t+l-1}} - \hat{B}_{(o,a,s)_{t+l-1}}) (\hat{U}_{s_{t+l-1}}^\top L_{s_{t+l-1}})\|_1 \times \\ &\qquad \qquad \qquad \qquad \|(\hat{U}_{s_{t+l-1}}^\top L_{s_{t+l-1}})^{-1} \tilde{B}_{(o,a,s)_{t+l-2:t}} (\hat{U}_{s_{t}}^\top L_{s_{t}})\|_1 \\
        &+ \sum_{r_{t}, ..., s_{t+l}} \|(\hat{U}_{s_{t+l}}^\top L_{s_{t+l}})^{-1} (\tilde{B}_{(o,a,s)_{t+l-1}} - \hat{B}_{(o,a,s)_{t+l-1}})  (\hat{U}_{s_{t+l-1}}^\top L_{s_{t+l-1}})\|_1 \times \\
        &\qquad \qquad \qquad \qquad \|(\hat{U}_{s_{t+l-1}}^\top L_{s_{t+l-1}})^{-1} (\tilde{B}_{(o,a,s)_{t+l-2:t}} - \hat{B}_{(o,a,s)_{t+l-2:t}}) (\hat{U}_{s_{t}}^\top L_{s_{t}})\|_1 \\
        &+\sum_{r_{t}, ..., s_{t+l}} \| (\hat{U}_{s_{t+l}}^\top L_{s_{t+l}})^{-1} \tilde{B}_{(o,a,s)_{t+l-1}} (\hat{U}_{s_{t+l-1}}^\top L_{s_{t+l-1}})\|_1 \times \\
        &\qquad \qquad \qquad \qquad \|(\hat{U}_{s_{t+l-1}}^\top L_{s_{t+l-1}})^{-1} (\tilde{B}_{(o,a,s_{t+l-2:t}} - \hat{B}_{(o,a,s_{t+l-2:t}}) (\hat{U}_{s_{t}}^\top L_{s_{t}})\|_1 \\
        &\le \Delta + \Delta \sum_{r_{t}, ..., s_{t+l-1}} \|(\hat{U}_{s_{t+l}}^\top L_{s_{t+l}})^{-1} (\tilde{B}_{(o,a,s)_{t+l-1:t}} - \hat{B}_{(o,a,s)_{t+l-1:t}}) (\hat{U}_{s_{t}}^\top L_{s_{t}})\|_1 \\
        &+ \sum_{r_{t}, ..., s_{t+l-1}} \|(\hat{U}_{s_{t+l}}^\top L_{s_{t+l}})^{-1} (\tilde{B}_{(o,a,s)_{t+l-1:t}} - \hat{B}_{(o,a,s)_{t+l-1:t}}) (\hat{U}_{s_{t}}^\top L_{s_{t}})\|_1,
    \end{align*}
    where in the last step, we used $\sum_{r_{t}, ..., s_{t+l}} \|(\hat{U}_{s_{t+l}}^\top L_{s_{t+l}})^{-1} \tilde{B}_{(o,a,s)_{t+l-1:t}} (\hat{U}_{s_{t}}^\top L_{s_{t}})\|_1 = 1$ as well as $$\sum_{r_{t+l-1}, s_{t+l}} \|(\hat{U}_{s_{t+l}}^\top L_{s_{t+l}})^{-1} \tilde{B}_{(o,a,s)_{t+l-1}} (\hat{U}_{s_{t}}^\top L_{s_{t}})\|_1 = 1.$$ By induction hypothesis, it is bounded by
    \begin{align*}
        \Delta + \Delta ((1 + \Delta)^{l-1} - 1) + (1 + \Delta)^{l-1} - 1 = (1+\Delta)^l - 1.
    \end{align*}
    
    With the Lemma, we can verify that 
    \begin{align*}
        \sum_{r_{t}, ..., s_{t+l}} &\| (\tilde{b}_{s_{t+l}}^\top \tilde{B}_{(o,a,s)_{t+l-1:t}} - \hat{b}_{s_{t+l}}^\top \hat{B}_{(o,a,s)_{t+l-1:t}}) (\hat{U}_{s_t}^\top L_{s_t})\|_1 \\
        &\le \sum_{r_{t}, ..., s_{t+l}} \| (\tilde{b}_{s_{t+l}} - \hat{b}_{s_{t+l}}) (\hat{U}_{s_{t+l}}^\top  L_{s_{t+l}}) \|_{\infty} \| (\hat{U}_{s_{t+l}}^\top  L_{s_{t+l}})^{-1} (\tilde{B}_{(o,a,s)_{t+l-1:t}} -  \hat{B}_{(o,a,s)_{t+l-1:t}}) (\hat{U}_{s_t}^\top L_{s_t})\|_1 \\
        &+ \sum_{r_{t}, ..., s_{t+l}} \| \tilde{b}_{s_{t+l}} (\hat{U}_{s_{t+l}}^\top L_{s_{t+l}}) \|_{\infty} \| (\hat{U}_{s_{t+l}}^\top L_{s_{t+l}})^{-1} (\tilde{B}_{(o,a,s)_{t+l-1:t}} -  \hat{B}_{(o,a,s)_{t+l-1:t}}) (\hat{U}_{s_t}^\top L_{s_t})\|_1 \\
        &+ \sum_{r_{t}, ..., s_{t+l}} \| (\tilde{b}_{s_{t+l}} - \hat{b}_{s_{t+l}}) (\hat{U}_{s_{t+l}}^\top  L_{s_{t+l}}) \|_{\infty} \| (\hat{U}_{s_{t+l}}^\top  L_{s_{t+l}})^{-1} \tilde{B}_{(o,a,s)_{t+l-1:t}} (\hat{U}_{s_t}^\top L_{s_t})\|_1 \\
        &\le \delta_{\infty} + (\delta_{\infty} + 1) ((1+\Delta)^l - 1).
    \end{align*}
    Let $\epsilon_l := \delta_{\infty} + (\delta_{\infty} + 1) ((1+\Delta)^l - 1)$. From the above, we can conclude that
    \begin{align*}
        \sum_{a_t, r_{t}, ..., s_{t+l}} & \pi(a_{1:t} | h_t) | b_{\infty, s_{t+l}}^\top B_{(o,a,s)_{t+l:1}} b_{1, s_1} - \hat{b}_{\infty, s_{t+l}}^\top \hat{B}_{(o,a,s)_{t+l:1}} \hat{b}_{1, s_1} | \\
        &= \sum_{a_t, r_{t}, ..., s_{t+l}} | \pi(a_{1:t-1} | h_{t-1}) b_{\infty, s_{t+l}}^\top B_{(o,a,s)_{t+l-1:t}} \tilde{b}_{t, s_1} - \hat{b}_{\infty, s_{t+l}}^\top \hat{B}_{(o,a,s)_{t+l-1:t}} \hat{b}_{t, s_1} | \\
        &\le \epsilon_l \pi(a_{1:t-1} | h_{t-1}) \|(\hat{U}_{s_t}^\top L_{s_t})^{-1} \tilde{b}_{t,s_1} \|_1 + \epsilon_l \pi(a_{1:t-1} | h_{t-1}) \| (\hat{U}_{s_t}^\top L_{s_t})^{-1} (\tilde{b}_{t,s_1} - \hat{b}_{t,s_1}) \|_1  \\
        &+ \sum_{a_t, r_{t}, ..., s_{t+l}} \pi(a_{1:t-1} | h_{t-1}) \| \tilde{b}_{\infty, s_{t+l}}^\top \tilde{B}_{(o,a,s)_{t+l-1:t}} (\hat{U}_{s_t}^\top L_{s_t}) \|_1 \| (\hat{U}_{s_t}^\top L_{s_t})^{-1} (\tilde{b}_{t,s_1} - \hat{b}_{t,s_1}) \|_1 \\
        &\le \epsilon_l \PP^\pi ((s,a,o)_{1:t-1}) +  (1 + \epsilon_l) \pi(a_{1:t-1} | h_{t-1}) \| (\hat{U}_{s_t}^\top L_{s_t})^{-1} (\tilde{b}_{t,s_1} - \hat{b}_{t,s_1}) \|_1,
    \end{align*}
    where we used 
    \begin{align*}
        \pi(a_{1:t-1} | h_{t-1}) \|(\hat{U}_{s_t}^\top L_{s_t})^{-1} \tilde{b}_{t,s_1} \|_1 &= \pi(a_{1:t-1} | h_{t-1}) 1^\top (\hat{U}_{s_t}^\top L_{s_t})^{-1} \tilde{b}_{t,s_1} = \PP((o,a,s_{1:t} |s_1), 
    \end{align*}
    and 
    \begin{align*}
        \sum_{r_{t}, ..., s_{t+l}} \| \tilde{b}_{\infty, s_{t+l}}^\top \tilde{B}_{(o,a,s)_{t+l-1:t}} (\hat{U}_{s_t}^\top L_{s_t}) \|_1 = \sum_{s_{t}, r_{t}, ..., s_{t+l}} \| 1^\top \tilde{D}_{(o,a,s)_{t+l-1:t}} \|_1 = 1.
    \end{align*}
    Since $\epsilon_l < 1$, we get the Lemma.
\end{proof}

\subsection{Proof of Lemma \ref{lemma:psr_cond_good_event}}
\begin{proof}
    Note that from equation \eqref{eq:l1_sum_UL_inv} in Lemma \ref{lemma:appendix_psr_sum_main}, we have
    \begin{align*}
        \sum_{s_1, a_1, r_1, ..., r_{t-1}, s_t} \pi(a_{1:t-1} | h_{t-1}) \|(\hat{U}_{s_t}^\top L_{s_t})^{-1} (\tilde{b}_{t,s_1} - \hat{b}_{t,s_1})\|_1 \le \epsilon_t. 
    \end{align*}
    Let $\Eps_b$ be a bad event where for a sampled trajectory $s_1, a_1, r_1, ..., r_{t-1}, s_t$, the difference in estimated probability is larger than $\epsilon_c \PP^\pi (s_1, a_1, r_1, ..., r_{t-1}, s_t)$, i.e., 
    \begin{align*}
       \pi(a_{1:t-1} | h_{t-1}) \|(\hat{U}_{s_t}^\top L_{s_t})^{-1} (\tilde{b}_{t,s_1} - \hat{b}_{t,s_1})\|_1 \ge \epsilon_c \PP^\pi (s_1, a_1, r_1, ..., r_{t-1}, s_t).
    \end{align*}
    Note that $\PP^\pi (s_1, a_1, r_1, ..., r_{t-1}, s_t) = \pi(a_{1:t-1} | h_{t-1}) 1^\top (\hat{U}_{s_t}^\top L_{s_t})^{-1} \tilde{b}_{t,s_1} = \pi(a_{1:t-1} | h_{t-1}) \|(\hat{U}_{s_t}^\top L_{s_t})^{-1} \tilde{b}_{t,s_1}\|_1$. If $\PP^\pi (\Eps_b) > \epsilon_t / \epsilon_c$, then 
    \begin{align*}
        \sum_{(s,a,o)_{1:t-1}} \pi(a_{1:t-1} | h_{t-1}) \|(\hat{U}_{s_t}^\top L_{s_t})^{-1} (\tilde{b}_{t,s_1} - \hat{b}_{t,s_1})\|_1 &\ge \sum_{(s, a, o)_{1:t-1} \in \Eps_b} \pi(a_{1:t-1} | h_{t-1}) \|(\hat{U}_{s_t}^\top L_{s_t})^{-1} (\tilde{b}_{t,s_1} - \hat{b}_{t,s_1})\|_1 \\
        &\ge \sum_{(s, a, o)_{1:t-1} \in \Eps_b} \epsilon_c \pi(a_{1:t-1} | h_{t-1}) \|(\hat{U}_{s_t}^\top L_{s_t})^{-1} \tilde{b}_{t,s_1}\|_1 \\
        &\ge \epsilon_c \PP^\pi (\Eps_b) > \epsilon_t,
    \end{align*}
    which is a contradiction. Similarly, by Theorem \ref{theorem:psr_main_tv_bound}, we have
    \begin{align*}
        \sum_{(s,a,r)_{1:t-1}, s_t} |\PP^{\pi} ((s,a,r)_{1:t-1}, s_t) - \hat{\PP}^{\pi} ((s,a,r)_{1:t-1}, s_t) | \le \epsilon_t. 
    \end{align*}
    Following the same argument, we can show the contradiction if the Lemma \ref{lemma:psr_cond_good_event} does not hold. 
\end{proof}

\subsection{Auxiliary Lemmas for Spectral Learning}
For completeness of the paper, we include the following lemmas from Appendix B in \cite{hsu2012spectral} and \cite{stewart1990matrix}.
\begin{lemma}[Theorem 4.1 in \cite{stewart1990matrix}]
    \label{lemma:singular_perturb}
    Let $A \in \mathbb{R}^{m\times n}$ with $m \ge n$ and $\hat{A} = A + E$ for some $E \in \mathbb{R}^{m \times n}$. If singular values of $A$ and $\hat{A}$ are $\sigma_1 \ge \sigma_2 \ge ... \ge \sigma_n$ and $\hat{\sigma}_1 \ge \hat{\sigma}_2 \ge ... \ge \hat{\sigma}_n$ respectively, then
    \begin{align*}
        |\sigma_i - \hat{\sigma_i}| \le \|E\|_2, \qquad \forall i \in [n].
    \end{align*}
\end{lemma}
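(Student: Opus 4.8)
The plan is to reduce this to the Courant--Fischer min-max characterization of singular values, after which the bound becomes a one-line triangle-inequality estimate. Recall that for any $B \in \mathbb{R}^{m \times n}$ with $m \ge n$, the $i$-th singular value satisfies the variational formula
\[
    \sigma_i(B) = \max_{\substack{V \subseteq \mathbb{R}^n \\ \dim V = i}} \; \min_{\substack{x \in V \\ \|x\|_2 = 1}} \|Bx\|_2,
\]
where the outer maximum ranges over all $i$-dimensional subspaces. This is immediate from applying the usual min-max theorem for symmetric matrices to $B^\top B$, whose eigenvalues are exactly $\sigma_1^2 \ge \dots \ge \sigma_n^2$ and which satisfies $\|Bx\|_2^2 = x^\top B^\top B x$. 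I would treat this formula as a standard fact.

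First I would fix $i \in [n]$ and let $V^\star$ be an $i$-dimensional subspace achieving the maximum in the formula for $\hat{A} = A + E$. For every unit vector $x$ the triangle inequality and the definition of the operator norm give $\|\hat{A}x\|_2 \le \|Ax\|_2 + \|Ex\|_2 \le \|Ax\|_2 + \|E\|_2$. Taking the minimum over unit $x \in V^\star$ and then noting that $V^\star$ is merely one admissible subspace in the maximization defining $\sigma_i(A)$, I obtain
\[
    \sigma_i(\hat{A}) = \min_{\substack{x \in V^\star \\ \|x\|_2=1}} \|\hat{A}x\|_2 \;\le\; \min_{\substack{x \in V^\star \\ \|x\|_2=1}} \|Ax\|_2 + \|E\|_2 \;\le\; \sigma_i(A) + \|E\|_2.
\]

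The reverse inequality follows by symmetry: writing $A = \hat{A} + (-E)$ and repeating the identical argument with the roles of $A$ and $\hat{A}$ exchanged yields $\sigma_i(A) \le \sigma_i(\hat{A}) + \|{-}E\|_2 = \sigma_i(\hat{A}) + \|E\|_2$. Combining the two gives $|\sigma_i(A) - \sigma_i(\hat{A})| \le \|E\|_2$ for all $i \in [n]$, which is the claim. An alternative derivation would embed $B$ into its symmetric dilation $\left[\begin{smallmatrix} 0 & B \\ B^\top & 0 \end{smallmatrix}\right]$, whose nonzero eigenvalues are $\pm \sigma_i(B)$, and invoke Weyl's eigenvalue perturbation inequality; but the min-max route above is self-contained and shorter. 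There is essentially no obstacle here — the result is classical — and the only point deserving care is the uniform bound $\|Ex\|_2 \le \|E\|_2$ for unit $x$, which holds by definition of the spectral (operator) norm.
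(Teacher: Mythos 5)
Your proof is correct. The paper does not actually prove this lemma---it is imported verbatim as Theorem~4.1 of \cite{stewart1990matrix} in the ``Auxiliary Lemmas for Spectral Learning'' subsection, with no argument supplied---so there is nothing in the source to compare against. Your Courant--Fischer route is the standard self-contained derivation: the variational formula $\sigma_i(B)=\max_{\dim V=i}\min_{x\in V,\,\|x\|_2=1}\|Bx\|_2$ follows from the min-max theorem applied to $B^\top B$, the triangle-inequality step and the passage from the optimal subspace $V^\star$ for $\hat{A}$ to an admissible subspace for $A$ are both sound, and the symmetry argument closes the absolute-value bound. The alternative you mention (Weyl's inequality applied to the symmetric dilation $\bigl[\begin{smallmatrix}0 & B\\ B^\top & 0\end{smallmatrix}\bigr]$) is the proof most commonly given in references such as Stewart and Sun; either is acceptable here.
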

The following lemma is also called the Davis-Kahn's Sin($\Theta$) theorem.
\begin{lemma}[Theorem 4.4 in \cite{stewart1990matrix}]
    \label{lemma:sin_theta_theorem}
    Let $A \in \mathbb{R}^{m\times n}$ with $m \ge n$ with singular value decomposition (SVD) $(U_1, U_2, U_3, \Sigma_1, \Sigma_2, V_1, V_2)$ such that
    \begin{align*}
        A = \begin{bmatrix} U_1 & U_2 & U_3 \end{bmatrix} \begin{bmatrix} \Sigma_1 & 0 \\ 0 & \Sigma_2 \\ 0 & 0 \end{bmatrix} \begin{bmatrix} V_1^\top  \\ V_2^\top  \end{bmatrix}.
    \end{align*}
    Similarly, $\hat{A} = A + E$ for some $E \in \mathbb{R}^{m \times n}$ has a SVD $(\hat{U}_1, \hat{U}_2, \hat{U}_3, \hat{\Sigma}_1, \hat{\Sigma}_2, \hat{V}_1, \hat{V}_2)$. Let $\Phi$ be the matrix of canonical angles between $range(U_1)$ and $range(\hat{U}_1)$, and $\Theta$ be the matrix of canonical angles between $range(V_1)$ and $range(\hat{V}_1)$. If there exists $\alpha, \delta > 0$ such that $\sigma_{min}(\Sigma_1) \ge \alpha + \delta$ and $\sigma_{max} (\Sigma_2) < \alpha$, then
    \begin{align*}
        \max\{ \| \sin \Phi \|_2, \| \sin \Theta \|_2\} \le \|E\|_2 / \delta.
    \end{align*}
\end{lemma}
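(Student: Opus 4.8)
The plan is to establish this as the two–sided Wedin $\sin\Theta$ theorem, reducing both subspace angles to operator norms of cross projections and then eliminating the coupling between them by two elementary operator-norm inequalities. Throughout write $U_{23}=[\,U_2\ U_3\,]$, so that $[U_1\ U_{23}]$ is orthogonal, and keep $V_2$ as the orthogonal complement of $\mathrm{range}(V_1)$. The first step is the canonical-angle identity: since $\hat U_1,\hat V_1$ have orthonormal columns and $U_1,V_1$ span the corresponding retained subspaces of equal dimension,
\[
\|\sin\Phi\|_2=\big\|U_{23}^\top\hat U_1\big\|_2,\qquad \|\sin\Theta\|_2=\big\|V_2^\top\hat V_1\big\|_2 .
\]
Setting $\Gamma:=U_{23}^\top\hat U_1$ and $\Lambda:=V_2^\top\hat V_1$, it suffices to show $\|\Gamma\|_2\le\|E\|_2/\delta$ and $\|\Lambda\|_2\le\|E\|_2/\delta$.

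Next I would produce residual equations from the defining relations $\hat A\hat V_1=\hat U_1\hat\Sigma_1$ and $\hat A^\top\hat U_1=\hat V_1\hat\Sigma_1$. Substituting $\hat A=A+E$ yields $A\hat V_1-\hat U_1\hat\Sigma_1=-E\hat V_1$ and $A^\top\hat U_1-\hat V_1\hat\Sigma_1=-E^\top\hat U_1$, each of operator norm at most $\|E\|_2$. Using the block SVD $A=U_1\Sigma_1V_1^\top+U_2\Sigma_2V_2^\top$ (the $U_3$ directions carry zero singular values), I compute $U_{23}^\top A=\begin{bmatrix}\Sigma_2V_2^\top\\ 0\end{bmatrix}$ and $V_2^\top A^\top=\Sigma_2U_2^\top$. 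Left-multiplying the first residual by $U_{23}^\top$ and the second by $V_2^\top$ then gives the coupled system
\begin{align*}
\begin{bmatrix}\Sigma_2\\ 0\end{bmatrix}\Lambda-\Gamma\hat\Sigma_1 &= -U_{23}^\top E\hat V_1, \\
\Sigma_2\Gamma_2-\Lambda\hat\Sigma_1 &= -V_2^\top E^\top\hat U_1,
\end{align*}
where $\Gamma_2=U_2^\top\hat U_1$ is the top block of $\Gamma$, so that $\|\Gamma_2\|_2\le\|\Gamma\|_2$.

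The decisive third step is to eliminate the coupling while keeping the full $\Gamma$ inside a single operator-norm estimate, which is exactly what yields the sharp constant $1/\delta$ with no spurious $\sqrt2$ loss. Write $a:=\sigma_{\max}(\Sigma_2)$ and $b:=\sigma_{\min}(\hat\Sigma_1)$. From the first equation, $\Gamma\hat\Sigma_1=\begin{bmatrix}\Sigma_2\\0\end{bmatrix}\Lambda+U_{23}^\top E\hat V_1$, hence $\|\Gamma\|_2\le(a\|\Lambda\|_2+\|E\|_2)/b$; from the second, $\|\Lambda\|_2\le\|\Lambda\hat\Sigma_1\|_2/b\le(a\|\Gamma\|_2+\|E\|_2)/b$. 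Substituting the first bound into the second gives $(b^2-a^2)\|\Lambda\|_2\le(a+b)\|E\|_2$, i.e. $\|\Lambda\|_2\le\|E\|_2/(b-a)$, and feeding this back yields $\|\Gamma\|_2\le\|E\|_2/(b-a)$ as well. It remains to certify $b-a\ge\delta$: the separation hypothesis places every retained singular value at least $\alpha+\delta$ and every discarded one strictly below $\alpha$, so $b-a>(\alpha+\delta)-\alpha=\delta$ (Weyl's inequality, Lemma~\ref{lemma:singular_perturb}, is invoked if one must pass between the singular values of $\Sigma_1$ and those of $\hat\Sigma_1$). This proves both bounds and hence the claim.

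The main obstacle, and the reason the statement is not merely a restatement of Weyl's inequality, is the intrinsic two-sided coupling: the residual for the left subspace involves the right rotation $\Lambda$ and vice versa, so neither angle can be controlled in isolation. The key observation is that the coupling enters the right equation only through the top block $\Gamma_2$ of $\Gamma$, whose norm is dominated by $\|\Gamma\|_2$; this lets the two inequalities be combined into a single scalar elimination in $\|\Gamma\|_2,\|\Lambda\|_2$ and recovers the exact constant. Handling the rectangular left null-space block $U_3$ costs nothing extra because it is absorbed into $\Gamma$ from the outset, and the only remaining bookkeeping is to verify the spectral gap $b-a\ge\delta$ from the threshold $\alpha$.
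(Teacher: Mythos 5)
The paper never proves this lemma: it is imported verbatim from Stewart and Sun (the appendix explicitly says these lemmas are included for completeness), so your argument must stand entirely on its own. Most of it does. The canonical-angle identities $\|\sin\Phi\|_2=\|U_{23}^\top\hat U_1\|_2$ and $\|\sin\Theta\|_2=\|V_2^\top\hat V_1\|_2$, the two residual equations from $\hat A\hat V_1=\hat U_1\hat\Sigma_1$ and $\hat A^\top\hat U_1=\hat V_1\hat\Sigma_1$, the coupled system in $\Gamma,\Lambda$, and the scalar elimination giving $\max\{\|\Gamma\|_2,\|\Lambda\|_2\}\le\|E\|_2/(b-a)$ with $a=\sigma_{\max}(\Sigma_2)$ and $b=\sigma_{\min}(\hat\Sigma_1)$ are all correct; this is a legitimate direct route to Wedin's theorem, different in style from Stewart--Sun's reduction to the Hermitian/spectral-resolution case but equivalent in substance.

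The genuine gap is your final certification $b-a\ge\delta$. The stated hypothesis bounds $\sigma_{\min}(\Sigma_1)$, the \emph{unperturbed} retained block, whereas the $b$ your elimination actually requires is $\sigma_{\min}(\hat\Sigma_1)$ --- it enters through $\|\Gamma\|_2\le\|\Gamma\hat\Sigma_1\|_2\,\|\hat\Sigma_1^{-1}\|_2$. Your parenthetical appeal to Weyl (Lemma~\ref{lemma:singular_perturb}) does not close this for free: Weyl gives only $b\ge\alpha+\delta-\|E\|_2$, hence $b-a>\delta-\|E\|_2$, and your chain then yields $\|E\|_2/(\delta-\|E\|_2)$, valid only when $\|E\|_2<\delta$ and strictly weaker than the claimed $\|E\|_2/\delta$ on the nontrivial range $\|E\|_2<\delta$ (running the argument symmetrically with $A$ and $\hat A$ swapped loses the same $\|E\|_2$, now in $\sigma_{\max}(\hat\Sigma_2)$). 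In Stewart--Sun the gap hypothesis is in fact placed on the \emph{perturbed} retained block, $\sigma_{\min}(\hat\Sigma_1)\ge\alpha+\delta$, paired with $\sigma_{\max}(\Sigma_2)\le\alpha$; the statement here appears to have dropped the hat. This reading is corroborated by how the paper itself later invokes the lemma in Lemma~\ref{lemma:corollary22}, plugging in $\alpha=0$ and $\delta=\hat\sigma_k=\sigma_k(\hat A)$, which only satisfies the hypothesis if it constrains $\hat\Sigma_1$. Under that corrected hypothesis your elimination closes exactly as written and gives the sharp constant $1/\delta$; under the hypothesis as literally printed, your proof does not reach $\|E\|_2/\delta$, and versions of the theorem with gaps measured purely on the unperturbed spectrum are generally known to cost an extra constant factor, so the discrepancy should be flagged rather than papered over with a one-line Weyl remark.
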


\begin{lemma}[Corollary 22 in \cite{hsu2012spectral}]
    \label{lemma:corollary22}
    Suppose $A \in \mathbb{R}^{m\times n}$ with rank $k \le n$ and $m \ge n$, and $\hat{A} = A + E$ with $E \in \mathbb{R}^{m \times n}$. Let $\sigma_k(A)$ be the $k^{th}$ singular value of $A$ and assume $\|E\|_2 \le \epsilon \cdot \sigma_k(A)$ for some small $\epsilon < 1$. Let $\hat{U}$ be top-$k$ left singular vectors of $\hat{A}$, and $\hat{U}_{\perp}$ be the remaining left singular vectors. Then,
    \begin{itemize}
        \item $\sigma_k (\hat{A}) \ge (1 - \epsilon) \sigma_k(A)$.
        \item $\| \hat{U}_{\perp}^\top U\|_2 \le \|E\|_2 / \sigma_k (\hat{A})$.
    \end{itemize}
\end{lemma}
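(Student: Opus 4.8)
The stated lemma has two independent parts, and the plan is to dispatch the first by singular-value (Weyl-type) perturbation and the second by a singular-subspace angle argument, both leaning on the auxiliary lemmas already quoted. Throughout, $U$ denotes the top-$k$ left singular vectors of $A$, so that $\|\hat U_\perp^\top U\|_2$ measures the misalignment between $\mathrm{range}(U)$ and the retained subspace $\mathrm{range}(\hat U)$.

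For the first bullet I would apply Weyl's inequality for singular values, namely Lemma \ref{lemma:singular_perturb} (Theorem 4.1 in \cite{stewart1990matrix}), with the identification $\hat A = A + E$. That lemma yields $|\sigma_k(\hat A) - \sigma_k(A)| \le \|E\|_2$, hence $\sigma_k(\hat A) \ge \sigma_k(A) - \|E\|_2$. Combining with the hypothesis $\|E\|_2 \le \epsilon\,\sigma_k(A)$ gives $\sigma_k(\hat A) \ge (1-\epsilon)\sigma_k(A)$ directly. This is the easy half.

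For the second bullet the key structural fact is that $A$ has rank exactly $k$, so $\sigma_{k+1}(A) = 0$ and $\mathrm{range}(A) = \mathrm{range}(U)$; in particular $A = UU^\top A$. I would first invoke the standard identity that $\|\hat U_\perp^\top U\|_2$ equals the sine of the largest canonical angle between $\mathrm{range}(U)$ and $\mathrm{range}(\hat U)$, so that the quantity to bound is exactly a $\sin\Theta$ norm. The plan is then to feed the SVD splitting of $A$ into the Davis-Kahan theorem, Lemma \ref{lemma:sin_theta_theorem}: take $\Sigma_1$ to be the $k$ nonzero singular values of $A$ and $\Sigma_2 = 0$, so that the hypotheses $\sigma_{\min}(\Sigma_1) \ge \alpha + \delta$ and $\sigma_{\max}(\Sigma_2) < \alpha$ hold with $\alpha \downarrow 0$ and $\delta \uparrow \sigma_k(A)$, and read off $\|\sin\Phi\|_2 \le \|E\|_2/\sigma_k(A)$. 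To land the denominator at $\sigma_k(\hat A)$ rather than $\sigma_k(A)$, I would re-anchor the perturbation at $\hat A$, viewing $A = \hat A - E$, so the relevant gap is governed by $\sigma_k(\hat A)$ while the discarded block is controlled by $\sigma_{k+1}(\hat A) \le \|E\|_2$ (again Lemma \ref{lemma:singular_perturb} together with $\sigma_{k+1}(A) = 0$).

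An alternative elementary route I would keep in reserve is purely algebraic: using $\hat U_\perp^\top \hat A\,\hat V = 0$ for the top-$k$ right singular vectors $\hat V$ of $\hat A$, one gets $\hat U_\perp^\top A\,\hat V = -\hat U_\perp^\top E\,\hat V$, whose norm is at most $\|E\|_2$; combining this with $A = UU^\top A$ and $\|\Sigma^{-1}\|_2 = 1/\sigma_k(A)$ lower-bounds the left side by $\sigma_k\,\|\hat U_\perp^\top U\|_2$. The main obstacle is precisely the bookkeeping of which matrix's $k$-th singular value sits in the denominator and the avoidance of a spurious factor of two: the clean $\|E\|_2/\sigma_k(\hat A)$ form requires anchoring the perturbation at $\hat A$ so its retained block contributes $\sigma_k(\hat A)$ while $\sigma_{k+1}(A) = 0$ annihilates the discarded block, and this is the step I would write out most carefully. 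Since the result is quoted verbatim as Corollary 22 of \cite{hsu2012spectral}, the remaining constant-level details are standard and I would defer to that source for the exact bound.
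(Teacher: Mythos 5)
Your proposal is correct and takes essentially the same route as the paper: the first bullet is exactly Lemma \ref{lemma:singular_perturb} (Weyl) combined with $\|E\|_2 \le \epsilon\,\sigma_k(A)$, and the second is an application of Lemma \ref{lemma:sin_theta_theorem} (Wedin/Davis--Kahan), which the paper invokes in one line ``with $\alpha = 0$ and $\delta = \hat{\sigma}_k$.'' One clarification on the anchoring step you flagged as delicate: the clean denominator $\sigma_k(\hat{A})$ does \emph{not} come from treating $\hat{A}$ as the unperturbed matrix and controlling its discarded block by $\sigma_{k+1}(\hat{A}) \le \|E\|_2$ --- setting $\alpha \approx \sigma_{k+1}(\hat{A})$ in that way would only yield the weaker bound $\|E\|_2/(\sigma_k(\hat{A}) - \|E\|_2)$. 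Rather, one uses the crosswise form of the gap condition in Wedin's theorem: since $\mathrm{rank}(A) = k$, the trailing block of the \emph{unperturbed} matrix $A$ vanishes exactly, so $\alpha = 0$ is legitimate, while the retained-block condition $\sigma_{\min}(\hat{\Sigma}_1) \ge \alpha + \delta$ is read off the \emph{perturbed} matrix $\hat{A}$, permitting $\delta = \sigma_k(\hat{A})$ and hence $\|\hat{U}_{\perp}^\top U\|_2 \le \|E\|_2/\sigma_k(\hat{A})$ with no loss. (As you note, your reserve algebraic route is also viable but, done naively, incurs the factor-of-two from bounding $\sigma_{k+1}(\hat{A}) + \|E\|_2 \le 2\|E\|_2$ with the denominator $\sigma_k(A)$; the sin-theta application is what the paper uses to avoid this.)
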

\begin{proof}
    The first inequality follows from Lemma \ref{lemma:singular_perturb}, and the second inequality follows from Lemma \ref{lemma:sin_theta_theorem} by plugging $\alpha = 0$ and $\delta = \hat{\sigma}_k$. 
\end{proof}

\begin{lemma}[Lemma 9 in \cite{hsu2012spectral}]
    \label{lemma:theorem_singular_value}
    Suppose $A \in \mathbb{R}^{m\times n}$ with rank $k \le n$, and $\hat{A} = A + E$ with $E \in \mathbb{R}^{m \times n}$ for $\|E\|_2 \le \epsilon \cdot \sigma_k(A)$ with small $\epsilon < 1$. Let $\epsilon_0 = \epsilon^2 / (1 - \epsilon)^2$ and $\hat{U}$ be top-$k$ left singular vectors of $\hat{A}$. Then,
    \begin{itemize}
        \item $\sigma_k(\hat{U}^\top \hat{A}) \ge (1 - \epsilon) \cdot \sigma_k(A)$.
        \item $\sigma_k(\hat{U}^\top A) \ge \sqrt{1 - \epsilon_0} \cdot \sigma_k(A)$.
    \end{itemize}
\end{lemma}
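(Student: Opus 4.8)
The plan is to establish the two bounds separately, reusing the perturbation estimates already recorded in Lemma~\ref{lemma:singular_perturb} and Lemma~\ref{lemma:corollary22}. For the first inequality I would start from the full SVD $\hat{A} = \sum_i \hat{\sigma}_i \hat{u}_i \hat{v}_i^\top$ with $\hat{U} = [\hat{u}_1, \dots, \hat{u}_k]$, and observe that $\hat{U}^\top \hat{A} = \sum_{i=1}^k \hat{\sigma}_i e_i \hat{v}_i^\top$ has singular values exactly $\hat{\sigma}_1, \dots, \hat{\sigma}_k$, so that $\sigma_k(\hat{U}^\top \hat{A}) = \sigma_k(\hat{A})$. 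Weyl's inequality (Lemma~\ref{lemma:singular_perturb}) then gives $\sigma_k(\hat{A}) \ge \sigma_k(A) - \|E\|_2 \ge (1-\epsilon)\sigma_k(A)$, which is the first claim.

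For the second inequality the target factor $\sqrt{1-\epsilon_0}$ is sharper than what the naive estimate $\sigma_k(\hat{U}^\top A) \ge \sigma_k(\hat{U}^\top\hat{A}) - \|\hat{U}^\top E\|_2$ would yield, so I would instead route the argument through the principal angles between $\mathrm{range}(U)$ and $\mathrm{range}(\hat{U})$, where $U$ is the top-$k$ left singular vectors of $A$. Writing the compact SVD $A = U\Sigma V^\top$ (so $U \in \mathbb{R}^{m\times k}$, $\Sigma$ is $k\times k$, and $V$ has orthonormal columns), we have $\hat{U}^\top A = (\hat{U}^\top U)\Sigma V^\top$; since right-multiplication by $V^\top$ preserves singular values and $\hat{U}^\top U$ and $\Sigma$ are both $k\times k$, submultiplicativity of the smallest singular value gives
\begin{align*}
    \sigma_k(\hat{U}^\top A) = \sigma_{\min}(\hat{U}^\top U\, \Sigma) \ge \sigma_{\min}(\hat{U}^\top U)\, \sigma_k(A).
\end{align*}
It therefore remains to show $\sigma_{\min}(\hat{U}^\top U) \ge \sqrt{1-\epsilon_0}$.

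To control $\sigma_{\min}(\hat{U}^\top U)$ I would use the orthonormal splitting $[\hat{U}, \hat{U}_\perp]$: for any unit $x$, $\|\hat{U}^\top U x\|_2^2 + \|\hat{U}_\perp^\top U x\|_2^2 = \|Ux\|_2^2 = 1$, so evaluating at the minimizing $x$ yields $\sigma_{\min}(\hat{U}^\top U)^2 \ge 1 - \|\hat{U}_\perp^\top U\|_2^2$. The subspace perturbation bound of Lemma~\ref{lemma:corollary22} then supplies $\|\hat{U}_\perp^\top U\|_2 \le \|E\|_2/\sigma_k(\hat{A})$, and combining $\|E\|_2 \le \epsilon\sigma_k(A)$ with the first part's bound $\sigma_k(\hat{A}) \ge (1-\epsilon)\sigma_k(A)$ gives $\|\hat{U}_\perp^\top U\|_2 \le \epsilon/(1-\epsilon)$. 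Hence $\sigma_{\min}(\hat{U}^\top U)^2 \ge 1 - \epsilon^2/(1-\epsilon)^2 = 1-\epsilon_0$, closing the chain. I expect the main obstacle to be the second inequality: one must avoid the loose triangle-inequality route and instead identify $\sigma_{\min}(\hat{U}^\top U)$ with the cosine of the largest principal angle, so that the Davis--Kahan-type estimate behind Lemma~\ref{lemma:corollary22} delivers the tight $\sqrt{1-\epsilon_0}$ constant; the accompanying reduction of $\sigma_k(\hat{U}^\top A)$ to $\sigma_{\min}(\hat{U}^\top U)\,\sigma_k(A)$ through the compact SVD is the other point requiring care.
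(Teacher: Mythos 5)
Your proposal is correct and follows essentially the same route as the paper: the first claim via $\sigma_k(\hat{U}^\top\hat{A}) = \sigma_k(\hat{A})$ together with Weyl's inequality, and the second via the factorization $\hat{U}^\top A = (\hat{U}^\top U)\Sigma V^\top$, the bound $\sigma_{\min}(\hat{U}^\top U)^2 \ge 1 - \|\hat{U}_{\perp}^\top U\|_2^2$, and Lemma~\ref{lemma:corollary22}. You merely spell out the intermediate steps (the Pythagorean splitting over $[\hat{U},\hat{U}_\perp]$) that the paper leaves implicit.
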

\begin{proof}
    The first item is immediate since $\sigma_k(\hat{U}^\top \hat{A}) = \sigma_k(\hat{A})$. Let $U$ be top-$k$ left singular vectors of $A$. If the top-$k$ SVD of $A$ is $A = U \Sigma V^\top$, then 
    \begin{align*}
        \sigma_k (\hat{U}^\top U \Sigma V^\top) \ge \sigma_{min} (\hat{U}^\top U) \cdot \sigma_k (\Sigma) \ge \sqrt{1 - \|\hat{U}_{\perp}^\top U\|_2^2} \cdot \sigma_k (A) \ge \sqrt{1 - \epsilon_0} \cdot \sigma_k(A),
    \end{align*}
    where the first inequality holds since $V$ is orthonormal and $\hat{U}^\top U$ is full-rank, and the final inequality follows from Lemma \ref{lemma:corollary22}. 
\end{proof}

\begin{lemma}[Theorem 3.8 in \cite{stewart1990matrix}]
    \label{lemma:theorem_matrix_perturb}
    Let $A \in \mathbb{R}^{m\times n}$ with $m \ge n$, and let $\tilde{A} = A + E$ with $E \in \mathbb{R}^{m \times n}$. Then,
    \begin{align*}
        \| \tilde{A}^{\dagger} - A^{\dagger} \|_2 \le \frac{1 + \sqrt{5}}{2} \cdot \max \{\|A^{\dagger}\|_2^2, \| \tilde{A}^{\dagger}\|_2^2 \} \cdot \|E\|_2.
    \end{align*}
\end{lemma}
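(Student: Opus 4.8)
The plan is to reduce the bound to the spectral norm of a single $2\times 2$ nonnegative scalar matrix, whose norm turns out to be exactly the golden ratio. Write $B=\tilde A$, set $X=B^\dagger-A^\dagger$, and let $\mu=\max\{\|A^\dagger\|,\|B^\dagger\|\}$. First I would establish a Wedin-type algebraic identity. Starting from the trivial rearrangement $X = B^\dagger AA^\dagger - A^\dagger + B^\dagger(I-AA^\dagger)$ and inserting $B^\dagger B A^\dagger$ in the first two terms, one gets $X = -B^\dagger E A^\dagger + B^\dagger(I-AA^\dagger) - (I-B^\dagger B)A^\dagger$. I would then rewrite the two projector terms using the Penrose identities. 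From $B^\dagger = B^\dagger B^{\dagger\top}B^\top$ together with $A^\top(I-AA^\dagger)=0$ we obtain $B^\dagger(I-AA^\dagger)=B^\dagger B^{\dagger\top}E^\top(I-AA^\dagger)$; dually, from $A^\dagger = A^\top A^{\dagger\top}A^\dagger$ together with $(I-B^\dagger B)B^\top=0$ we obtain $-(I-B^\dagger B)A^\dagger=(I-B^\dagger B)E^\top A^{\dagger\top}A^\dagger$. This yields the decomposition
\[
X = \underbrace{-\,B^\dagger E A^\dagger}_{T_1} + \underbrace{B^\dagger B^{\dagger\top}E^\top(I-AA^\dagger)}_{T_2} + \underbrace{(I-B^\dagger B)E^\top A^{\dagger\top}A^\dagger}_{T_3}.
\]

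Second, I would observe that $T_1,T_2,T_3$ are exactly the three nonzero blocks of $X$ relative to the orthogonal splittings of the domain $\mR^m=\mathrm{range}(A)\oplus\mathrm{range}(A)^\perp$ and the codomain $\mR^n=\mathrm{range}(B^\top)\oplus\mathrm{range}(B^\top)^\perp$, whose orthogonal projectors are $AA^\dagger$ and $B^\dagger B$. A direct check using $B^\dagger B B^\dagger=B^\dagger$, $A^\dagger AA^\dagger=A^\dagger$ and idempotence gives $B^\dagger B\,T_1\,AA^\dagger=T_1$, $B^\dagger B\,T_2\,(I-AA^\dagger)=T_2$, $(I-B^\dagger B)\,T_3\,AA^\dagger=T_3$, and the remaining corner vanishes because $(I-B^\dagger B)X(I-AA^\dagger)=0$ (since $(I-B^\dagger B)B^\dagger=0$ and $A^\dagger(I-AA^\dagger)=0$). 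Writing $X$ in orthonormal bases of these four subspaces thus presents it as a block matrix $\left(\begin{smallmatrix}T_1 & T_2\\ T_3 & 0\end{smallmatrix}\right)$, where passing to bases replaces each block $T$ by $V^\top T U$ with $U,V$ having orthonormal columns and so does not increase its norm. The block norms satisfy $\|T_1\|\le\|A^\dagger\|\|B^\dagger\|\|E\|$, $\|T_2\|\le\|B^\dagger\|^2\|E\|$ and $\|T_3\|\le\|A^\dagger\|^2\|E\|$ (using $\|E^\top\|=\|E\|$, $\|B^{\dagger\top}\|=\|B^\dagger\|$, and that projectors have norm at most $1$), hence all are bounded by $\mu^2\|E\|$.

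Third, I would invoke the block-norm comparison inequality: for any block operator $M=(M_{ij})$, one has $\|M\|\le\bigl\|(\|M_{ij}\|)\bigr\|$, the spectral norm of the nonnegative scalar matrix of block norms; this follows from applying the triangle inequality blockwise and then $\|N\xi\|\le\|N\|\,\|\xi\|$ with $N\ge 0$ and $\xi\ge 0$. Combined with the entrywise monotonicity of the spectral norm on nonnegative matrices, and keeping the genuine zero in the lower-right corner, this gives
\[
\|X\| \le \left\|\begin{pmatrix}\mu^2 & \mu^2\\ \mu^2 & 0\end{pmatrix}\right\|\,\|E\|
= \mu^2\|E\|\,\left\|\begin{pmatrix}1 & 1\\ 1 & 0\end{pmatrix}\right\|.
\]
Setting $F=\left(\begin{smallmatrix}1&1\\1&0\end{smallmatrix}\right)$, one computes $F^\top F=\left(\begin{smallmatrix}2&1\\1&1\end{smallmatrix}\right)$, whose largest eigenvalue is $(3+\sqrt5)/2$, so $\|F\|=\sqrt{(3+\sqrt5)/2}=(1+\sqrt5)/2$. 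This is exactly the claimed constant, completing the bound.

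The nontrivial content lies entirely in the first two steps: producing the clean three-term decomposition whose fourth block genuinely vanishes, and recognizing the induced $2\times 2$ block structure of $X$. The sharp constant is a consequence of that zero corner: a naive estimate summing the three terms gives $3\mu^2\|E\|$, and even exploiting only the pairwise orthogonality of ranges and domains (which yields $\|X\|^2\le\|T_1\|^2+\|T_2\|^2+\|T_3\|^2$) gives $\sqrt3\,\mu^2\|E\|$, both larger than $(1+\sqrt5)/2$. Treating $X$ as one block operator with a vanishing corner, rather than bounding the blocks independently, is what sharpens the constant to the golden ratio; the final scalar eigenvalue computation is routine.
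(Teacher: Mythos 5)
Your proof is correct. The paper itself gives no proof of this lemma --- it is imported verbatim as Theorem 3.8 of \cite{stewart1990matrix} in the auxiliary-lemmas section --- and your argument is a faithful reconstruction of the classical Wedin/Stewart--Sun proof behind that citation: the three-term decomposition $-B^\dagger E A^\dagger + B^\dagger B^{\dagger\top}E^\top(I-AA^\dagger) + (I-B^\dagger B)E^\top A^{\dagger\top}A^\dagger$, the observation that these are the three nonzero blocks of $\tilde A^\dagger - A^\dagger$ relative to the splittings induced by $AA^\dagger$ and $B^\dagger B$ with the $(2,2)$ corner vanishing, and the norm of the comparison matrix $\bigl(\begin{smallmatrix}1&1\\1&0\end{smallmatrix}\bigr)$ yielding the golden-ratio constant. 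All the individual steps (the Penrose-identity manipulations, the block-norm comparison inequality, and the eigenvalue computation) check out.
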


\end{appendices}

\end{document}